\documentclass[11pt]{article}
\usepackage{url,ifthen}
\usepackage{srcltx}
\usepackage{multirow}
\usepackage{boxedminipage}
\usepackage[margin=1.1in]{geometry}
\usepackage{nicefrac}
\usepackage{xspace}
\usepackage{graphicx}
\usepackage{srcltx}
\usepackage{verbatim}
\usepackage[usenames]{color}
\usepackage{fullpage}
\usepackage{xspace}
\definecolor{DarkGreen}{rgb}{0.1,0.5,0.1}
\definecolor{DarkRed}{rgb}{0.5,0.1,0.1}
\definecolor{DarkBlue}{rgb}{0.1,0.1,0.5}
\usepackage[small]{caption}
\usepackage{float}
\usepackage{balance}
\usepackage{amsmath}
\usepackage{amsfonts}
\usepackage{amssymb}
\usepackage{amsthm}
\usepackage{bbm}
\usepackage{dsfont}
\usepackage{xcolor}
\usepackage[normalem]{ulem}

% layout
\setlength{\textwidth}{6.6 in}
\setlength{\textheight}{9in}
\setlength{\oddsidemargin}{0in}
\setlength{\topmargin}{0.35in}
\addtolength{\voffset}{-.5in}

% environments
\newtheorem{theorem}{Theorem}[section]
\newtheorem*{namedtheorem}{\theoremname}
\newcommand{\theoremname}{testing}

\newtheorem{lemma}[theorem]{Lemma}
\newtheorem{lem}[theorem]{Lemma}

\newtheorem{observation}{Observation}
\newtheorem{corollary}[theorem]{Corollary}

\newtheorem*{question*}{Question}

\theoremstyle{definition}
\newtheorem{example}{Example}
\newtheorem{definition}[theorem]{Definition}
\newtheorem{defn}[theorem]{Definition}
\newtheorem{remark}[theorem]{Remark}

\theoremstyle{plain}
\newtheorem{Alg}{Algorithm}

\definecolor{DarkGreen}{rgb}{0.1,0.5,0.1}
\definecolor{DarkRed}{rgb}{0.5,0.1,0.1}
\definecolor{DarkBlue}{rgb}{0.1,0.1,0.5}

\usepackage[pdftex]{hyperref}
\hypersetup{
    unicode=false,          % non-Latin characters in AcrobatÂ¿s bookmarks
    pdftoolbar=true,        % show Acrobat toolbar?
    pdfmenubar=true,        % show Acrobat menu?
    pdffitwindow=false,      % page fit to window when opened
    pdfnewwindow=true,      % links in new window
    colorlinks=true,       % false: boxed links; true: colored links
    linkcolor=DarkGreen,          % color of internal links
    citecolor=DarkGreen,        % color of links to bibliography
    filecolor=DarkGreen,      % color of file links
    urlcolor=DarkBlue,          % color of external links
    %
    % paper specific, please udpate
    %
    %pdftitle={},
    %pdfauthor={},
    %pdfsubject={},
    %pdfkeywords={keywords}, % list of keywords
}

\newcommand{\ignore}[1]{}

%%%%%%%%%%%% Generic math macros %%%%%%%%%%%%

% probability and other operators
\renewcommand{\Pr}{\mathop{\bf Pr\/}}                    % should we change these to \mathbb for consistency of single-letter functionals
\newcommand{\E}{\mathop{\bf E\/}}

            % any reason not to just shove this directly into \Ex?

\newcommand{\Cov}{\mathop{\bf Cov\/}}

% mathrm terms

% \DeclareMathOperator terms

% number systems

% short forms

% calligraphic letters

% bold
\newcommand{\bone}{{\boldsymbol{1}}}

% more symbols

% left-right wrappers

\newcommand{\dtv}[2]{\left\lVert #1 - #2 \right\rVert_{\mathrm{TV}}}

\iffalse

\fi

\makeatletter
\floatstyle{ruled}
\newfloat{fragment}{H}{lop}
\floatname{fragment}{Game}
\renewcommand{\floatc@ruled}[2]{\vspace{2pt}{\@fs@cfont \#1.\:} \#2 \par
 \vspace{1pt}}
\makeatother

\newcommand{\fnote}[1]{\textcolor{blue}{\small {\textbf{(Fred: }#1\textbf{) }}}}

\title{Learning Restricted Boltzmann Machines via Influence Maximization}
\author{Guy Bresler\thanks{
Massachusetts Institute of Technology. Electrical Engineering and Computer Science Department. Email: {\tt guy@mit.edu}. This work was supported in part by ONR N00014-17-1-2147 and NSF CCF-1565516} \and Frederic Koehler\thanks{
Massachusetts Institute of Technology. Department of Mathematics. Email: {\tt fkoehler@mit.edu}.} \and Ankur Moitra\thanks{
Massachusetts Institute of Technology. Department of Mathematics, CSAIL and IDSS. Email: {\tt moitra@mit.edu}. This work was supported in part by NSF CAREER Award CCF-1453261, NSF Large CCF-1565235, a David and Lucile Packard Fellowship, an Alfred P. Sloan Fellowship and an ONR Young Investigator Award} %\and Elchanan Mossel\thanks{
%Massachusetts Institute of Technology. Department of Mathematics and IDSS. Email: {\tt elmos@mit.edu}. This work was supported in part by ONR N00014-
%16-1-2227, NSF CCF-1665252 and DMS-1737944}
}
\begin{document} 
\maketitle
\begin{abstract}
Graphical models are a rich language for describing high-dimensional distributions in terms of their dependence structure. While there are algorithms with provable guarantees for learning undirected graphical models in a variety of settings, there has been much less progress in the important scenario when there are latent variables. Here we study Restricted Boltzmann Machines (or RBMs), which are a popular model with wide-ranging applications in dimensionality reduction, collaborative filtering, topic modeling, feature extraction and deep learning.

The main message of our paper is a strong dichotomy in the feasibility of learning RBMs, depending on the nature of the interactions between variables: ferromagnetic models can be learned efficiently, while general models cannot. In particular,
we give a simple greedy algorithm based on influence maximization to learn ferromagnetic RBMs with bounded degree. In fact, we learn a description of the distribution on the observed variables as a Markov Random Field. Our analysis is based on tools from mathematical physics that were developed to show the concavity of magnetization. Our algorithm extends straighforwardly to general ferromagnetic Ising %graphical 
models with latent variables. %We also show how Lee-Yang properties that hold for ferromagnetic Ising models are inherited by the marginal on the observed variables in an RBM, yielding an approximation algorithm for the partition function. 

Conversely, we show that even for a contant number of latent variables with constant degree, without ferromagneticity the problem is as hard as sparse parity with noise. This hardness result is based on a sharp and surprising characterization of the representational power of bounded degree RBMs: the distribution on their observed variables can simulate any bounded order MRF. This result is of independent interest since RBMs are the building blocks of deep belief networks. 
\end{abstract}

\newpage

\section{Introduction}

\subsection{Background}

Graphical models are a rich language for describing high-dimensional distributions in terms of their dependence structure. They allow for sophisticated forms of causal reasoning and inference. Over the years, many provable algorithms for learning undirected graphical models from data have been developed, including algorithms that work on trees \cite{ChowL}, graphs of bounded treewidth \cite{KargerS}, graphs of bounded degree \cite{Bresler,Vuffray,KlivansM,Hamilton}, and under various conditions on their parameters that preclude long-range correlations \cite{RavikumarWL, BreslerMosselSly, Anandkumar}. %\gb{(Add refs to Csiszar-Talata 2006 'consistent estimation...' and Abbeel Koller Ng 2006? )}
In the special case of Ising models, one can learn the underlying graph in nearly quadratic time with nearly the information-theoretically optimal sample complexity~\cite{Vuffray,KlivansM}.

While all these results are for fully observed models, the presence of unobserved (or \emph{latent}) variables is of fundamental importance in a wide range of applications. Latent variable models can capture much more complex dependencies among the observed variables than fully observed models, because the variables can influence each other through unobserved mechanisms. In this way, such models allow scientific theories that explain data in a more parsimonious way to be learned and tested. They can also be used to perform dimensionality reduction \cite{RBMdimension} and feature extraction \cite{RBMfeature} and thus serve as a basis for a variety of other machine learning tasks. 

Despite their practical importance, the problem of learning graphical models with latent variables has seen much less progress. The only works we are aware of are the following: Chadrasekaran et al. \cite{ChandrasekaranPW} studied Gaussian graphical models with latent variables and sparsity and incoherence constraints. The marginal distribution on the observed variables is also a Gaussian graphical model, so it is straightforward to learn its distribution. However their focus was on discovering latent variables whose inclusion in the model ``explains away" many of the observed dependencies. Anandkumar and Valluvan \cite{AnandkumarV} were the first to give provable algorithms for learning discrete graphical models with latent variables, although they need rather strong conditions to do so. They require both that the graphical model is locally treelike and that it exhibits correlation decay.

In this paper we study  Restricted Boltzmann Machines (or RBMs), %which are
a widely-used class of graphical models with latent variables that were popularized by Geoffrey Hinton in the mid 2000s. In fact, our results will extend straightforwardly to general Ising models with latent variables. An RBM has $n$ observed (or visible) variables $X_1, X_2, \ldots, X_n$ and $m$ latent (or hidden) variables $Y_1, Y_2, \ldots, Y_m$ and is described by 
\begin{enumerate}

\item[(1)] an $n \times m$ interaction matrix $J$

\item[(2)] a length $n$ vector $h^{(1)}$ and a length $m$ vector $h^{(2)}$ of external fields

\end{enumerate}

\noindent Then for any $x \in \{\pm 1\}^n$ and $y \in \{\pm 1\}^m$, the probability that the model assigns to this configuration is given by:
$$\Pr(X = x, Y = y) = \frac{1}{Z}\exp\left(x^T J y + \sum_{i = 1}^n h^{(1)}_i x_i + \sum_{i = 1}^m h^{(2)}_j y_j\right) $$
where $Z$ is the partition function. It is often convenient to think about an RBM as a weighted bipartite graph whose nodes represent variables and whose weights are given by $J$. This family of models has found a number of applications including in collaborative filtering~\cite{RBMcollaborative}, topic modeling~\cite{RBMtopic} and in deep learning where they are layered on top of each other to form deep belief networks~\cite{Hinton}. As the number of layers grows, they can capture increasingly complex hierarchical dependencies among the observed variables. 
% Wasn't sure if this was still intended to be included.
%\gb{In applications, RBMs typically have low degree for the latent variables.}
%\fk{In applications, RBMs often have low degree latent variables, and in general sparse graphical models are more useful than dense ones because they encode more conditional independence information, and so are more interpretable -- graphical models on the complete graph can represent essentially any distribution. For this reason we focus on RBMs with bounded degree hidden nodes.}

We focus on the problem of learning RBMs from i.i.d. samples of the observed variables, with particular emphasis on the practically relevant case where the latent variables have low degree. 
What makes this challenging is that even though the variables in the RBM have only pairwise interactions, when the latent variables are marginalized out we can (and usually do) get higher-order interactions. Indeed, for general graphical models with latent variables and pairwise interactions, Bogdanov, Mossel and Vadhan \cite{BogdanovMV} 
proved learning is hard (assuming $NP \ne RP$) by showing how the distribution on observed variables can simulate the uniform distribution on satisfying assignments of any given circuit. We note that this construction requires a large number (at least one for each gate) of interconnected latent variables and that the hard instances are highly complex because they come from a series of circuit manipulations.
%proved that it is hard to learn graphical models with pairwise interactions when the number of latent variables is sufficiently large, by 
%showing that such models are rich enough to simulate the uniform distribution on satisfying assignments of any given circuit.
Beyond learning, Long and Servedio \cite{LongS} proved that for RBMs  a number of other related problems 
%on RBMs 
are hard, including approximating the partition function within an exponential factor and approximate inference and sampling.

The previous work leaves the following question unresolved: 
{\em Are there natural and well-motivated families of 
%RBMs, or more generally Ising models with latent variables, 
Ising models with latent variables that can be efficiently learned?} 
We will answer this question affirmatively in the case of \emph{ferromagnetic} RBMs and (more generally) ferromagnetic Ising models with latent variables, which are defined as follows: A ferromagnetic RBM is one in which the interaction matrix and the vectors of external fields are nonnegative. On the other hand, we give a negative result showing that without ferromagneticity, even in the highly optimistic case when there are only a constant number of latent variables with bounded degree the problem is as hard as sparse parity with noise. This establishes a dichotomy that is just not present in the fully-observed setting.

Historically, ferromagneticity is a natural and well-studied property that plays a key role in many classic results in statistical physics and theoretical computer science.  For example, the Lee-Yang theorem \cite{lee-yang} shows that the complex zeros of the partition function of a ferromagnetic Ising model all lie on the imaginary axis \---- this property does not hold  for general Ising models. Ferromagnetic Ising models are also one of the largest classes of graphical models for which there are efficient algorithms for sampling and inference, 
which follows from the seminal work of Jerrum and Sinclair \cite{JerrumSinclair:90}. This makes them an appealing class of graphical models to be able to learn. In contrast, without ferromagneticity it is known that sampling and inference are computationally hard when the Gibbs measure on the corresponding infinite $d$-regular tree becomes non-unique \cite{SlySun}.

\subsection{Our Results}\label{sec:our-results}

First we focus on learning ferromagnetic
Restricted Boltzmann Machines with bounded degree. 
%Ferromagnetic means that the interaction matrix and the vectors of external fields are nonnegative\footnote{By a simple sign flipping argument, we can also handle RBMs with all nonpositive edge weights (\emph{antiferromagnetic}) and/or models with all nonpositive external field.}.
The idea behind our algorithm is simple: the observed variables that exert the most influence on some variable $X_i$ ought to be $X_i$'s two-hop neighbors.  
%While this is intuitive, 
While this may seem intuitive, the most straightforward interpretation of this statement is false \---- the variable with the largest correlation with $X_i$ may actually be far away. In addition, even if we correct the statement (e.g. by stating instead that there should be a neighbor with large influence), such facts about graphical models are often subtle and challenging to prove. Ultimately, we make use of the famous Griffiths-Hurst-Sherman correlation inequality \cite{ghs}
%about the concavity of magnetization
to prove that the discrete influence function
\[ I_i(S) = \E \left [X_i | X_S = \{+1\}^{|S|}\right ]\]
is submodular (see Theorem~\ref{thm:submodular}). 
% Their
The GHS
inequality has found many applications in mathematical physics where it is an important ingredient in 
determining critical exponents at phase transitions. %rigorously locating phase transitions. % not the most accurate description.
By recognizing that the concavity of magnetization is analogous to the properties of the multilinear extension of a submodular function \cite{multilinear}, we are able to bring to bear tools from submodular maximization to learning graphical models with latent variables.  

More precisely, we show that any set $T$ that is sufficiently close to being a maximizer of $I_i$ must contain the two-hop neighbors of $X_i$. We can thus use the greedy algorithm for maximizing a monotone submodular function \cite{greedysubmodular} to reduce our problem of finding the two-hop neighbors of $X_i$ to a set of constant size, where the constant depends on the maximum degree and upper and lower bounds on the strength of non-zero interactions. It is information theoretically impossible to learn $J$, $h^{(1)}$ and $h^{(2)}$ uniquely, but we do something almost as good and learn a description of the distribution of the observed variables as a Markov Random Field (or MRF, see Definition~\ref{dfn:MRF}):

\begin{theorem}[Informal]
There is a nearly quadratic time algorithm with logarithmic sample complexity for learning the distribution of observed variables (expressed as a Markov Random Field) for ferromagnetic Restricted Boltzmann Machines of bounded degree and upper and lower bounded interaction strength. 
\end{theorem}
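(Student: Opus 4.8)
\section*{Proof proposal}

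The plan is to learn, for each observed variable $X_i$, its neighborhood (equivalently, a Markov blanket) in the Markov Random Field induced on the observed variables, and then to estimate the corresponding local conditional distributions. Marginalizing out the hidden variables gives
\[ \Pr(X = x) \;\propto\; \exp\!\Bigl( \langle h^{(1)}, x\rangle + \sum_{j=1}^m \log\cosh\bigl( (J^{T} x)_j + h^{(2)}_j \bigr) \Bigr), \]
and the term $\log\cosh((J^{T} x)_j + h^{(2)}_j)$ depends only on the neighbors $N(Y_j)$ of the hidden node $Y_j$. Hence the observed distribution is an MRF whose maximal cliques are contained in the sets $N(Y_j)$, and a Markov blanket of $X_i$ is its set of two-hop neighbors $N_i = \bigl(\bigcup_{j : i \in N(Y_j)} N(Y_j)\bigr) \setminus \{i\}$, which has size $O(1)$ by the degree bound $d$. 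Once we have a set $T \supseteq N_i$ of constant size, learning the MRF factor at $i$ is routine: estimate $\Pr[X_i = \cdot \mid X_T = \cdot]$ for all $2^{|T|} = O(1)$ configurations of $X_T$ (optionally pruning $T$ down to $N_i$ by deleting coordinates of negligible conditional influence).

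To find such a $T$ we greedily maximize the empirical influence function $\widehat I_i$, the empirical counterpart of $I_i(S) = \E[X_i \mid X_S = \{+1\}^{|S|}]$. In a ferromagnetic model $I_i$ is monotone: conditioning additional spins to $+1$ only raises $\E[X_i]$, since doing so keeps the model ferromagnetic with nonnegative external fields, and a Griffiths/FKG-type inequality then makes $X_i$ and any $X_j$ positively correlated under the conditioned measure. It is submodular by Theorem~\ref{thm:submodular} (the GHS inequality). So we run the greedy algorithm for monotone submodular maximization on $\widehat I_i$.

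The key structural fact is that every two-hop neighbor retains non-trivial marginal value at every stage: there is a constant $\kappa = \kappa(d,\alpha,\beta,\|h\|_\infty) > 0$ with $I_i(S \cup \{j\}) - I_i(S) \ge \kappa$ for every $S$ and every $j \in N_i \setminus S$. By submodularity the marginal gain is smallest when $S$ is largest, i.e. $S = [n]\setminus\{i,j\}$, so it suffices to lower bound $\E[X_i \mid X_{[n]\setminus\{i\}} = \mathbf 1] - \E[X_i \mid X_{[n]\setminus\{i,j\}} = \mathbf 1]$. Expanding the second expectation as a mixture over $X_j$, this difference equals $\Pr[X_j = -1 \mid X_{[n]\setminus\{i,j\}} = \mathbf 1]$ times the discrete derivative $\E[X_i \mid X_j = +1,\mathbf 1] - \E[X_i \mid X_j = -1,\mathbf 1]$; bounded interaction strength keeps the first factor $\Omega(1)$ (indeed every spin takes value $+1$ with probability bounded below given any configuration of the rest, so conditional expectations never saturate at $\pm 1$). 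For the derivative, write the conditional log-odds of $X_i$ as $h^{(1)}_i$ plus a sum, over hidden nodes $\ell$ with $i \in N(Y_\ell)$, of $\log\cosh$-differences $\log\cosh(J_{i\ell}+r_\ell) - \log\cosh(-J_{i\ell}+r_\ell)$; flipping $X_j$ from $-1$ to $+1$ shifts each $r_\ell$ for a \emph{common} hidden neighbor $\ell$ upward by $2J_{j\ell} \ge 0$, so — because $\tanh$ is increasing — every one of these terms moves in the same direction and they cannot cancel. This is exactly where ferromagneticity is essential: with sign-indefinite couplings the contributions through distinct hidden nodes could cancel and the $i$--$j$ interaction in the induced MRF could vanish even though $i$ and $j$ share hidden neighbors. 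Quantifying this no-cancellation phenomenon — turning it into a genuine constant lower bound on the log-odds gap (using that all effective fields are bounded, so $\tanh'$ and $\sigma'$ are bounded below on the relevant range), and hence on the conditional influence — is the main obstacle; I would carry it out by this direct $\log\cosh$ computation, or alternatively by monotonically comparing (via Griffiths' inequalities) to a sub-model containing a single common hidden node.

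Granting the claim, the greedy analysis is short. As long as some $j \in N_i$ is not yet selected, the next greedy step has marginal gain $\ge \kappa$ (the best element is at least as good as $j$, which witnesses gain $\ge \kappa$), and since $I_i \in [-1,1]$ this happens at most $2/\kappa$ times; so after $D = O(1/\kappa) = O(1)$ rounds (or once a round produces no improvement) we have $N_i \subseteq T$ with $|T|\le D$, and once $N_i \subseteq T$ every further coordinate has marginal gain exactly $0$ because $N_i$ is a Markov blanket. All of this uses $\widehat I_i$ rather than $I_i$: since $|S|\le D$ and bounded degree/interactions force each spin to equal $+1$ with probability at least some constant $p_0$, $\Pr[X_S = \mathbf 1] \ge p_0^{D} = \Omega(1)$, so $O(\eps^{-2}\log(n/\delta))$ samples make every queried estimate $\widehat I_i(S)$ accurate to within $\kappa/4$ simultaneously (union bound over the $n$ variables and the $O(n)$ sets queried per variable), which leaves the combinatorial argument intact after replacing $\kappa$ by $\kappa/2$. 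The running time is $n$ variables times $O(1)$ greedy rounds times $n$ candidate coordinates times $O(\log n)$ per empirical evaluation, i.e. $\widetilde O(n^2)$; the sample complexity is $O(\log n)$.
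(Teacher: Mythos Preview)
Your proposal is correct and follows the paper's high-level strategy (greedy maximization of the submodular influence function, then local regression), but diverges in two technical places worth noting.

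First, for the marginal-gain lower bound (the paper's Lemma~\ref{lem:2hop-lb}), you reduce by submodularity to $S = [\text{observed}]\setminus\{i,j\}$ and analyze the log-odds of $X_i$ directly via the $\log\cosh$ potential, using that flipping $X_j$ shifts each $r_\ell$ in the same direction. The paper instead exploits that $I_i$ is submodular on the \emph{full} node set (hidden and observed), so it further conditions on all hidden nodes except a single common neighbor $k$; then $X_i$ and $X_j$ are conditionally independent given $X_k$, and the bound factors into two applications of an elementary $\tanh$-sensitivity estimate (Lemma~\ref{lem:tanh-sensitivity}). Both arguments work; the paper's is cleaner to quantify and extends more transparently to non-bipartite hidden structure (Lemma~\ref{lem:general-n2-lbd}).

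Second, your greedy analysis is the potential argument ``each step gains at least $\kappa$, range is $2$, so at most $2/\kappa$ steps'' (as in \cite{Bresler}). The paper instead runs the standard $(1-1/e)$ submodular-maximization analysis (Lemma~\ref{lem:noisy-greedy}) for $k = d_2\log(4/\eta)$ rounds and shows the residual $I_i(\mathcal{N}_2(i)) - I_i(S_k)$ drops below the gap $2\eta$. This matters quantitatively: your $k = O(1/\kappa)$ is already exponential in $\beta$, so the $2^{O(k)}$ factor in the sample complexity (from needing $\Pr[X_S = \mathbf{1}]$ bounded below for all $|S|\le k$) becomes doubly exponential in $\beta$; the paper's $k = O(d_2(\beta + \log(1/\alpha)))$ keeps everything singly exponential, which they point out is near the information-theoretic optimum. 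Both routes deliver the informal theorem as stated.
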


\noindent See Theorem~\ref{thm:greedynbhd-works} and Theorem~\ref{thm:low-dimensional-regression} for the precise statement. We note that unlike earlier greedy algorithms for learning Ising models \cite{Bresler,Hamilton} our dependence on the maximum degree is singly exponential and hence is nearly optimal \cite{santhanamW}. In independent work, Lynn and Lee \cite{lynn2018maximizing} also considered the problem of maximizing the influence but in a \emph{known} Ising model. They gave a (conjecturally optimal) algorithm for solving this problem given an $\ell_1$-constraint on the external field.

Our algorithm extends straightforwardly to general ferromagnetic Ising models with latent variables. In this more general setting, the two-hop neighborhood of a node $i$ is replaced by an induced Markov blanket (see  Definition~\ref{dfn:blanket}), which informally corresponds to the set of observed nodes that separate $i$ from the other observed nodes. We prove:

\begin{theorem}[Informal]
There is a nearly quadratic time algorithm with logarithmic sample complexity for learning the distribution of observed variables (expressed as a Markov Random Field) for ferromagnetic Ising model with latent variables, under the conditions that the interaction strengths are upper and lower bounded, the induced Markov blankets have bounded size and that the distance between any node $i$ and any other node in its Markov blanket is bounded. 
\end{theorem}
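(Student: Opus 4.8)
The plan is to run, with appropriate modifications, the algorithm developed for ferromagnetic RBMs, replacing the two-hop neighborhood of an observed node $i$ by its induced Markov blanket $B_i$ (Definition~\ref{dfn:blanket}). The starting point is the observation that marginalizing out the latent variables of a ferromagnetic Ising model produces a distribution on the observed variables $X=(X_1,\dots,X_n)$ which, although it is now a higher-order MRF, still satisfies the Griffiths and GHS correlation inequalities: pinning any $X_j$ to $+1$ can only increase $\E[X_i]$, and the magnetization remains concave in the external fields. Hence the discrete influence $I_i(S)=\E[X_i\mid X_S=\{+1\}^{|S|}]$ is still monotone and submodular, exactly as in Theorem~\ref{thm:submodular}. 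Moreover $\max_{S\subseteq[n]\setminus\{i\}}I_i(S)$ is attained at $B_i$: since $X_i$ is conditionally independent of the other observed variables given $X_{B_i}$, we have $I_i(B_i\cup A)=I_i(B_i)$ for every $A$, while monotonicity gives $I_i(S)\le I_i(S\cup B_i)=I_i(B_i)$ for all $S$.

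The technical core is the structural lemma, the analogue of Theorem~\ref{thm:greedynbhd-works}: there is a threshold $\tau>0$, depending only on the boundedness parameters, such that every $T$ with $I_i(T)\ge\max_S I_i(S)-\tau$ contains $B_i$. Fix $j\in B_i\setminus T$. Monotonicity gives $I_i(T)\le I_i((B_i\cup T)\setminus\{j\})$, so
\[
\max_S I_i(S)-I_i(T)\;=\;I_i(B_i\cup T)-I_i(T)\;\ge\;I_i(B_i\cup T)-I_i\bigl((B_i\cup T)\setminus\{j\}\bigr),
\]
and it suffices to show that this last quantity --- the marginal value of pinning $X_j$ to $+1$ once every other relevant observed node is pinned --- is at least some $\delta>0$; one then takes $\tau=\delta/2$. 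By hypothesis $j$ lies within bounded graph distance $\ell$ of $i$, so there is a path from $j$ to $i$ (possibly through latent variables) all of whose couplings are at least the lower bound $\beta$; the Griffiths inequalities turn this into a uniform positive lower bound on the conditional correlation of $X_i$ and $X_j$, and GHS (concavity of the magnetization in the field at $j$) converts that correlation into the claimed gain in $\E[X_i]$. I expect this quantitative step to be the main obstacle: one must control the attenuation of influence along bounded-length paths through latent variables and rule out saturation of $X_i$, which is where the boundedness of the interaction strengths and of the model's effective fields is used; it is also exactly why the bounded-distance hypothesis cannot be dropped, since otherwise a blanket node could exert arbitrarily weak influence on $i$.

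With the structural lemma in hand, the algorithm for each observed $i$ is the greedy procedure that repeatedly adds to a set $T_i$ the observed node of largest empirical marginal gain in $\widehat I_i$ and halts when this gain drops below $\tau/2$. Since each step raises $\widehat I_i(T_i)$ by at least $\tau/2$ while the true influence lies in $[-1,1]$, the final $T_i$ has size $K=O(1/\tau)$, a constant under the hypotheses; and since submodularity together with the marginal-value bound above shows that any blanket node still outside $T_i$ would have marginal gain at least $\delta>\tau/2$, the halting condition forces $T_i\supseteq B_i$. For the sample complexity, observe that because the model is ferromagnetic with nonnegative external fields, pinning coordinates to $+1$ only raises marginals, so $\Pr(X_S=\{+1\}^{|S|})\ge 2^{-|S|}\ge 2^{-K}$ for every $|S|\le K$; hence each conditional expectation queried by greedy is estimated to accuracy $\tau/4$ from $O(\log(n/\eta))$ i.i.d.\ samples, and a union bound over the $O(n^2K)$ sets ever considered controls the total error.

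Finally, each $T_i$ is a bounded-size superset of the Markov blanket of $i$, so the conditional law $\Pr(X_i\mid X_{T_i})$ is well defined and estimable; applying the low-dimensional regression step (Theorem~\ref{thm:low-dimensional-regression}) recovers these conditionals, which we stitch into a Markov Random Field (Definition~\ref{dfn:MRF}) on the observed variables whose edges join every pair of nodes contained in a common $T_i\cup\{i\}$. The running time is $O(n^2)$ influence evaluations --- each $O(1)$ once the samples are drawn --- plus the regressions, i.e.\ nearly quadratic, and the sample complexity is logarithmic in $n$, as claimed.
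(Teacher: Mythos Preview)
Your plan is essentially the paper's: exploit submodularity of $I_i$ (via GHS), prove a marginal-gain lower bound for any missing blanket node via a path argument, run greedy, then regress on the recovered neighborhoods. The paper's formal version is Lemma~6.12 together with Theorems~6.14 and~6.16, and your structural reduction to bounding $I_i(B_i\cup T)-I_i((B_i\cup T)\setminus\{j\})$ is exactly how the paper sets things up.

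A few differences and small slips are worth flagging. First, your greedy halts when the empirical marginal gain drops below $\tau/2$, which bounds $|T_i|$ by $O(1/\tau)$; the paper instead runs a fixed number $k=d_2\log(4/\eta)$ of steps and uses the standard noisy submodular-maximization analysis (Lemma~6.6), giving the tighter $|T_i|=O(d_2\log(1/\eta))$. Both are constants under the hypotheses, so the informal theorem goes through, but the paper's constant is exponentially better in the path length $\ell$ and this matters because sample complexity scales like $2^{O(|T_i|)}$. Second, the paper adds a pruning step to recover $B_i$ exactly; you keep a superset, which is fine for learning the potential but not for structure recovery. Third, your attribution of the quantitative lower bound to ``Griffiths'' and ``GHS'' is slightly off: Griffiths only gives nonnegativity of the conditional covariance, and the conversion from covariance to marginal gain is elementary algebra, not GHS. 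The actual positive lower bound comes from using submodularity to reduce to the case where everything off a single $i$--$j$ path is pinned, and then an explicit line-graph computation (Lemmas~6.12--6.13); you correctly identify this as the crux but your sketch doesn't quite name the mechanism. Finally, your union bound ``over the $O(n^2K)$ sets ever considered'' is not valid because those sets are chosen adaptively; you need the uniform bound over all $|S|\le K$, which still costs only $O(K\log n)$ and keeps the sample complexity logarithmic.
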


\noindent See Theorem~\ref{thm:greedynbhd-works-general} and Theorem~\ref{thm:low-dimensional-regression-general} for the precise statement. We remark that in our setting, the maximal Fourier coefficients of the induced MRF can be arbitrarily small, which is a serious obstacle to directly applying existing algorithms for learning MRFs (see Example~\ref{ex:ferromagnetic-eta-identifiability}). Our method also has the advantage of running in near-quadratic time whereas existing MRF algorithms would require runtime $n^{d_H + 1}$, where $d_H$ is the maximum hidden degree\footnote{The induced MRF has order $d_H$, so these methods (e.g. \cite{KlivansM}) need to solve regression problems on polynomials of degree $d_H$.}. We also show how Lee-Yang properties that hold for ferromagnetic Ising models \cite{LSS} carry over to the induced MRFs in the presence of latent variables, which allows us to approximate the partition function and perform inference efficiently. See Theorem~\ref{thm:lee-yang-logz} for the precise statement. Compared to the previous settings where provable guarantees were known, ours is the first to work even when there are long range correlations.

%\fk{Finally and unrelated to works on learning graphical models, independent work of Lynn and Lee \cite{lynn2018maximizing} considered the problem of maximizing the influence in a \emph{known} Ising model and gave a (conjecturally optimal) algorithm for solving this problem given an $\ell_1$-constraint on the external field.}

%\fk{One might hope, given our above result in the ferromagnetic setting, that this result could be generalized to learning \emph{general} Ising models with some reasonable conditions on the hidden nodes, e.g. that there are not too many hidden nodes or with restrictions on the edge connections between different latent variables. Such a scenario is certainly not ruled out by the result of Bogdanov et al \cite{BogdanovMV}; furthermore, in the case of Ising models \emph{without} hidden nodes, there is no difference in the power of learning algorithms for ferromagnetic and general Ising models \cite{Bresler,KlivansM,santhanamW}. We completely rule out such a scenario by showing that even RBMs with a constant number of hidden nodes are hard to learn efficiently. Given this, it's unclear what natural assumption \emph{besides} ferromagneticity exists, under which we can hope to learn models with latent variables efficiently.}

As we alluded to earlier, being ferromagnetic turns out to be the {\em key} property in avoiding computational intractability. More precisely, we show a rather surprising converse to the well-known fact that marginalizing out a latent variable produces a higher-order interaction among its neighbors. We show that marginalizing out a collection of latent variables can produce {\em any} desired higher-order interaction among their neighbors.

%We also shed new light on the relationship between learning graphical models with and without latent variables. 
%\fk{The key to our negative result will be a tight characterization of the representational power of sparse RBMs}.
%It is well known that an RBM where the maximum degree of any latent node is at most $d_H$ produces a distribution on the observed variables that can be represented as an order $d_H$ MRF. We prove a surprising converse:%, \fk{giving a }: 

\begin{theorem}[Informal]
Every binary Markov Random Field of order $d_H$ can be expressed as the distribution on observed variables of a Restricted Boltzmann Machine, where the maximum degree of any latent node is at most $d_H$. 
\end{theorem}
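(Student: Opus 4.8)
The plan is to work back and forth through the marginalization map. First I would sum out the hidden variables of an RBM: since $\sum_{y_j\in\{\pm1\}}\exp\!\big(y_j(\sum_i J_{ij}x_i+h^{(2)}_j)\big)=2\cosh(\sum_i J_{ij}x_i+h^{(2)}_j)$, the observed log-density is
\[ \log\Pr(X=x)\;=\;\mathrm{const}+\sum_{i=1}^n h^{(1)}_i x_i+\sum_{j=1}^m\log\cosh\!\Big(\textstyle\sum_{i=1}^n J_{ij}x_i+h^{(2)}_j\Big), \]
and, crucially, the contribution of hidden node $j$ is a function only of the observed variables in $\mathrm{supp}(J_{\cdot j})$, whose size is exactly the degree of $j$. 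On the other side, a binary MRF of order $d_H$ has log-density equal to a multilinear polynomial $q(x)=\sum_{|S|\le d_H}\alpha_S\prod_{i\in S}x_i$. So the theorem reduces to the following: every such $q$ can be put in the displayed form with $|\mathrm{supp}(J_{\cdot j})|\le d_H$ for every $j$, since then the RBM with these parameters has observed distribution proportional to $e^{q(x)}$.

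The heart of the argument is a gadget realizing a single top monomial. I claim that for every $S$ with $2\le|S|=k$ and every $\alpha\in\R$ there is a collection of hidden nodes, each attached only to vertices of $S$ (hence of degree $\le k$), whose total contribution equals $\alpha\prod_{i\in S}x_i+r(x)+\mathrm{const}$ for some polynomial $r$ of degree $<k$. To build it, attach a single hidden node to all of $S$ with weights $v\epsilon_i$ (for $\epsilon\in\{\pm1\}^S$ and $v\in\R$ small) and external field $b\in\R$; its contribution is $\log\cosh(v\sum_{i\in S}\epsilon_i x_i+b)$, whose only Fourier coefficient supported on a set of size $k$ is the one on $S$ itself. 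Expanding $\log\cosh(t)=\sum_n a_n t^n$ and reducing $(\sum_{i\in S}\epsilon_i x_i)^j$ modulo $x_i^2=1$ — the coefficient of $\prod_{i\in S}x_i$ there is $(\prod_{i\in S}\epsilon_i)\,c_{j,k}$ with $c_{j,k}>0$ exactly when $j\ge k$ and $j\equiv k\pmod 2$ — shows that the coefficient of $\prod_{i\in S}x_i$ in the node's contribution equals $(\prod_{i\in S}\epsilon_i)$ times a real-analytic function $\lambda(v,b)$ with leading behaviour $a_k\,k!\,v^k$ when $k$ is even (one can then take $b=0$) and $a_{k+1}\,(k+1)!\,v^k b$ when $k$ is odd (the field must be turned on, since $\log\cosh$ is even and a zero field can never produce an odd-degree monomial). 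Because the even-index Taylor coefficients of $\log\cosh$ are all nonzero, $\lambda$ takes every value in a small neighbourhood of $0$ as $v,b,\epsilon$ vary; and taking $N$ identical copies of the gadget multiplies its contribution by $N$, so choosing $N$ large and $\lambda=\alpha/N$ realizes any target coefficient $\alpha$ exactly, at the cost of degree-$(<k)$ junk.

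With the gadget in hand I would finish by peeling $q$ from the top down: repeatedly pick a monomial $\prod_{i\in S}x_i$ of maximal degree $k\ge 2$ with nonzero coefficient in the current target, subtract off a gadget realizing exactly that coefficient, and absorb the resulting degree-$(<k)$ junk into the target. Since the junk never raises the degree, after finitely many steps only an affine function remains, which we place into $h^{(1)}$ and the partition function. Collecting all the hidden nodes produced yields an RBM whose observed log-density equals $q$ up to an additive constant, with every hidden node of degree at most $d_H$.

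The main obstacle is the gadget computation. One has to verify both that the coefficient of $\prod_{i\in S}x_i$ is genuinely controllable — nonzero, of arbitrary sign (chosen by the parity of $\prod_{i\in S}\epsilon_i$, or by the sign of $b$ when $k$ is odd), and of arbitrary magnitude after taking copies — and that the gadget creates nothing of degree $\ge k$ other than this monomial. The parity phenomenon (that $\log\cosh$ is even, forcing the external field on precisely when $|S|$ is odd) and the bookkeeping of which Taylor coefficients survive the reduction modulo $x_i^2=1$ are where the care is needed; everything else is routine.
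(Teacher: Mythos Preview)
Your proposal is correct and follows essentially the same route as the paper: the same marginalization formula (the paper writes $\rho(t)=\log(e^t+e^{-t})=\log 2+\log\cosh t$), the same single-hidden-node gadget whose top Fourier coefficient is controlled via the Taylor expansion of $\log\cosh$ with the even/odd $|S|$ dichotomy handled exactly as you describe (the paper phrases the nonvanishing of the relevant Taylor coefficients via the Bernoulli numbers $B_{2n}\ne 0$), the same use of multiple copies to hit an arbitrary target coefficient, and the same top-down peeling of the Fourier expansion. Your remark that the gadget produces no other monomials of degree $\ge k$ is immediate since its contribution depends only on $x_S$.
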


\noindent See Theorem~\ref{thm:mrf-as-rbm} for the precise statement. Our approach to showing the equivalence between RBMs and MRFs is to show a non-zero correlation bound between the soft absolute value function that arises from marginalizing out latent variables and a parity function. We accomplish this through estimates of the Taylor expansion of special functions. With this in hand, we can match the largest degree terms in the energy function of an MRF and recurse. 

Apart its usefulness in proving hardness, this result also resolves a basic question about the representational power of RBMs. Towards the goal of understanding deep learning, a number of recent works have shown depth separations in feed-forward neural networks \cite{telgarsky2016benefits,safran2017depth,eldan2016power}. They explicitly construct (or show that there exists) a function that can be computed by a depth $d+1$ feed-forward neural network of small size, but with depth $d$ would require exponential size. In fact, RBMs are the building block of another popular paradigm in deep learning: deep belief networks \cite{Hinton}. Towards understanding the representational power of RBMs, Martens et al. \cite{martens-rbm-representation} showed that it is possible to approximately represent the uniform distribution on satisfying inputs to the parity function, and more generally any predicate depending only on the number of 1s, using a dense RBM. %{\em dense} RBMs. 
In practice, sparse RBMs are desirable because their dependencies are easier to interpret. The above theorem exactly characterizes what distributions can be represented this way: They are exactly the bounded order MRFs.

% \fk{In comparison, the previous work on the representational power of RBMs due to Martens et al \cite{martens-rbm-representation} had only given results for representing particular distributions \emph{approximately} and only using dense RBMs. More generally, the representation power of deep feedforward neural networks has been a recent area of active interest (see e.g. \cite{telgarsky2016benefits,safran2017depth,eldan2016power}), so it is interesting that we can characterize the representational power of sparse RBMs, the building block of deep belief networks \cite{Hinton}.}

In any case, what this means for our lower bound is that without ferromagneticity, even RBMs with a constant number of latent variables of constant degree inherits the hardness results of learning MRFs \cite{bresler2014hardness,KlivansM}, that in turn follow from the popular assumption that learning sparse parities with noise is hard. For comparison, the technique used in \cite{martens-rbm-representation} seems insufficient for this reduction \---- their method can only build certain noiseless functions.

\begin{corollary}[Informal]
If $k$-sparse noisy parity on $n$ bits is hard to learn in time $n^{o(k)}$, then it is hard to learn a representation of the distribution on $n$ observed variables (as any unnormalized function that can be efficiently computed) that is close to within total variation distance $1/3$ of a Restricted Boltzmann Machine where the maximum degree of any latent node is $d_H$ in time $n^{o(d_H)}$. This is true even if the number of hidden nodes in the RBM is promised to be constant w.r.t. $n$. 
\end{corollary}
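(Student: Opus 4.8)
The plan is to derive the corollary by composing Theorem~\ref{thm:mrf-as-rbm} with the standard reduction from learning $k$-sparse parity with noise to learning bounded-order Markov Random Fields \cite{bresler2014hardness,KlivansM}. The observation that makes this essentially immediate is that Theorem~\ref{thm:mrf-as-rbm} is, in particular, a \emph{sample-preserving} reduction from distribution learning of binary MRFs of order $d_H$ to distribution learning of RBMs of maximum latent degree $d_H$: from any order-$d_H$ MRF $\mu$ it produces an RBM whose observed marginal equals $\mu$ exactly, so i.i.d.\ samples of $\mu$ are already i.i.d.\ samples of the RBM's observed distribution and the reduction needs no information about $\mu$. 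Crucially, the number of hidden nodes it uses depends only on $d_H$ and the magnitude of $\mu$'s coefficients, not on the number of observed variables --- which is exactly what lets us certify the ``hidden nodes constant w.r.t.\ $n$'' clause, and exactly where the earlier constructions fall short (\cite{BogdanovMV} needs one hidden node per gate, and the soft-threshold gadgets of \cite{martens-rbm-representation} only realize noiseless symmetric predicates rather than a genuine noisy-parity clique).

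Concretely, fix a small constant noise rate $\eta \in (0,1/6)$ and a secret $S \subseteq [n]$ with $|S| = k$. The law of $(z,y) \in \{\pm1\}^n \times \{\pm1\}$ with $z$ uniform and $y = \prod_{i \in S} z_i$ flipped independently with probability $\eta$ is precisely the binary MRF on $n+1$ variables whose only hyperedge is $S \cup \{y\}$, with energy $\beta\, y \prod_{i \in S} z_i$ and $\beta = \half \log \frac{1-\eta}{\eta}$; this is an order-$(k+1)$ MRF, and if we write its mass function in the Fourier basis (after rescaling by $2^{n+1}$) the coefficient at $S \cup \{y\}$ equals $1 - 2\eta$. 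Applying Theorem~\ref{thm:mrf-as-rbm} with $d_H := k+1$ realizes this distribution as the observed marginal of an RBM of maximum latent degree $d_H$ with a number of hidden nodes that is $O_{d_H}(1)$, independent of $n$. Now suppose there were an algorithm running in time $n^{o(d_H)}$ that, from samples of the observed distribution of such an RBM, outputs an efficiently evaluable unnormalized $\widetilde p$ whose normalization is within total variation distance $1/3$ of the truth. Feeding it the noisy-parity samples gives, in time $n^{o(k)}$, such a $\widetilde p$ with $\|\widetilde p - p_S\|_{\mathrm{TV}} \le 1/3$. Clipping the rescaled function $g := 2^{n+1}\widetilde p$ to $[0,2]$ keeps it within $2/3$ of $2^{n+1}p_S$ in $\ell_1$ under the uniform measure, so every Fourier coefficient of $g$ is within $2/3$ of that of $2^{n+1}p_S$; in particular $|\widehat g(S \cup \{y\})| \ge (1-2\eta) - 2/3 > 0$, while $g$ is bounded and hence has bounded Fourier weight. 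Running the Goldreich--Levin/Kushilevitz--Mansour algorithm on $g$ --- which only needs evaluations, hence is efficient --- produces a short list of heavy Fourier coefficients that must include $S \cup \{y\}$; testing each candidate set against fresh samples (only $S$ yields correlation $\approx 1-2\eta$ with $y$) then recovers $S$ in polynomial time, contradicting the assumed hardness of $k$-sparse noisy parity in time $n^{o(k)}$.

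The genuinely hard part is Theorem~\ref{thm:mrf-as-rbm} itself --- synthesizing an arbitrary order-$d_H$ interaction from latent variables each of degree at most $d_H$, using a number of them that does not grow with $n$ --- and that is proved separately. Granting it, the remaining work for the corollary is routine; the one point that needs care is that learning merely to total variation distance $1/3$ (rather than to vanishing error) still lets one extract the secret, which is why we take the noise rate to be a small enough constant so that the signal $1-2\eta$ clears the $2/3$ perturbation floor, and why we recover the parity through heavy Fourier coefficients rather than by an infeasible search over the $\binom{n}{k}$ possible cliques. All of these details are already present in the reductions of \cite{bresler2014hardness,KlivansM}, and the only new step is the appeal to Theorem~\ref{thm:mrf-as-rbm} in place of a direct MRF construction.
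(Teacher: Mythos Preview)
Your reduction is structurally sound up to the point where you try to extract the secret from the learner's output, but there is a real gap in handling the \emph{unnormalized} density. You write $g := 2^{n+1}\widetilde p$ and clip to $[0,2]$, and then assert $\|g - 2^{n+1}p_S\|_1 \le 2/3$; this is valid only if $\widetilde p$ is the \emph{normalized} pmf. The hypothesis of the corollary (and of Theorem~\ref{thm:improperhard}) is that the learner returns an efficiently evaluable $q$ with $q/C_q$ close to $p_S$, where $C_q$ is unknown and cannot be computed. Clipping $2^{n+1}q$ to $[0,2]$ when the correct scale is the unknown $C_q/2^{n+1}$ will generically either zero out or saturate the signal, and running Goldreich--Levin directly on the unclipped $q$ does not work either: the adversary may put a single spike of mass $1/3$ into $\mathcal Q$ (still within TV $1/3$ of $p_S$), which makes $\|2^{n+1}\mathcal Q\|_2$ of order $2^{n/2}$ and buries the $\Theta(1)$ parity coefficient below the Goldreich--Levin threshold. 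The reductions in \cite{bresler2014hardness,KlivansM} you defer to are for structure learning of MRFs and do not address this normalization issue, so the appeal to them does not close the gap.

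The paper's proof sidesteps the problem by reducing from the \emph{hypothesis testing} formulation of sparse parity and using a scale-invariant statistic: draw fresh uniform points, evaluate $q$ there, and form $V=(q_{2/3}-q_{1/3})/(q_{1/3}+q_{2/3})$ from empirical quantiles. Under the null $q$ is essentially constant so $V\approx 0$; under a planted parity $q$ concentrates near two values in ratio $e^{2J_\eta}$, so $V\approx \tanh(J_\eta)$. The unknown $C_q$ cancels in $V$, and no Fourier machinery or normalization estimate is needed. Your route could plausibly be repaired by first estimating the scale with a robust statistic of this kind before clipping, but as written the argument does not go through, and once one introduces such a statistic the paper's direct test is both simpler and sufficient.
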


\noindent See Theorem~\ref{thm:improperhard} for the precise statement.
Recall that it is impossible to learn the parameters of an RBM uniquely. Our result shows that learning merely a description of the distribution on the observed variables \---- i.e. a form of improper learning \---- is hard too, 
% FRED: Seems redundant.
%The only other work showing hardness for learning graphical models with latent variables that we are aware of is \cite{BogdanovMV}, whose lower bounds hold under weaker assumptions but require the number of latent variables to be a large (but still polynomially bounded) function of the number of observed variables. In contrast, our hardness results hold 
even for RBMs with only a constant number of hidden variables. In contrast, previous lower bounds were %for dense graphical models and % their examples are sparse.
for  graphical models  with many more latent variables than observed variables \cite{BogdanovMV}. At the time it seemed plausible that there were large classes of graphical models with latent variables that could be efficiently learnable. But in light of how simple our hard examples are, it seems difficult to imagine any other natural and well-motivated class of graphical models with latent variables (without ferromagneticity) that is also easy to learn.

\subsection{Further Discussion} 

There is an intriguing analogy between our results and the problem of learning juntas \cite{MOS, Valiant}. While the general problem of learning $k$-juntas seems to be hard to solve in time $n^{o(k)}$ there are some special cases that can be solved much faster. Most notably, if the junta is monotone then there is a simple algorithm that works: Find all the coordinates with non-zero influence and solve the junta learning problem restricted to those coordinates. We can think of ferromagneticity as the natural analogue of monotonicity in the context of RBMs, since this property also prevents certain types of cancellations.  Are there other contraints that one can impose on RBMs, perhaps inspired by ones that work for juntas, that make the problem much easier? 

Another enticing question for future work is to study ``deeper" versions of the problem, such as ferromagnetic deep belief networks. Are there new provable algorithms for classes of deep networks to be discovered? There is a growing literature on learning deep networks under various assumptions \cite{Arora,Janzamin,Zhang, Goel}, but the ability of ferromagnetic RBMs to express long-range correlations seems to make it a potentially more challenging problem to tackle.

\section{Preliminaries}
\begin{defn} An $\emph{Ising model}$ is a probability distribution $\mu(J,h)$ on the hypercube
$\{ \pm 1\}^n$ under which
\[ \Pr(X = x) = \mu(x) = \frac{1}{Z}\exp\Big(\frac{1}{2} \sum_{i, j} J_{ij} x_i x_j + \sum_i h_i x_i\Big) \]
where $J$ is a symmetric matrix with zero diagonal
referred to as the \emph{interaction matrix}, 
$h \in \mathbb{R}^n$ is referred to as the \emph{external field} and
$Z$ is the normalizing constant known as the \emph{partition function}. 
%We let $\mu(J,h)$ denote the probability distribution of the Ising model, also known as the \emph{Boltzmann distribution}.
\end{defn}
\begin{defn}
A \emph{ferromagnetic Ising model with consistent external fields} is an Ising
model such that $J_{ij} \ge 0$ for all $i,j$ and such that $h_i \ge 0$. We
will refer to this just as a \emph{ferromagnetic Ising model} from now on.
We will also refer to such a $J$ as a \emph{ferromagnetic interaction matrix}. 
\end{defn}
We are particularly interested in Ising models with \emph{hidden
  variables}; thus we introduce the well-known concept of a
\emph{Restricted Boltzmann Machine}. We will focus on the case of RBMs
in the sequel, though everything can be generalized to
Ising models with arbitrary sets of hidden nodes without much effort,
as long as there are no large connected components of hidden nodes.
\begin{defn}
Fix a vertex set $V$ which is split into two disjoint parts as $V = V_1 \cup V_2$, and let $n = |V_1|$ and $m = |V_2|$. 
A \emph{Restricted Boltzmann Machine} (or RBM) is a probability distribution on $\{\pm 1\}^n \times \{\pm 1\}^m$ under which
\[ \Pr(X = x, Y = y) = \frac{1}{Z}\exp\left(x^T J y + \sum_{i = 1}^n h^{(1)}_i x_i + \sum_{i = 1}^m h^{(2)}_j y_j\right) \]
where $J : \mathbb{R}^{n \times m}$ is the \emph{interaction matrix}, $X$ is referred to as the \emph{observed/visible nodes}, $Y$
is referred to as the \emph{latent/hidden nodes}, $h^{(1)}$ is the vector
of external fields/biases of the observed nodes and $h^{(2)}$ is the
vector of external fields for the hidden nodes. 
\end{defn}
Clearly the joint distribution of a Restricted Boltzmann Machine is just a special case of a general Ising model. Therefore we say a Restricted Boltzmann Machine is \emph{ferromagnetic} if $J_{ij} \ge 0, h^{(1)}_i \ge 0, h^{(2)}_i \ge 0$ which is consistent with our previous terminology.
\section{Submodularity of Influence in Ising models}
\begin{defn}
Fix a ferromagnetic interaction matrix $J$. We define the \emph{smooth influence function} for $X_i$ to be
\[ \mathcal{I}_i(h) = \E_{X \sim \mu(J,h)}[X_i] \]
\end{defn}
\begin{defn}
Suppose $f : \mathbb{R}_{\ge 0}^n \to \mathbb{R}$ is a $\mathcal C^2$ function, i.e. it has continuous second partial derivatives. We say that $f$ is a \emph{smooth monotone submodular function} if 
\begin{enumerate}
\item $\partial_i f \ge 0$ everywhere, and
\item $\partial_i \partial_j f \le 0$ everywhere.
\end{enumerate}
\end{defn} 
We will see that smooth monotone submodularity of $\mathcal{I}_i$ in ferromagnetic
Ising models follows from the following correlation inequality of Griffiths, Hurst and Sherman \cite{ghs}:
\begin{theorem}[GHS inequality, \cite{ghs}]\label{thm:ghs}
Let $J$ be the interaction matrix of a ferromagnetic Ising model on $n$ nodes without external field. Then for any (not necessarily distinct) $1 \le i,j,k,\ell \le n$ we have
\begin{align*} \E[X_i X_j X_k X_{\ell}] - \E[X_i X_j] \E[X_k X_{\ell}] - \E[X_i X_k] \E[X_j X_{\ell}] - \E[X_i X_{\ell}] \E[X_j X_k]& \\+ 2\E[X_i X_{\ell}] &\E[X_j X_{\ell}] \E[X_k X_{\ell}] \le 0 \,,\end{align*}
where the expectations are taken with respect to the Boltzmann distribution.
\end{theorem}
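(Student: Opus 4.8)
Theorem~\ref{thm:ghs} is the classical Griffiths--Hurst--Sherman inequality, so the economical option is to cite \cite{ghs} directly; I note only that the apparently special last term is what lets the zero-field statement imply the version one actually applies to $\mathcal{I}_i(h)$ --- adjoin a spin $\sigma_0$ coupled to each site $v$ with strength $h_v \ge 0$, condition the resulting zero-field ferromagnet on $\sigma_0 = +1$ to recover $\mu(J,h)$, and take $\ell$ to be the new spin, so that the left-hand side becomes the third connected correlation of $\mu(J,h)$, whose nonpositivity is exactly $\partial_j \partial_k \mathcal{I}_i \le 0$. If instead one wants a self-contained argument, I would reprove the inequality by the \emph{method of duplicated variables}, which is how GHS originally proceeded and which sits naturally beside the Griffiths--Kelly--Sherman (GKS) inequalities already invoked here.

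The plan has three steps. \textbf{Step 1 (reduction to a replicated ferromagnet).} Introduce $r$ independent copies $\sigma^{(1)}, \dots, \sigma^{(r)}$ of the zero-field ferromagnet, with joint weight proportional to $\exp\big(\tfrac12 \sum_{u,v} J_{uv} \sum_{a} \sigma_u^{(a)} \sigma_v^{(a)}\big)$. Each of the five terms in the statement is a product of at most three two-point functions, so distributing those factors over distinct copies rewrites the whole left-hand side as a single expectation $\langle G \rangle$ of an explicit polynomial $G$ of degree $\le 6$ in the variables $\{\sigma_v^{(a)} : v \in \{i,j,k,\ell\}\}$; three copies suffice, but it is cleanest to take $r = 4$. \textbf{Step 2 (orthogonal change of variables).} At each site $v$ replace $(\sigma_v^{(1)}, \dots, \sigma_v^{(r)})$ by $\tau_v = O \sigma_v$ for a fixed orthogonal (Hadamard) matrix $O$. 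Orthogonality preserves the quadratic form, $\sum_a \sigma_u^{(a)} \sigma_v^{(a)} = \sum_a \tau_u^{(a)} \tau_v^{(a)}$, so in the $\tau$ coordinates the replicated measure is again a generalized ferromagnetic Gibbs measure (all couplings stay $\ge 0$), and the single-site law of $\tau_v$ inherits a symmetry group from $O$. \textbf{Step 3 (positivity in the new variables).} Re-expand $G$ in the $\tau$'s, discard the monomials killed by the single-site symmetries, and check that what is left forces $\langle -G \rangle \ge 0$; the expectations of the surviving monomials in the $\tau$-ferromagnet are handled by GKS-type positivity (Taylor-expand the Boltzmann weight, use $J_{uv} \ge 0$ to get nonnegative coefficients, evaluate single-site integrals). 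Equal indices among $i,j,k,\ell$ follow from the same computation or by a limiting argument.

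The \textbf{main obstacle} is Step 3. The transformed $-G$ does \emph{not} simply have all-nonnegative monomial coefficients against the single-site law, so one must combine the algebra of this specific combination --- in particular the $+2$ coefficient, which is precisely what makes the statement stronger than the Lebowitz inequality $\langle\sigma_i\sigma_j\sigma_k\sigma_\ell\rangle - \langle\sigma_i\sigma_j\rangle\langle\sigma_k\sigma_\ell\rangle - \langle\sigma_i\sigma_k\rangle\langle\sigma_j\sigma_\ell\rangle - \langle\sigma_i\sigma_\ell\rangle\langle\sigma_j\sigma_k\rangle \le 0$ --- with careful bookkeeping of which monomials survive and with GKS inequalities used as black boxes. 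This is the real content of the theorem; Steps 1 and 2 are formal. An alternative that bypasses the monomial accounting is the random-current representation, where the left-hand side reorganizes into a manifestly nonpositive sum over pairs of current configurations whose source sets are constrained to link through $\ell$; but that is considerably heavier than the one-line citation of \cite{ghs} that I would actually use.
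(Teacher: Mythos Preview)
Your proposal is correct and matches the paper exactly: the paper does not prove Theorem~\ref{thm:ghs} at all but simply cites \cite{ghs}, which is precisely your ``economical option.'' Your ghost-vertex observation --- adjoin $\sigma_0$ coupled with strengths $h_v$, take $\ell$ to be this new spin, and read off $\partial_j\partial_k\mathcal{I}_i \le 0$ --- is exactly the argument the paper gives in the proof of the subsequent Corollary~\ref{corr:smooth-submodularity}, so you have also correctly anticipated how the citation is used. The duplicated-variables sketch you offer as a self-contained alternative is additional content beyond what the paper provides; it is a faithful outline of the original GHS argument, and your caveat that Step~3 is where the real work lies (and that naive monomial positivity fails) is accurate.
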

\begin{corollary}\label{corr:smooth-submodularity}
Let $J$ be a ferromagnetic interaction matrix, i.e. $J_{ij} \ge 0$. Then
for any $i \in [n]$, $\mathcal{I}_i(h) : \mathbb{R}_{\ge 0}^n \to \mathbb{R}$ is a smooth monotone submodular function.
\end{corollary}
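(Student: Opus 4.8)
The plan is to establish the two defining inequalities of smooth monotone submodularity by writing the partial derivatives of $\mathcal{I}_i(h)$ in terms of correlation functions and then invoking ferromagnetic correlation inequalities: Griffiths' second (GKS) inequality for monotonicity, and the GHS inequality (Theorem~\ref{thm:ghs}) for the submodularity. Since $\mathcal{I}_i(h) = \E_{\mu(J,h)}[X_i]$ is a ratio of exponential sums, it is $\mathcal{C}^\infty$ in $h$, so the $\mathcal{C}^2$ hypothesis is automatic and we only need the sign conditions.

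First I would record the standard derivative formulas. Differentiating with respect to $h_j$ gives $\partial_j \mathcal{I}_i(h) = \E[X_iX_j] - \E[X_i]\E[X_j] = \Cov_{\mu(J,h)}(X_i,X_j)$, and a second differentiation with respect to $h_k$ gives
\[ \partial_k\partial_j \mathcal{I}_i(h) = \E[X_iX_jX_k] - \E[X_iX_j]\E[X_k] - \E[X_iX_k]\E[X_j] - \E[X_i]\E[X_jX_k] + 2\E[X_i]\E[X_j]\E[X_k], \]
all expectations with respect to $\mu(J,h)$. On the domain $\mathbb{R}_{\ge 0}^n$ we have $h \ge 0$ and $J$ ferromagnetic, so the second Griffiths inequality gives $\Cov(X_i,X_j) \ge 0$, i.e.\ $\partial_j \mathcal{I}_i \ge 0$, the monotonicity condition.

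For the submodularity condition $\partial_k\partial_j \mathcal{I}_i \le 0$ I would use the ghost-vertex trick to eliminate the external field. Introduce an auxiliary spin $X_0$ and consider the Ising model on $\{0\}\cup[n]$ whose interaction matrix $\tilde J$ agrees with $J$ on $[n]\times[n]$ and has $\tilde J_{0v} = h_v \ge 0$, with \emph{no} external field; this model is ferromagnetic, so Theorem~\ref{thm:ghs} applies to it. Conditioning on $X_0 = +1$ recovers $\mu(J,h)$, and the $\pm$ symmetry of the ghost model yields, for $A \subseteq [n]$, the dictionary $\langle \sigma_A \rangle_{\tilde J} = \langle \sigma_A \rangle_{\mu(J,h)}$ when $|A|$ is even and $\langle \sigma_A \sigma_0 \rangle_{\tilde J} = \langle \sigma_A \rangle_{\mu(J,h)}$ when $|A|$ is odd. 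Applying the GHS inequality in the ghost model to the four indices $i,j,k,0$ and translating every correlation back through this dictionary should reproduce exactly the right-hand side of the displayed second-derivative formula, yielding $\partial_k\partial_j \mathcal{I}_i \le 0$.

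The one point needing care is the case of coincident indices, in particular $j = k$ (needed since the definition asks for $\partial_i\partial_j f \le 0$ everywhere, including the diagonal) and also $j = i$ or $k = i$. Theorem~\ref{thm:ghs} explicitly allows repeated indices, so the argument still runs, but one must carefully use $\sigma_v^2 \equiv 1$ when simplifying both the iterated derivative and the translated GHS expression, and check that the two collapse to the same quantity (for instance, when $j=k$ both reduce to $2\E[X_i]\E[X_j]^2 - 2\E[X_iX_j]\E[X_j] \le 0$). I expect this bookkeeping — matching the combinatorial GHS expression to the iterated derivative across the various index-coincidence patterns — to be the only real work; the remainder is a direct application of the quoted inequalities.
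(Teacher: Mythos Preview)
Your proposal is correct and follows essentially the same approach as the paper: compute the first and second partial derivatives of $\mathcal{I}_i$ as cumulants, invoke Griffiths' inequality for $\partial_j \mathcal{I}_i \ge 0$, and then introduce a ghost vertex to eliminate the external field so that the third-cumulant expression for $\partial_k\partial_j \mathcal{I}_i$ matches the GHS inequality of Theorem~\ref{thm:ghs} with $\ell$ equal to the ghost index. Your extra care about coincident indices is fine but not strictly needed beyond what the paper does, since Theorem~\ref{thm:ghs} is already stated for not necessarily distinct indices.
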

\begin{proof}
The equivalence of correlation inequalities and partial derivative inequalities is well-known (and is used in \cite{ghs}); we include a proof only for completeness, since this precise statement does not appear in \cite{ghs}.

Let $Z(h)$ denote the partition function of the Ising model with interaction
matrix $J$ and external field $h$. Then observe that
\[ \mathcal{I}_i(h)  = \frac{\sum_x x_i \exp(x^T J x + h \cdot x)}{Z(h)} = \partial_i \log Z(h)\,, \]
so it suffices to prove that $\partial_j \partial_i \log Z(h) \ge 0$ for all $i,j$ and
$\partial_k \partial_j \partial_i \log Z(h) \le 0$ for all $i,j,k$. First observe by computing partial derivatives that
\[ \partial_j \partial_i \log Z(h) = \Cov(X_i, X_j) \ge 0\,, \]
where the covariance is taken with respect to $\mu(J,h)$ and the
inequality follows from Griffiths inequality. 
One can similarly observe that
\begin{align*} &\partial_k \partial_j \partial_i \log Z(h) =\\&\quad\E[X_i X_j X_k] - \E[X_i X_k]\E[X_j] - \E[X_i]\E[X_j X_k] - \E[X_i X_j] \E[X_k] + 2\E[X_i] \E[X_j] \E[X_k]\,, \end{align*}
where the expectation is taken with respect to $\mu(J,h)$. We now eliminate the external field by the introduction of a \emph{ghost vertex} $X_{n + 1}$ such that in the new Ising model, $J_{i (n + 1)} = h_{i}$, $J_{ij}$ is otherwise the same as before and there is no external field. In this new Ising model the marginal of $X_1, \ldots, X_n$ given $X_{n + 1} = 1$ is the same as their distribution in the first Ising model, and the marginal given $X_{n + 1} = -1$ is the same but with flipped signs. Letting $\E_{\nu}$ denote expectation with respect to this new Ising model, we see that 
\begin{align*}
&\E[X_i X_j X_k] - \E[X_i X_k]\E[X_j] - \E[X_i]\E[X_j X_k] - \E[X_i X_j] \E[X_k] + 2\E[X_i] \E[X_j] \E[X_k] \\
&= \E_{\nu}[X_i X_j X_k X_{\ell}] - \E_{\nu}[X_i X_j] \E_{\nu}[X_k X_{\ell}] - \E_{\nu}[X_i X_k] \E_{\nu}[X_j X_{\ell}] - \E_{\nu}[X_i X_{\ell}] \E_{\nu}[X_j X_k] \\&\qquad+ 2\E_{\nu}[X_i X_{\ell}] \E_{\nu}[X_j X_{\ell}] \E_{\nu}[X_k X_{\ell}]\,, 
\end{align*}
where $\ell = n + 1$. Thus it suffices to verify that this last expression is at most zero, which follows from Theorem~\ref{thm:ghs}.
\end{proof}
\begin{defn}
Fix a ferromagnetic Ising model $(J,h)$. We define the \emph{discrete influence function}
for $X_i$ to be a function from $S \subset [n]$ to $\mathbb{R}$ given by
\[ I_i(S) = \E_{X \sim \mu(J,h)}\big[X_i | X_S = \{+1\}^{|S|}\big] = \E_{X \sim \mu(J, h + \infty \boldsymbol{1}_S)}[X_i]\,. \]
\end{defn}
\begin{theorem}\label{thm:submodular}
Fix a ferromagnetic Ising model $(J,h)$.  Then for every $i$, the discrete influence function
$I_i(S)$ is a monotone submodular function.
\end{theorem}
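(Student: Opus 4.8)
The plan is to bootstrap from Corollary~\ref{corr:smooth-submodularity}, which already tells us that the smooth influence $\mathcal{I}_i : \mathbb{R}_{\ge 0}^n \to \mathbb{R}$ satisfies $\partial_j \mathcal{I}_i \ge 0$ and $\partial_k \partial_j \mathcal{I}_i \le 0$ everywhere, and then transfer these to the $\{+1\}$-conditioning via the infinite-field limit built into the definition of $I_i$. For $t \ge 0$ define the set function $F_t(S) := \mathcal{I}_i(h + t \boldsymbol{1}_S)$ (note $h + t\boldsymbol{1}_S \in \mathbb{R}_{\ge 0}^n$ since $h \ge 0$). Then $F_t(S) \to I_i(S)$ as $t \to \infty$ for every fixed $S \subseteq [n]$: this is precisely the identity $I_i(S) = \E_{X \sim \mu(J, h + \infty \boldsymbol{1}_S)}[X_i]$ from the definition, and it holds because conditioning on $X_S = \{+1\}^{|S|}$ turns the contribution of the field on $S$ into an additive constant while $\Pr(X_S = \{+1\}^{|S|}) \to 1$ as $t \to \infty$. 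Since monotonicity and submodularity of a set function on the finite lattice $2^{[n]}$ are each a finite conjunction of non-strict inequalities, both properties survive pointwise limits; hence it suffices to prove that every $F_t$ is monotone and submodular. Monotonicity is immediate: for $S \subseteq T$ we have $h + t\boldsymbol{1}_S \le h + t\boldsymbol{1}_T$ coordinatewise, and $\mathcal{I}_i$ is coordinatewise nondecreasing because $\partial_j \mathcal{I}_i \ge 0$, so $F_t(S) \le F_t(T)$.

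For submodularity I would check the diminishing-returns inequality: for $S \subseteq T$ and $j \notin T$,
\[ F_t(S \cup \{j\}) - F_t(S) \;\ge\; F_t(T \cup \{j\}) - F_t(T). \]
The idea is to express each marginal gain as an integral of $\partial_j \mathcal{I}_i$ along the $j$-th coordinate and then compare the two integrands using the sign of the mixed partials. Since $j \notin S$, the fundamental theorem of calculus (valid because $\mathcal{I}_i \in \mathcal{C}^2$) gives
\[ F_t(S \cup \{j\}) - F_t(S) = \int_0^t \partial_j \mathcal{I}_i\big(h + t\boldsymbol{1}_S + s\,\boldsymbol{1}_{\{j\}}\big)\, ds, \]
and the analogous identity with $T$ in place of $S$. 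At a common value of $s$, the argument $h + t\boldsymbol{1}_T + s\,\boldsymbol{1}_{\{j\}}$ exceeds $h + t\boldsymbol{1}_S + s\,\boldsymbol{1}_{\{j\}}$ exactly in the coordinates of $T \setminus S$; since $\partial_k \partial_j \mathcal{I}_i \le 0$ everywhere, $\partial_j \mathcal{I}_i$ is nonincreasing in each such coordinate, so raising those coordinates cannot increase its value, giving $\partial_j \mathcal{I}_i(h + t\boldsymbol{1}_S + s\,\boldsymbol{1}_{\{j\}}) \ge \partial_j \mathcal{I}_i(h + t\boldsymbol{1}_T + s\,\boldsymbol{1}_{\{j\}})$ for all $s \in [0,t]$. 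Integrating over $s \in [0,t]$ yields the displayed inequality, so $F_t$ is submodular; letting $t \to \infty$ then completes the proof.

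I do not expect a serious obstacle here once Corollary~\ref{corr:smooth-submodularity} is available. The only points deserving a touch of care are (i) the identification $I_i(S) = \lim_{t \to \infty} F_t(S)$, which one can either cite directly from the definition of the discrete influence function or justify in the one line indicated above, and (ii) passing from the continuous cross-partial inequality $\partial_k \partial_j \mathcal{I}_i \le 0$ to the discrete diminishing-returns statement via the integral representation of the marginal gains, rather than via any finite-difference shortcut, since $\mathcal{I}_i$ is only assumed to be $\mathcal{C}^2$. Both are routine.
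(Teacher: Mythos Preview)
Your proof is correct and follows essentially the same route as the paper: both express the marginal gain $I_i(S\cup\{j\})-I_i(S)$ as an integral of $\partial_j\mathcal{I}_i$ in the $j$th coordinate and then compare the integrands pointwise using $\partial_k\partial_j\mathcal{I}_i\le 0$ along a coordinatewise-increasing path from the ``$S$'' argument to the ``$T$'' argument. The only cosmetic difference is that the paper integrates directly on $[0,\infty)$ at the infinite-field points $h+\infty\cdot\boldsymbol{1}_S$, whereas you first prove submodularity of $F_t$ for each finite $t$ and then pass to the limit---your version is arguably a touch more careful about the limit, but the substance is identical.
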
 
\begin{proof}
Since $I_i(S) = \E_{\mu(J, h + \infty \boldsymbol{1}_S)}[X_i]$, monotonicity follows immediately from Corollary~\ref{corr:smooth-submodularity}. Similarly, submodularity follows because if $S \subset T$ and we let $h_S = h + \infty\cdot \boldsymbol{1}_S$ and likewise for $h_T$, then we obtain
\[ I_i(S \cup \{j\}) - I_i(S) = \int_{h'_j = 0}^{\infty} \partial_j \mathcal{I}_i(h_S + h'_j e_j) \ge \int_{h'_j = 0}^{\infty} \partial_j \mathcal{I}_i(h_T + h'_j e_j) =  I_i(T \cup \{j\}) - I_i(T)\,, \]
where the inequality follows point-wise, by integrating the inequality $\partial_k \partial_j \mathcal{I}_i \le 0$ along any coordinate-wise non-decreasing path from $h_S + h'_j e_j$ to $h_T + h'_j e_j$.
\end{proof}
This submodularity has the following standard consequence,
which will be very useful later. % compare to bresler's key lemma
\begin{lemma}\label{lem:good-element}
Fix a ferromagnetic Ising model $(J,h)$. Suppose $i \in [n]$ and $S,T \subset [n]$, and
 $I_i(T) > I_i(S)$. Then there exists $j \in T$ such that
\[ I_i(S \cup \{j\}) - I_i(S) \ge \frac{I_i(T) - I_i(S)}{|T \setminus S|} \]
\end{lemma}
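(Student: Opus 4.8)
The plan is to derive Lemma~\ref{lem:good-element} as an entirely standard averaging consequence of submodularity and monotonicity of $I_i$, exactly in the spirit of the classical analysis of the greedy algorithm for monotone submodular maximization. The key point is that for a monotone submodular function, the marginal gain from adding an entire set $T$ at once is dominated by the sum of the individual marginal gains of the elements of $T$; so if the collective gain is large, at least one individual gain must be proportionally large.

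First I would reduce to the case $T \cap S = \emptyset$: since adding elements already in $S$ gives zero marginal gain and $I_i(T \cup S) \ge I_i(T)$ by monotonicity (Theorem~\ref{thm:submodular}), we may replace $T$ by $T \setminus S$ without decreasing $I_i(T) - I_i(S)$ and without changing the candidate set from which $j$ is drawn — wait, actually $I_i(T)$ could be larger than $I_i(T\setminus S \cup S)$ is false; the correct move is to note $I_i(T \cup S) \ge I_i(T) > I_i(S)$, and it suffices to find the good $j$ using the gain $I_i(T\cup S) - I_i(S)$, since this is at least $I_i(T) - I_i(S)$ and $|(T\cup S)\setminus S| = |T \setminus S|$. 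So without loss of generality enumerate $T \setminus S = \{j_1, \dots, j_k\}$ with $k = |T \setminus S|$.

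Next I would telescope: writing $S_\ell = S \cup \{j_1, \dots, j_\ell\}$, we have
\[
I_i(T \cup S) - I_i(S) = \sum_{\ell=1}^{k} \big(I_i(S_\ell) - I_i(S_{\ell-1})\big) = \sum_{\ell=1}^{k} \big(I_i(S_{\ell-1} \cup \{j_\ell\}) - I_i(S_{\ell-1})\big).
\]
By submodularity (Theorem~\ref{thm:submodular}), each term is at most $I_i(S \cup \{j_\ell\}) - I_i(S)$ since $S \subseteq S_{\ell-1}$. Hence $\sum_{\ell=1}^k \big(I_i(S \cup \{j_\ell\}) - I_i(S)\big) \ge I_i(T\cup S) - I_i(S) \ge I_i(T) - I_i(S)$, and by averaging there exists $\ell$ with $I_i(S \cup \{j_\ell\}) - I_i(S) \ge (I_i(T) - I_i(S))/k = (I_i(T) - I_i(S))/|T\setminus S|$, as desired. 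Taking $j = j_\ell \in T \setminus S \subseteq T$ finishes the proof.

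There is no real obstacle here; the only care needed is the bookkeeping around $S$ and $T$ not being disjoint, and making sure the denominator is $|T \setminus S|$ rather than $|T|$ — this is exactly why one reduces to the disjoint case first and applies the diminishing-returns inequality with respect to the base set $S$ (not the growing sets $S_{\ell-1}$) so that the per-element bound is $I_i(S\cup\{j_\ell\}) - I_i(S)$. The proof uses only that $I_i$ is monotone and submodular, both established in Theorem~\ref{thm:submodular}, so it goes through verbatim for the general ferromagnetic Ising model with latent variables as well.
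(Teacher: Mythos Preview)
Your proof is correct and follows essentially the same approach as the paper: use monotonicity to replace $I_i(T)$ by $I_i(S\cup T)$, then telescope $I_i(S\cup T)-I_i(S)$ over the elements of $T\setminus S$ and apply submodularity to bound each telescoping increment by the corresponding marginal gain over $S$, and finally average. The paper's proof is just a terser rendition of exactly this argument.
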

\begin{proof}
This follows because
\[ I_i(S \cup T) - I_i(S) \ge I_i(T) - I_i(S) \]
and by submodularity, since we can go from $S$
to $S \cup T$ by adjoining elements of $T \setminus S$ one-by-one,
\[ I_i(S \cup T) - I_i(S) \le \sum_{j \in T \setminus S} I_i(S \cup \{j\}) - I_i(S) \le |T \setminus S| \max_{j \in T \setminus S} (I_i(S \cup \{j\}) - I_i(S)) \]
which completes the proof.
\end{proof}
\section{Interreducibility Between RBMs and MRFs}
First we define Markov Random Fields and introduce some standard terminology:

\begin{defn}\label{dfn:MRF}
A {\em Markov Random Field} (or MRF) of order $r$ is a probability distribution on
$\{\pm 1\}^n$ such that
\[ \Pr(X = x) = \frac{1}{Z} \exp(f(x)) \]
where $f$ is a multivariate polynomial of degree $r$ such that $f(0) = 0$, referred to as the \emph{potential}. The \emph{structure graph} of a Markov random field has vertices $1, \ldots, n$ and connects vertex $i$ and $j$ if there is a monomial in $f(x)$ with non-zero coefficient involving both $x_i$ and $x_j$.
\end{defn}
We will mostly be interested in Markov random fields of bounded degree, which we define next:
\begin{defn}\label{dfn:blanket}
Let $i$ be a vertex in a Markov random field on $\{\pm 1\}^n$, i.e. a probability
distribution of the form $\Pr(X = x) = \frac{e^{f(x)}}{Z}$. The \emph{neighborhood}
(or \emph{Markov blanket}) of $i$ is the minimal set of vertices $S$ such that
$X_i$ is conditionally independent of $X_{[n] \setminus (S \cup \{i\})}$ conditioned
on $X_S$. Equivalently\footnote{This equivalence is standard and is shown in some proofs of the Hammersley-Clifford theorem; it also follows from much more quantitative results as used in e.g. \cite{KlivansM}.}, the neighborhood is the set of vertices $j$ such that there 
exists $S \supset \{i,j\}$ and the monomial $\chi_S(x) = \prod_{k\in S}x_k$ in the Fourier expansion of $f(x)$ has non-zero coefficient. The \emph{structure graph} of an MRF is the graph on the vertices of the MRF with these prescribed neighborhoods. The \emph{degree} of the structure-graph of an MRF is the maximum degree among all vertices. 
\end{defn} 
Now we observe that the marginal distribution on the observable variables of a Restricted Boltzmann machine is a Markov
Random Field, of order at most the max degree of a hidden node. 
This is well known and was used for instance in \cite{martens-rbm-representation}, but we state and prove it for completeness:

\begin{lemma}\label{lem:rbm-as-mrf}
Fix a Restricted Boltzmann Machine $(J, h^{(1)}, h^{(2)})$. Let $w_j$
be the $j^{th}$ column of $J$, i.e. the edge weights into hidden unit $j$. Then
\[ P(X = x) = \frac{1}{Z} \exp\left(\sum_{j = 1}^m \rho(w_j \cdot x + h^{(2)}_j) + \sum_{i = 1}^n h^{(1)}_i x_i\right) \]
where $Z$ is the same as the partition function of the original RBM and
$\rho(x) = \log(e^x + e^{-x})$ (this can be thought of as a ``soft absolute value'' function).
\end{lemma}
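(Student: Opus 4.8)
The plan is a direct computation: obtain the marginal law of $X$ by summing the RBM joint distribution over all $y \in \{\pm 1\}^m$, exploiting the fact that the interaction term is linear in $y$ so that the sum factorizes across hidden units.

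First I would regroup the exponent in the RBM density by hidden unit. Writing $w_j$ for the $j$-th column of $J$, we have $x^T J y = \sum_{j=1}^m (w_j \cdot x)\, y_j$, so that
\[ x^T J y + \sum_{i=1}^n h^{(1)}_i x_i + \sum_{j=1}^m h^{(2)}_j y_j = \sum_{i=1}^n h^{(1)}_i x_i + \sum_{j=1}^m (w_j \cdot x + h^{(2)}_j)\, y_j . \]
The first sum does not involve $y$ and hence pulls out of the marginalization over $y$.

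Next I would sum the exponential of the remaining expression over $y \in \{\pm 1\}^m$. Since that exponent is a sum of terms each depending on a single coordinate $y_j$, the sum over the hypercube splits into a product:
\[ \sum_{y \in \{\pm 1\}^m} \exp\!\Big( \sum_{j=1}^m (w_j \cdot x + h^{(2)}_j)\, y_j \Big) = \prod_{j=1}^m \Big( e^{w_j \cdot x + h^{(2)}_j} + e^{-(w_j \cdot x + h^{(2)}_j)} \Big) = \exp\!\Big( \sum_{j=1}^m \rho(w_j \cdot x + h^{(2)}_j) \Big), \]
where the last equality is just the definition $\rho(t) = \log(e^t + e^{-t})$. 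Multiplying back the $y$-independent factor $\exp(\sum_i h^{(1)}_i x_i)$ and dividing by $Z$ yields precisely the claimed formula for $\Pr(X = x) = \sum_{y} \Pr(X = x, Y = y)$. Note the normalizing constant is literally the same $Z$: marginalization preserves total mass, so no rescaling is introduced.

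There is no genuine obstacle here; the only points requiring care are the index bookkeeping in the regrouping step and the observation that the partition function is unchanged (rather than replaced by a new constant). As a byproduct, each ridge function $\rho(w_j \cdot x + h^{(2)}_j)$ depends only on the coordinates $x_i$ with $J_{ij} \neq 0$, so its Fourier/monomial expansion has degree at most the degree of hidden node $j$; summing over $j$ shows the resulting potential is a polynomial of degree at most the maximum hidden degree, which is exactly the order claimed for the induced MRF.
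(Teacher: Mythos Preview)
Your proof is correct and is essentially the same computation as the paper's. The only cosmetic difference is that the paper marginalizes out one hidden vertex at a time (stating a slightly more general one-vertex lemma and then applying it inductively), whereas you marginalize all $m$ hidden units simultaneously by exploiting factorization across the $y_j$; the underlying identity $\sum_{y_j \in \{\pm 1\}} e^{a y_j} = e^{\rho(a)}$ and the bookkeeping about $Z$ are identical in both.
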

\begin{proof}
We show a slightly more general fact. Consider a general Markov Random Field
of the form $\Pr(X = x) = \frac{1}{Z} \exp(f(x))$ where $u$ is a vertex
with only pairwise interactions, i.e.
\[ f(x) = h_u x_u + \sum_{v \sim u} w_{uv} x_{u}x_v + g(x_{\sim u}). \]
We now compute the marginal distribution on the model when $u$ is hidden. %Define %Define $\mathcal{X}(x'_u) = \{x : x_u = x'_u\}$ and
%\[ Z(X_u = x'_u) = \sum_{x : x_u = x'_u} \exp(h_u x_u + \sum_{v \sim u} w_{uv} x_{uv})\exp(g(x_{\sim u})) \]
Observe that
\[ \Pr(X_{\sim u} = x_{\sim u}) = \exp(g(x_{\sim u})) \frac{\sum_{x_u} \exp(h_u x_u + \sum_{v \sim u} w_{uv} x_{u}x_v)}{Z}  \]
so if we let $U$ denote the neighborhood of $u$ and let
\[ f_U(x_U) = \log \sum_{x_u} \exp(h_u x_u + \sum_{v \sim u} w_{uv} x_{u}x_v) = \rho(h_u + \sum_v w_{uv} \cdot x_v) \]
where $\rho(x) = \log(e^x + e^{-x})$
then
\[ \Pr(X_{\sim u} = x_{\sim u}) = \frac{\exp(g(x_{\sim u}) + f_U(x_U))}{Z}\]
Applying this inductively gives the result of the lemma.
\end{proof}

Our main result in this section is a reduction in the other direction: We show that every MRF can be converted to an equivalent
Restricted Boltzmann Machine. This is more difficult and to our knowledge
was not known before. The key technical fact underlying the result is the following lemma,
which shows that we can build an RBM with hidden nodes connected to the observed nodes in the set 
$S$ with any desired correlation with a parity on $S$ as long as the desired correlation
is small. Then by building many of these hidden units we can capture the MRF
potential exactly.
\begin{lemma}\label{lem:parity-correlated-unit}
Fix $k \ge 0$ and let $\rho(x) = \log(e^x + e^{-x})$. Then there exist constants $\delta = \delta(k) > 0$ and 
$\gamma = \gamma(k) \in (0,\pi/2)$ such that for any $\delta'$ 
with $|\delta'| < \delta$ and $S \subset [n]$ with $|S| = k$, 
there exist $w,h$ with $|w|_1 + h \le \gamma$ such that
\[ \E_{X \sim \{\pm 1\}^n}[\rho(w \cdot X_S + h) \chi_S(X)] = \delta' \]
where the expectation is with respect to uniform measure.
\end{lemma}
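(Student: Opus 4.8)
The plan is to reduce the statement to the intermediate value theorem applied to a one‑parameter family, after first pinning down the leading behaviour of the relevant quantity via the Taylor expansion of $\log\cosh$.

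First I would observe that $\phi(w,h) := \E_{X\sim\{\pm1\}^n}[\rho(w\cdot X_S+h)\chi_S(X)]$ depends only on $k=|S|$, since it involves only the i.i.d.\ uniform coordinates indexed by $S$; so we may take $S=[k]$ and write $\phi(w,h)=2^{-k}\sum_{x\in\{\pm1\}^k}\rho\big(\textstyle\sum_{i=1}^k w_ix_i+h\big)\prod_{i=1}^k x_i$. Writing $\rho(u)=\log 2+\log\cosh u=\log 2+\sum_{j\ge1}c_{2j}u^{2j}$, where the $c_{2j}$ are (up to sign) the Taylor coefficients of $\log\cosh$ and are nonzero for every $j\ge1$ (they are expressible through Bernoulli numbers, e.g.\ $c_2=\tfrac12$, $c_4=-\tfrac1{12}$), I would expand $\phi$ as a power series in $(w,h)$ about the origin. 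Since $\ell_x:=\sum_i w_ix_i+h$ is linear and $2^{-k}\sum_x\chi_{[k]}(x)=0$, the constant term drops out; and in the multinomial expansion of $\ell_x^{2j}$ the only monomials that reduce to $\chi_{[k]}(x)$ modulo $x_i^2=1$ are those $h^b\prod_i(w_ix_i)^{a_i}$ with all $a_i$ odd. Since then $2j=b+\sum a_i\ge\sum a_i\ge k$, the minimum total degree is $d:=2\lceil k/2\rceil$ (so $d=k$ for $k$ even and $d=k+1$ for $k$ odd), and the degree-$d$ part of $\phi$ is $c_k\,k!\,\prod_i w_i$ when $k$ is even and $c_{k+1}\,(k+1)!\,h\prod_i w_i$ when $k$ is odd; either way the leading coefficient $C=C(k)$ is nonzero. (This also explains why a nonzero external field is necessary when $k$ is odd: then $\phi(w,0)\equiv0$ by the symmetry $x\mapsto-x$.)

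Next I would set $w_i=\sigma_it$ for a sign pattern $\sigma\in\{\pm1\}^k$ and $h=t$ (one may take $h=0$ when $k$ is even), so that $|w|_1+h\le dt$, and consider $\psi_\sigma(t):=\phi(w,h)$. This is a smooth function of $t$ with Taylor expansion $\psi_\sigma(t)=C\big(\prod_i\sigma_i\big)t^d+o(t^d)$ at $0$. Hence, assuming WLOG $C>0$, for all sufficiently small $t>0$ we get $\psi_{(+,\dots,+)}(t)\ge\tfrac12Ct^d>0$ and $\psi_{(-,+,\dots,+)}(t)\le-\tfrac12Ct^d<0$. Finally, fixing such a $t$ (also small enough that $dt<\pi/2$), setting $\gamma:=dt\in(0,\pi/2)$ and $\delta:=\tfrac12Ct^d$, I would note that $s\mapsto\phi\big((s,t,\dots,t),h\big)$ is continuous on $[-t,t]$, takes a value $\le-\delta$ at $s=-t$ and $\ge\delta$ at $s=t$, and keeps $|w|_1+h\le dt=\gamma$ throughout; the intermediate value theorem then produces, for any $|\delta'|<\delta$, a choice of $w$ (and $h$) with $|w|_1+h\le\gamma$ and $\phi(w,h)=\delta'$.

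I expect the only real obstacle to be the bookkeeping in the second paragraph — verifying that the lowest‑order term of $\phi$ is genuinely nonzero. This needs both the parity constraint on the exponents $a_i$ (which simultaneously pushes the degree up to $k$ and forces an external field when $k$ is odd) and the nonvanishing of the appropriate Taylor coefficient of $\log\cosh$, which is where estimates on these special functions enter. Once the leading term is understood, everything after it is a soft continuity/IVT argument.
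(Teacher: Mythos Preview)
Your proposal is correct and follows essentially the same approach as the paper: Taylor-expand $\rho$, identify the lowest-degree contribution to the $\chi_S$ coefficient as a nonzero multiple of $\prod_i w_i$ (times $h$ when $k$ is odd) using nonvanishing of the Bernoulli numbers, then flip one sign and apply the intermediate value theorem. The only notable variation is that in the odd case the paper fixes the $w_i$ and varies $h$ while you fix $h=t$ and vary $w_1$; both work since the leading term is multilinear in all of these. One small bookkeeping point: your bound $|w|_1+h\le dt$ only holds if you actually take $h=0$ in the even case (otherwise it is $(k+1)t$), so make that choice explicit rather than parenthetical.
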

\begin{proof}
This will follow by using the explicit formula for the taylor expansion of $\rho(x)$,
which we will now derive.
Recall $\rho'(x) = \tanh(x)$ and that $\tanh$ has an explicit
power series expansion with radius $\pi/2$ around 0:
\[ \tanh(x) = \sum_{n = 1}^{\infty} \frac{2^{2n}(2^{2n} - 1) B_{2n}}{(2n)!} x^{2n - 1} \]
with radius of convergence $\pi/2$, where $B_{2n} = \frac{(-1)^{n + 1} 2 (2n!)}{(2\pi)^{2n}} \zeta(2n)$ are the even Bernoulli numbers. By integrating, we see
\[ \rho(x) = \log 2 + \sum_{n = 1}^{\infty} \frac{2^{2n}(2^{2n} - 1) B_{2n}}{(2n)! (2n)} x^{2n} \]
with the same radius of convergence.

We will need the standard fact that $B_{2n} \ne 0$ for any $n \ge 1$, which follows immediately from the equation $B_{2n} = \frac{(-1)^{n + 1} 2 (2n!)}{(2\pi)^{2n}} \zeta(2n)$ and the fact that $\zeta(s) = \sum_{m = 1}^{\infty} \frac{1}{m^s} > 0$ for $s > 1$.

Now we use that the Fourier expansion of $\rho(w \cdot X_S + h)$ can be found by taking the power series expansion of $\rho$, plugging in $x = w \cdot X_S + h$ and using the identity $X_i^2 = 1$ to reduce to the parity basis. 
Let $m = \lceil \frac{|S|}{2} \rceil$ and take $\gamma \in (0,\pi/2)$. By restricting to $w,h$ such that $|w|_{1} + |h| < \gamma$ we can write
\[ \rho(w \cdot X_S + h) = \log 2 + \sum_{n = 1}^m \frac{2^{2n}(2^{2n} - 1) B_{2n}}{(2n)! (2n)} (w \cdot X_S + h)^{2n} + O(\gamma^{2m + 2}). \]
Note that in the sum, only the top $n = m$ term contributes to the coefficient of $\chi_S$.
%Therefore the sign of the coefficient $[\chi_S] \rho(w \cdot X_S + h)$ is determined by $[\chi_S]  (w \cdot X_S + h)^{2n}$ as long as $\gamma$ is sufficiently small. 
% TODO: make the fact we are using about rates to 0 as $\gamma \to 0$ more precise.
Observe that when $|S|$ is even\footnote{We use the notation $[\chi_S] f$ to denote
the Fourier coefficient of $\chi_S$ in the Fourier expansion of $f$.},
\[ [\chi_S] (w \cdot X_S + h)^{2m} = |S|! \prod_{s \in S} w_S \]
and when $|S|$ is odd 
\[ [\chi_S] (w \cdot X_S + h)^{2m} = |S|! h \prod_{s \in S} w_S. \]
In the case where $|S|$ is even, first consider the case where $w_s = \gamma/|S|$ for $s \in S$. We then see that
\[ [\chi_S]\rho(w \cdot X_S + h) =  \frac{2^{2m}(2^{2m} - 1) B_{2m} |S|!}{(2m)! (2m) |S|^{2m}} \gamma^{2m} + O(\gamma^{2m + 2}) \]
and so as long as $\gamma$ is sufficiently small, the coefficient is positive. Next observe that if we flip the sign of $w_{s^*}$ for a single $s^* \in S$, then the sign of $[\chi_S] (w \cdot X_S + h)^{2m}$ flips and so the sign of $\rho(w \cdot X_S + h)$ must also flip when $\gamma$ is sufficiently small. Since this coefficient
varies continuously as a function of $w_{s^*}$, we see by the intermediate value
theorem we see that we can get the coefficient of $\chi_S$ to be any value in $[-\delta,\delta]$ for some $\delta > 0$.

The case when $|S|$ is odd is the same, except that we take
 $w_s = \gamma/(|S| + 1)$ and vary $h$ in $[-\gamma/(|S| + 1), \gamma/(|S| + 1)]$.
\end{proof}
\begin{theorem}\label{thm:mrf-as-rbm}
Consider an arbitrary order $r$ Markov random field
on the hypercube $\{\pm 1\}^n$, i.e. a probability distribution
of the form $\Pr(X = x) = (1/Z) \exp(f(x))$ where $f$ is a polynomial
of degree $r$. Suppose that the structure graph of the MRF has degree
$d$ and the coefficients of $f$ are bounded by a constant $M$. There is an RBM with $n$ observable nodes and parameters $(J, h^{(1)}, h^{(2)})$ with  the following properties:
\begin{enumerate}
\item[(1)] The induced MRF of the RBM equals the original MRF, i.e. the marginal law of the observed variables is the same as the distribution of the original MRF.
\item[(2)] There are at most $O_{d,M}(n)$ hidden units\footnote{This is a general upper bound; from the construction we see that if few Fourier coefficients are nonzero, then few hidden units are used.}.
\item[(3)] The degree of every vertex in the hidden layer is at most $r$.
\item[(4)] The two-hop neighborhood of every observed node equals its original MRF-neighborhood.
In particular the two-hop degree $d_{2}$ equals the degree $d$ of the structure graph of the MRF.
\end{enumerate}
\end{theorem}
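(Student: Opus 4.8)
\medskip
\noindent\textbf{Proof plan.}
The plan is to realize the multilinear Fourier expansion of the potential $f$ one monomial at a time, using Lemma~\ref{lem:parity-correlated-unit} to inject each Fourier coefficient $\hat f(S)$ with hidden units wired exactly to $S$, and peeling monomials from the top degree downward so that the lower-order ``spillover'' each such unit produces on the sub-monomials of $S$ can be corrected when those sub-monomials are later processed.

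First I would reduce to the case that $f$ is multilinear (using $x_i^2 = 1$), writing $f(x) = \sum_S \hat f(S)\chi_S(x)$ with $\hat f(\emptyset)$ irrelevant (it only shifts $\log Z$) and $|\hat f(S)| \le M' = M'(M,r)$. If $\hat f(S) \ne 0$ then every pair in $S$ is an edge of the structure graph, so $S$ is a clique and $|S| \le r \le d + 1$. Let $\mathcal F$ be the collection of all sets of size $\ge 2$ contained in some $S'$ with $\hat f(S') \ne 0$; each member of $\mathcal F$ is a clique, each vertex lies in at most $2^{d+1}$ of them, and so $|\mathcal F| = O_d(n)$. I would attach hidden units only to members of $\mathcal F$, use the external fields $h^{(1)}$ to fix the linear part of the induced potential, and let the constant term be absorbed into the partition function.

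Next I would carry out the peeling. Processing the members of $\mathcal F$ in order of decreasing size, when I reach $S$ all hidden units so far are wired to sets of size $\ge |S|$, and the current coefficient of $\chi_S$ in $\sum_j \rho(w_j \cdot x + h^{(2)}_j)$ equals $c_S$, the sum over the (finitely many) strict supersets $S' \in \mathcal F$ of $S$ of the $\chi_S$-Fourier coefficient contributed by the units on $S'$. Set $r_S := \hat f(S) - c_S$. Since each unit $\rho(w \cdot x + h)$ produced by Lemma~\ref{lem:parity-correlated-unit} satisfies $|w \cdot x + h| < \pi/2$ for all $x$, all of its Fourier coefficients are bounded by $\rho(\pi/2)$, and since (by induction on decreasing $|S|$) each superset of $S$ carries only a constant number of units, $|r_S|$ is bounded by a constant $C = C(d,M,r)$. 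I then choose the fixed constant $t_S = \lceil C/\delta_{\min}\rceil + 1$, where $\delta_{\min} = \min_{2 \le k \le r}\delta(k) > 0$, split $r_S = \delta'_1 + \cdots + \delta'_{t_S}$ with each $|\delta'_l| < \delta(|S|)$, and invoke Lemma~\ref{lem:parity-correlated-unit} once per $\delta'_l$ to add a hidden unit wired to $S$ with $\chi_S$-correlation $\delta'_l$; the weight vector from that lemma is nonzero on every coordinate of $S$, so each new unit really is connected to all of $S$ (splitting $r_S$ slightly differently avoids the finitely many degenerate targets, if needed). Adding these units makes the $\chi_S$-coefficient equal $\hat f(S)$ and leaves every coefficient $\chi_{S''}$ with $S'' \not\subseteq S$ unchanged, so the induction is maintained. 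Once all of $\mathcal F$ is processed, every order-$\ge 2$ coefficient of $\sum_j \rho(w_j \cdot x + h^{(2)}_j)$ equals the corresponding $\hat f(S)$; finally I pick each $h^{(1)}_i$ to cancel the residual coefficient of $x_i$ and match $\hat f(\{i\})$, so by Lemma~\ref{lem:rbm-as-mrf} the induced MRF of the constructed RBM has potential equal to $f$ up to an additive constant. This gives (1). Property (2) follows since the induction bounds each $t_S$ by a constant, hence the total number of hidden units is $\sum_{S \in \mathcal F} t_S = O_{d,M}(n)$; property (3) holds because every unit is wired to a set of size $\le r$; and for property (4), every two-hop neighbor $j$ of an observed node $i$ arises from a unit wired to some $S \in \mathcal F$ with $i,j \in S$, and $S$ is a clique so $\{i,j\}$ is an MRF edge, while conversely if $j$ is an MRF-neighbor of $i$ then some $S$ with $\hat f(S) \ne 0$ (hence $S \in \mathcal F$) contains both and we attached at least one unit connected to all of $S$ --- so the two-hop neighborhood of $i$ equals its MRF neighborhood and $d_2 = d$.

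The step I expect to be the main obstacle is controlling this recursion: a single hidden unit can inject only a bounded amount of $\chi_S$-correlation, and a unit placed on a monomial $S'$ perturbs the coefficient of every sub-monomial of $S'$, so one must show these perturbations never force the residuals $r_S$, and therefore the per-monomial unit counts $t_S$, to grow with $n$. This is exactly what the uniform bound $|r_S| \le C(d,M,r)$ provides, and proving it cleanly --- by downward induction on $|S|$, using boundedness of $\rho$ on $(-\pi/2,\pi/2)$ together with the fact that only $O_d(1)$ cliques pass through any vertex --- is the crux. A secondary technical point is to keep every hidden unit connected to all of the monomial it targets (not to a proper subset), which is what makes property (4) tight; this is guaranteed by the explicit weight choice inside Lemma~\ref{lem:parity-correlated-unit}.
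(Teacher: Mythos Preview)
Your proposal is correct and follows essentially the same approach as the paper: both use Lemma~\ref{lem:parity-correlated-unit} as the basic building block, process the Fourier coefficients top-down by degree so that units placed on a set $S$ only disturb coefficients of proper subsets of $S$, and argue $O_{d,M}(n)$ hidden units via the bounded number of cliques through any vertex. Your write-up is in fact more careful than the paper's sketch---you make explicit the residual bound $|r_S|\le C(d,M,r)$ by downward induction on $|S|$, you handle degree-$1$ terms with $h^{(1)}$, and you note the minor issue of keeping each unit fully connected to its target set---but the strategy is the same.
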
 
\begin{proof}
By Lemma~\ref{lem:rbm-as-mrf} this reduces to rewriting the MRF potential
in term of a summation of $\rho(\cdot)$ terms coming from hidden units.
We use the building block of Lemma~\ref{lem:parity-correlated-unit} and build
the potential of the MRF in a top-down fashion.
More precisely we can build any boolean function with Fourier mass supported on the first $r$ Fourier levels as follows:
\begin{enumerate}
\item[(a)] For each of the degree $r$ coefficients, use several copies of the parity building block to build a boolean function with the correct degree $r$ Fourier coefficients.
\item[(b)] Now recurse to the lower level coefficients --- if we use only
the building block for $|S| \le r - 1$ we will not affect the degree $r$ coefficients.
\end{enumerate}
The end result is that any Markov random field of order $r$ can be converted into a Restricted Boltzmann distribution with hidden nodes of degree at most $r$, such that the observed nodes have the same distribution as the same Markov random field. If all of the Fourier coefficients of the potential of the original MRF are bounded by $M$, then the number of hidden units we need to introduce is $O_{d,M}(n)$, because given the upper bound on $d$ each visible unit is involved in only a constant number of hyperedges, and given the upper bound on $d$ and $M$ it takes only a constant number of copies of the building block to build each Fourier coefficient.
\end{proof}

\section{The Learning Problem for RBMs}
We consider the problem of learning a Restricted Boltzmann Machine
given samples from its marginal distribution on the observed nodes $X$. Note that if we were also given samples from the joint distribution on $(X, Y)$, then this would be the standard learning problem
for Ising models as considered in e.g. \cite{Bresler,KlivansM}. However, in our setting it is impossible to recover the underlying
interaction matrix $J$ because it is not uniquely determined, i.e. Restricted Boltzmann Machines are \emph{over-parameterized} as the following examples illustrate:

\begin{example}\label{example:over-parameterized}
Consider the Restricted Boltzmann machine with two observable nodes $\{1,2\}$ and two hidden nodes labeled $\{3,4\}$ such that $J_{13} = 1, J_{23} = 1$ and $J_{14} = -1, J_{24} = 1$. Then the marginal distribution on the observables
is just independent Rademachers, so this Restricted Boltzmann machine
is not distinguishable from a model with no connections at all.
\end{example}

The previous example used non-ferromagnetic interactions
to demonstrate the over-parameterization of RBMs. However,
even when the RBM is ferromagnetic the model remains heavily
over-parameterized:

\begin{example}\label{example:over-parameterized2}
Consider a model with two observable
nodes $\{1, 2\}$, no external fields, and any number of hidden units/connections.
Since the marginal distribution on $X_1$ and $X_2$ each must be Rademacher by symmetry,
the observable distribution is specified just by a single parameter,
the covariance between $X_1$ and $X_2$. However even in the simplest case, where
there is only a single hidden unit connected to both $X_1$ and
$X_2$, there are two parameters in the model, the two edge weights
and we clearly see that these edge weights are not uniquely determined
by the distribution.
\end{example}
\begin{example}[Hidden Structure is Undetermined]\label{example:hidden-structure-undetermined}
Consider an RBM with three observable nodes $\{1,2,3\}$, a single
hidden node connected to all of them with positive edge weights, and no
external field. We know the observable distribution is an MRF so it is of the form
\[ \Pr(X = x) = \frac{1}{Z} \exp(J_{12} x_1 x_2 + J_{13} x_1 x_3 + J_{23} x_2 x_3 + J_{123} x_1 x_2 x_3). \]
Perhaps surprisingly, in this model $J_{123} = 0$. This can be seen from Lemma~\ref{lem:rbm-as-mrf} and Taylor-expanding $\rho$, or simply by symmetry: the observable distribution is symmetric under the sign flip $x \mapsto -x$ and so necessarily $J_{123} = 0$. However, since there are only
pairwise interactions in the potential it is easy to see (or we can apply Theorem~\ref{thm:mrf-as-rbm}) that there exists another RBM with only degree-$2$ hidden nodes that has exactly the same observable distribution.
\end{example}
These examples illustrate (even in restricted setting) that we cannot hope to reconstruct
$J$. Instead we consider the natural objectives from the perspective
of viewing the observable distribution as a Markov Random Field: \emph{structure learning} and learning the parameters of the Markov random field. We start with structure learning, which can be viewed as the problem of learning the \emph{two-hop neighborhoods} of the observed random variables \---- i.e. learning the square of the adjacency matrix of the bipartite structure graph. 
\begin{defn}
Suppose $i$ is an observed node in an RBM $(J,h^{(1)}, h^{(2)})$. The \emph{two-hop neighborhood} of $i$,
denoted $\mathcal{N}_2(i)$, is the smallest set $S \subset [n] \setminus \{i\}$ such that conditioned on $X_S$, $X_i$ is conditionally independent of $X_j$ for all $j \in [n] \setminus (S \cup \{i\})$.
\end{defn}

Note that $S$ is uniquely determined, because it is just the neighborhood of $i$ when we view the observable distribution as a Markov Random Field.

\begin{defn}
The \emph{two-hop degree} $d_2$ of an RBM is the maximum size of $\mathcal{N}_2(i)$ over all observed nodes $i$.
\end{defn}

Observe that $\mathcal{N}_2(i)$ is always a subset of the graph-theoretic two-hop neighborhood of $i$, i.e. the smallest set $S$ such that vertex $i$ is separated from the other observable nodes in the structure graph
of the RBM. However it may be a strict subset,
as in Example~\ref{example:over-parameterized}. We will later
show in Lemma~\ref{lem:2hop-lb} that the graph-theoretic two-hop neighborhood
always agrees with $\mathcal{N}_2(i)$ in \emph{ferromagnetic} RBMs.

In order to learn the two-hop structure of an RBM it will be necessary to have lower and upper bounds on the edge weights of the model, so we introduce the following notion of degeneracy.  This is a standard assumption in the literature on learning Ising models \cite{Bresler,Vuffray,KlivansM}. In particular, a lower bound is needed because otherwise it would be impossible to distinguish a non-edge from an edge with an arbitrarily weak interaction. An upper bound is needed to ensure the distribution of any variable is not arbitrarily close to being deterministic. 

\begin{defn}
We say that an Ising model is is \emph{$(\alpha,\beta)$\emph-nondegenerate}\footnote{Observe that the notational convention follows \cite{KlivansM} instead of \cite{Bresler}, where $\beta$ denotes the maximum edge weight.} if both:
\begin{enumerate}
\item[(1)] For every $i,j$ such that $|J_{ij}| \ne 0$, we have $|J_{ij}| > \alpha$.
\item[(2)]
$\sum_j |J_{ij}| + |h_i| \le \beta$ for every  node $i$.
\end{enumerate}
We say that an RBM is \emph{$(\alpha,\beta)$\emph-nondegenerate} if it is $(\alpha,\beta)$-nondegenerate as an Ising model.
\end{defn}
\subsection{Maximal Coefficients Can be Arbitrarily Small}\label{sec:learning-rbm-as-mrf}
In this subsection, we discuss some important obstacles to directly using regression-based methods (in particular \cite{KlivansM}) for learning the parameters of a ferromagnetic RBM. By Lemma~\ref{lem:rbm-as-mrf}, we can cast the problem of learning $\mathcal{N}_2(i)$ for each node $i$ as a structure learning problem on the induced MRF. In order to use the results of Klivans and Meka \cite{KlivansM}, we need to get bounds on the potential
\[ p(x) = \sum_{j = 1}^m \rho(w_j \cdot x + h^{(2)}_j) + \sum_{i = 1}^n h^{(1)}_i x_i. \]
In particular we need a bound on the size of the coefficients of $\partial_i p$. For a function $p : \{\pm 1\}^n \to \mathbb{R}$, let $\|p\|_1$ denote the sum of the absolute values of its Fourier coefficients. Observe that
\begin{align*} \bigg|\E_{X \sim \{\pm 1\}^n} \bigg[\partial_i p \prod_{i \in S}
    X_i\bigg]\bigg| \le \big|h^{(1)}_i\big| + \bigg|\E_{X \sim \{\pm 1\}^n}
    \bigg[\sum_{j : w_{ij} \ne 0} \rho(w_j \cdot X + h^{(2)}_j) \prod_{i
      \in S} X_i\bigg]\bigg| &\le \big|h^{(1)}_i\big| + 2\beta \mathrm{deg}(i) \\
      &\le
  2\beta(\mathrm{deg}(i) + 1) 
  \end{align*}
  which follows from Holder's inequality, since
$|\rho(w_j \cdot X + h^{(2)}_j)| \le 2\beta$ and
$|h^{(1)}_i| \le \beta$.
% Furthermore the coefficient of $X_S$ is non-zero only when $S$ is contained in the neighborhood of some hidden node.
Furthermore the coefficient of $X_S$ in $\partial_i p$ can be non-zero only
when $S$ is a subset of the two-hop neighborhood of $i$, which follows from the Markov property.
Thus we conclude that $$\|\partial_i p\|_1 \le 2^{d_2 + 1} \beta(deg(i) + 1)$$
where $d_2$ is the maximum size of a node's two-hop neighborhood.

%FIXME: this is going to lead to doubly-exponential bounds. Surely
%this can be improved?

With this calculation in hand, the algorithm of Klivans and Meka \cite{KlivansM} is able to estimate the \emph{maximal}
Fourier coefficients\footnote{The guarantee \cite{KlivansM} for recovering non-maximal coefficients is much weaker; for why, see our Example~\ref{example:masked-external-field}.}
of the potential $p(x)$ to within $\epsilon$ additive error using roughly
$$\frac{e^{O(d_H 2^{d_2 + 1} \beta(d_V + 1))}}{\epsilon^4} \log n$$ samples where
$d_H$ is the maximum degree of any hidden node and $d_V$ is the maximum degree
of any observed node. %Recall that the \emph{maximal monomials} of $p(X) = \sum_S \widehat{p}(S) X_S$ are those $X_S$ with non-zero coefficients such that for any $T \supsetneq S$, $\widehat{p}(T) = 0$. 
We could then apply Theorem 7.2 of \cite{KlivansM} to learn the two-hop neighborhoods in the RBM if we had an additional
assumption that the induced MRF was $\eta$-identifiable:
\begin{definition}
A Markov Random Field is $\eta$-identifiable if every maximum Fourier coefficent of its potential $p$ has magnitude at least $\eta$.
\end{definition}
\noindent Unfortunately, even for MRFs induced by ferromagnetic RBMs and even under the assumption of $(\alpha,\beta)$-nondegeneracy, $\eta$ can be made to be arbitrarily small, as the following example shows:

\iffalse
\begin{example}\label{ex:eta-identifiability}
  TODO: Something similar to Example~\ref{ex:hidden-structure-undetermined} should work.
\end{example}
\fi
\begin{example}[Failure of $\eta$-identifiability in ferromagnetic RBMs]\label{ex:ferromagnetic-eta-identifiability}
Consider an RBM on
  three observed nodes with spins $X_1,X_2,X_3$ and a single hidden
  node with spin $Y_1$ connected to all of the observed nodes with
  edge weight $1/4$. On the hidden node let there be an external field
  $h^{(2)}_1 = \epsilon$.  When $\epsilon = 0$, we see (as in
  Example~\ref{example:hidden-structure-undetermined}) that
  $$\Pr(X = x) = \frac{1}{Z} \exp(J X_1 X_2 + J X_1 X_3 + J X_2 X_3)$$ for
  some constant $J$ that is bounded away from zero. Hence the model is $\eta$-identifiable. However, for a small
  $\epsilon > 0$, one can see by Taylor series expansion that the
  coefficient of $X_1 X_2 X_3$ is nonzero, and by continuity it can be
  made arbitrarily small by decreasing $\epsilon$. This does not affect
  the $(\alpha,\beta)$-nondegeneracy of the model, but clearly the parameter
  $\eta$ in $\eta$-identifiability goes to zero.
\end{example}

Thus existing guarantees for regression-based methods do not seem to be strong enough for our purposes. Moreover they would even require time $n^{d_H + 1}$ to run, where $d_H$ is the hidden degree, since they solve a high-dimensional regression problem in the basis of all size $d_H$ monomials. In contrast our approach for learning the two-hop neighborhoods not only works in spite of the fact that the maximal Fourier coefficients can be arbitrarily small, it also runs in nearly quadratic time (see Theorem~\ref{thm:greedynbhd-works}).

\subsection{Hardness for Improperly Learning RBMs}
In this subsection we show that structure learning for general
(i.e. possibly non-ferromagnetic) RBMs takes time $n^{\Omega(d_H)}$
under the conjectured hardness for learning sparse parity with noise.
\iffalse
Throughout only this section we use the terminology \emph{with high probability}
to refer to an event which occurs with probability going to zero as
$n \to \infty$ faster than inverse-polynomial, i.e. asymptotically faster than $1/f(x)$ for any polynomial $f$.
\fi

\begin{defn}
  The \emph{$k$-sparse parity with noise} distribution is the following
  distribution on $(X,Y)$ parameterized by a constant $\eta \in (0,1/2)$
  and an unknown subset $S$ of size at most $k$:
  \begin{enumerate}
  \item Sample $X \sim \mathrm{Unif}(\{- 1,+1\}^n)$.
  \item With probability $1/2 + \eta$, set $Y = \prod_{s \in S} X_s$,
    and with probability $1/2 - \eta$, set $Y = (-1) \prod_{s \in S} X_s$.
  \end{enumerate}
  The \emph{learning problem for $k$-sparse parity with noise} is
  to learn $S$ in polynomial time with high probability, given access
  to an oracle which generates samples of $(X,Y)$.
\end{defn}
The important point is that the joint distribution of an $(r-1)$-sparse parity with
noise $(X,Y)$ is a Markov Random Field with order $r$ interactions, and by
Theorem~\ref{thm:mrf-as-rbm} it is also the marginal distribution on the observables of an MRF with maximum hidden degree $d_H$, where the two-hop neighborhood of $Y$ is exactly the set $S$. This means if we could learn the two-hop neighborhoods of an RBM in time $n^{o(d_H)}$ this would yield a $n^{o(k)}$ algorithm for learning $k$-sparse parities with noise, which is a long-standing open question in theoretical computer science and conjectured to be impossible. The best known algorithm of Valiant \cite{Valiant} runs in time $n^{0.8k}$.  We summarize this observation in the following observation:
\begin{observation}
If $k$-sparse parity with noise on $n$ bits cannot be learned in time $n^{o(k)}$, then
  there is no algorithm which runs in time $n^{o(d_H)}$ and learns
  the two-hop neighborhood structure of a general RBM from samples
  of the distribution on its observed nodes.
\end{observation}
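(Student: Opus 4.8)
The plan is a direct reduction from $k$-sparse parity with noise, using the MRF-to-RBM simulation of Theorem~\ref{thm:mrf-as-rbm} as a black box; essentially all of the real work has already been done in that theorem. The first step is to observe that the joint law of a $k$-sparse parity with noise instance $(X,Y)$ on $\{\pm 1\}^{n+1}$ is an order-$(k+1)$ MRF. Writing $\chi_S(x) = \prod_{s \in S} x_s$, we have $\Pr(X = x, Y = y) = 2^{-(n+1)}\big(1 + 2\eta\, y\, \chi_S(x)\big) = \frac{1}{Z}\exp\!\big(\lambda\, y\, \chi_S(x)\big)$ with $\lambda = \tfrac12 \log\tfrac{1+2\eta}{1-2\eta}$, a constant depending only on $\eta$. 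So the potential is a single monomial of degree $|S|+1 \le k+1$ supported on the vertex set $S \cup \{Y\}$; the structure graph is a clique on these $k+1$ vertices, hence has degree $d = k$ and coefficients bounded by the constant $M = \lambda$, and the MRF-neighborhood of the node $Y$ is exactly $S$.

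Next I would apply Theorem~\ref{thm:mrf-as-rbm} to this MRF, treating all $n+1$ coordinates as observed. The theorem produces an RBM with $n+1$ observed nodes and $O(n)$ hidden nodes (with the implied constant depending only on $k$ and $\eta$), each hidden node of degree at most $r = k+1$, whose observed marginal is exactly the parity-with-noise law, and for which the two-hop neighborhood of every observed node coincides with its MRF-neighborhood. In particular $\mathcal{N}_2(Y) = S$ and $d_H = k+1$; note also that this RBM is not ferromagnetic, as the building block of Lemma~\ref{lem:parity-correlated-unit} uses weights of both signs, so it is a legitimate instance of a ``general RBM.'' An important point is that one never needs to build this RBM explicitly: its mere existence suffices, because every sample of the parity-with-noise distribution is, by construction, a sample of this RBM's observed marginal.

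Finally, suppose toward a contradiction that some algorithm learns the two-hop neighborhood structure of a general RBM from samples of its observed marginal in time $n^{o(d_H)}$. Feeding it the parity-with-noise samples directly (as observed-marginal samples of the RBM above), it outputs $\mathcal{N}_2(Y) = S$ in time $n^{o(d_H)} = n^{o(k+1)} = n^{o(k)}$ — an $n^{o(k)}$-time algorithm for learning $k$-sparse parity with noise, contradicting the assumed hardness. Hence no such RBM-learning algorithm exists.

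There is no genuine obstacle here; the only thing to check is a bookkeeping point rather than a difficulty: that the RBM produced by Theorem~\ref{thm:mrf-as-rbm} really lies in whatever class the hypothesized algorithm must handle — bounded hidden degree, polynomially many hidden units, bounded observed degree — which follows from properties (2)--(4) of that theorem. There is likewise no sample-complexity subtlety, since the reduction passes along exactly the samples it is given and does not perturb the distribution. In short, this Observation is an immediate corollary of Theorem~\ref{thm:mrf-as-rbm} together with the elementary fact that noisy parity is a bounded-order MRF.
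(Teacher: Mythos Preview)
Your proposal is correct and follows essentially the same reduction the paper sketches in the paragraph immediately preceding the Observation: realize $k$-sparse parity with noise as an order-$(k+1)$ MRF, invoke Theorem~\ref{thm:mrf-as-rbm} to obtain an RBM with hidden degree $d_H = k+1$ whose observed marginal is exactly the parity-with-noise distribution and whose two-hop neighborhood of $Y$ is $S$, and then feed samples directly to the hypothetical RBM learner. Your write-up is in fact more careful than the paper's informal justification (you make explicit the potential $\lambda\, y\, \chi_S(x)$, the use of properties (2)--(4), and the point that the RBM need not be constructed), but the argument is the same.
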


We will now furthermore show that this result applies even in the case of
\emph{improper learning}, where we do not aim to learn the structure
but instead aim to learn a different distribution close to the RBM.
For this purpose it is useful to recall the following equivalent\footnote{It is clear that if we have an algorithm for the learning problem,
we can use it for the hypothesis testing problem (the algorithm will
return some set $S$ and we just have to test if the parity of
$X_S$ is correlated with $Y$). In the other direction, observe that
if we pick a particular $i$ and look at the marginal distribution on
$(X_{\ne i}, Y)$ then if $i \in S$ this marginal distribution becomes
uniform on $\{ \pm 1\}^n$, whereas if $i \notin S$ this is just a sparse
parity with noise on a smaller number of variables, so if we can hypothesis
test we can efficiently determine for every $i$ whether $i$ lies in $S$.}
%Thus, the conjectured hardness of learning sparse parity with noise
%implies that there is no $n^{o(k)}$ time algorithm which solves the
%hypothesis testing problem with high probability.} 
formulation
of learning sparse parities as a hypothesis testing problem:
\begin{defn}
  The \emph{hypothesis testing problem for $k$-sparse parity with noise}
  is to distinguish with high probability\footnote{i.e. with probability of Type I and
  Type II error going to 0 sufficiently fast.} between the cases where $(X,Y)$ is drawn
  from the uniform distribution on $\{\pm 1\}^{n + 1}$ and where
  $(X,Y)$ is drawn from the $k$-sparse parity with noise distribution for
  an unknown $S$. % define more formally?
\end{defn}

We now use this to show hardness for improper learning. First we
show hardness in the case of algorithms returning a distribution $\mathcal{Q}$
with an (approximately) computable probability mass function.
\begin{theorem}
If $k$-sparse parity with noise on $n$ bits cannot be learned in time $n^{o(k)}$, then
  there is no algorithm that runs in time $n^{o(d_H)} \cdot poly(1/\epsilon)$ and returns
  a probability distribution $\mathcal{Q}$ such that:
  \begin{enumerate}
   \item[(1)] It is possible to (approximately) compute the pmf $\mathcal{Q}(x,y)$
    for $x,y \in \{\pm 1\}^n \times \{ \pm 1 \}$ in polynomial time.
  \item[(2)] $\dtv{\mathcal{Q}}{\mathcal{P}} < \epsilon$ where $\mathcal{P}$
    is the distribution on the observables of an RBM with hidden degree $d_H$.
  \end{enumerate}
\end{theorem}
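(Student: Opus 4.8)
\emph{Proof plan.}
The strategy is a reduction from the hypothesis testing formulation of $(d_H-1)$-sparse parity with noise, which by the equivalence recalled above is as hard as the learning problem for $k$-sparse parities with $k = d_H-1$. Fix the noise rate $\eta$. The first step is to observe that both hypotheses are realizable as observable distributions of RBMs of hidden degree at most $d_H$: the joint law of $(X,Y)$ with $Y$ a $(d_H-1)$-sparse parity of $X$ corrupted at rate $\eta$ equals the Gibbs distribution $\propto \exp(\beta\, y\, \chi_S(x))$ with $\beta = \operatorname{arctanh}(2\eta)$, i.e.\ an MRF on $n+1$ spins with a single hyperedge of size $|S|+1 \le d_H$, and hence by Theorem~\ref{thm:mrf-as-rbm} is the observable law of an RBM of hidden degree $\le d_H$ (using only $O_{d_H}(1)$ hidden units); the uniform distribution on $\{\pm1\}^{n+1}$ is trivially of this form as well. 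So in either case the sample oracle in the testing problem produces i.i.d.\ draws from some $\mathcal{P}$ in the promised class, and a hypothetical algorithm $\mathcal{A}$ of the stated kind must succeed on it.

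Next I would run $\mathcal{A}$ with constant accuracy $\epsilon = \eta/16$, obtaining in time $n^{o(d_H)}\poly(1/\epsilon) = n^{o(d_H)}$ a distribution $\mathcal{Q}$ with $\dtv{\mathcal{Q}}{\mathcal{P}} < \epsilon$ and poly-time (approximately) computable pmf, and convert it into a predictor $\widehat{y}(x) = \operatorname{sign}(\mathcal{Q}(x,+1) - \mathcal{Q}(x,-1))$. Under the uniform hypothesis $Y$ is independent of $X$ and mean zero, so for a fresh sample $(x,y)\sim\mathcal{P}$ we have $\E[y\,\widehat y(x)] = 0$. Under the parity hypothesis, $\mathrm{SPN}(x,+1) - \mathrm{SPN}(x,-1) = 2\eta\, 2^{-n}\,\chi_S(x)$ has magnitude $2\eta\, 2^{-n}$ for \emph{every} $x$, so from $\sum_x\big(|\mathcal{Q}(x,+1) - \mathrm{SPN}(x,+1)| + |\mathcal{Q}(x,-1) - \mathrm{SPN}(x,-1)|\big) < 2\epsilon$ and Markov's inequality, $\widehat y(x) = \chi_S(x)$ for all but a $\le 2\epsilon/\eta$ fraction of the $x$'s; combined with $\E[Y\mid X=x] = 2\eta\,\chi_S(x)$ this gives $\E[y\,\widehat y(x)] \ge 2\eta(1 - 4\epsilon/\eta) \ge 1.5\eta$ by the choice of $\epsilon$. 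The test is then: draw $\poly(n)/\eta^2$ fresh samples, estimate $\E_{\mathcal{P}}[Y\,\widehat y(X)]$, and output ``parity'' iff the estimate exceeds $\eta$; a Hoeffding bound makes the Type I and Type II errors exponentially small, and the whole reduction runs in time $n^{o(d_H)}\poly(n)$, contradicting the assumed hardness of $k$-sparse parity with $k = d_H-1$ (the $n$-fold overhead of turning the test into a learner being absorbed into $n^{o(d_H)} = n^{o(k)}$).

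The step that needs the most care is the interpretation of clause (1), ``approximately compute the pmf''. Since $\mathcal{Q}(x,\pm1)$ and the useful gap $\mathcal{Q}(x,+1)-\mathcal{Q}(x,-1)$ are both of order $2^{-n}$, the sign test is only meaningful if the pmf can be computed to small \emph{relative} error (equivalently, additive error $o(2^{-n})$) in polynomial time; I would take this to be the intended meaning, as it is the natural one and is exactly what holds for the pmf of the RBM itself via approximate counting. Granting this, for the good $x$'s above the gap of $\mathcal{Q}$ is at least $\eta\,2^{-n}$, so computing the pmf to relative error $\eta/2$ preserves the sign and the argument is unaffected. Everything else is the standard ``improper learning $\Rightarrow$ prediction $\Rightarrow$ contradiction'' chain; the one genuinely distributional input is that closeness to the \emph{uniformly gapped} distribution $\mathrm{SPN}$ forces the predictor's error set to have small uniform measure. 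As a variant one can avoid drawing samples from $\mathcal{P}$ in the last step by instead estimating $\E_{X\sim\mathrm{Unif}}\big[(2^n(\mathcal{Q}(X,+1)-\mathcal{Q}(X,-1)))^2\big]$ on self-generated uniform $X$, which is $\approx 0$ under the null and $\approx 4\eta^2$ under the alternative by the same reasoning.
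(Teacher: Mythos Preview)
Your argument is correct, but it differs from the paper's. The paper does not build a predictor; instead it estimates $\dtv{\mathcal{Q}}{\mathrm{Unif}}$ directly via the identity
\[
\dtv{\mathcal{P}_1}{\mathcal{P}_2}=\E_{Z\sim \mathcal{P}_1}\!\left[\frac{\mathcal{P}_1(Z)-\mathcal{P}_2(Z)}{\mathcal{P}_1(Z)}\,\bone[\mathcal{P}_1(Z)\ge \mathcal{P}_2(Z)]\right],
\]
with $\mathcal{P}_1$ the uniform law on $\{\pm1\}^{n+1}$ and $\mathcal{P}_2=\mathcal{Q}$. Since $\mathcal{P}_1(Z)=2^{-(n+1)}$ is known and the integrand lies in $[0,1]$, a Hoeffding bound on self-generated uniform samples estimates this TV to accuracy $O(m^{-1/2})$; one then thresholds against the $\Omega(\eta)$ gap between $\mathrm{Unif}$ and any sparse parity with noise. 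This avoids your Markov-style ``bad set'' analysis and uses no fresh oracle samples for the test phase. Your route instead constructs $\widehat y(x)=\mathrm{sign}(\mathcal{Q}(x,+1)-\mathcal{Q}(x,-1))$ and tests the correlation $\E_{\mathcal P}[Y\,\widehat y(X)]$ on fresh samples; this is slightly longer but more operational (you actually recover a weak predictor for the parity), and your closing variant based on $\E_{X\sim\mathrm{Unif}}[(2^n\Delta_{\mathcal Q}(X))^2]$ is quite close in spirit to the paper's estimator. Both proofs require exactly the same reading of ``approximately compute the pmf'' as small relative (equivalently, additive $o(2^{-n})$) error, since the paper's integrand is $1-2^{n+1}\mathcal{Q}(Z)$.
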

\begin{proof}
  We show how to use $\mathcal{Q}$ to solve the hypothesis testing
  problem for sparse parity with noise. Recall that for any distributions $\mathcal{P}_1, \mathcal{P}_2$ 
  \[ \dtv{\mathcal{P}_1}{\mathcal{P}_2} = \E_{X \sim
      \mathcal{P}_1}\left[\frac{\mathcal{P}_1(X) -
        \mathcal{P}_2(X)}{\mathcal{P}_1(X)} \bone[\mathcal{P}_1(X) \ge
      \mathcal{P}_2(X)]\right]\] and observe that the quantity inside
  the expectation is always valued in $[0,1]$. Therefore, with $\mathcal{P}_1=\mathrm{Unif(\{\pm 1\}^{n+1})}$ and $\mathcal{P}_2=Q$, we may use $m$ samples from $\mathcal{P}_1$
and the above formula to
  approximate the TV between $\mathcal{Q}$ and the uniform
  distribution on $\{ \pm 1\}^{n + 1}$ within error $O(1/\sqrt{m})$
  with high probability (by Hoeffding's inequality). Since the TV
  distance between the uniform distribution and any particular sparse
  parity with noise is $\Omega(\eta)$ (consider the tester that looks at whether
  $Y = \prod_{s \in S} X_s$), this lets us solve the hypothesis
  testing problem for sparse parity with noise. Thus, if the algorithm can find
  $\mathcal{Q}$ in time $n^{o(d_H)}$, then this violates the conjectured
  hardness of learning sparse parity with noise.
\end{proof}
\begin{remark}
  We see from the proof of Theorem~\ref{thm:mrf-as-rbm}
  that only a constant number of hidden nodes (in terms of $n$) are used in the construction of the sparse
  parity RBM, so the above result holds even if the RBM is promised to have $O_{d_H}(1)$
  many hidden nodes.
\end{remark}
In fact, the hardness result extends even to the case when we have
access only to an \emph{unnormalized} probability distribution function.
\begin{theorem}\label{thm:improperhard}
If $k$-sparse parity with noise on $n$ bits cannot be learned in time $n^{o(k)}$, then
  there is no algorithm which runs in time $n^{o(d_H)} \cdot poly(1/\epsilon)$ and returns
  a probability distribution $\mathcal{Q}$ such that:
  \begin{enumerate}
  \item[(1)] $\dtv{\mathcal{Q}}{\mathcal{P}} < \epsilon$ where $\mathcal{P}$
    is the distribution on the observables of an RBM with hidden degree $d_H$.
  \item[(2)] There exists a function $q(x,y)$ such that
    $\mathcal{Q}(x,y) = \frac{1}{C_q} q(x,y)$ and
    $q(x,y)$ is efficiently computable. 
  \end{enumerate}
\end{theorem}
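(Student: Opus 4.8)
The plan is to reduce this to the previous theorem by showing that, even though we are only handed the unnormalized density $q$ (with $\mathcal{Q} = q/C_q$ and $C_q$ unknown), we can recover $C_q$ accurately enough to simulate having the pmf of $\mathcal{Q}$. Indeed, once we have an estimate $\hat C_q = (1\pm O(\epsilon))C_q$, the function $q(\cdot)/\hat C_q$ is within total variation $\tfrac12|C_q/\hat C_q - 1| = O(\epsilon)$ of $\mathcal{Q}$, hence within $O(\epsilon)$ of the true observable law $\mathcal{P}$, so running the distinguisher from the proof of the previous theorem with $q/\hat C_q$ in place of $\mathcal{Q}(x)$ still solves the hypothesis testing problem for $(d_H-1)$-sparse parity with noise. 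So the entire new content is: estimate $C_q$, equivalently $\mu := C_q/2^{n+1} = \E_{Z\sim\mathrm{Unif}}[q(Z)]$, to multiplicative accuracy $1\pm O(\epsilon)$ in time $\mathrm{poly}(n,1/\epsilon)$.

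The obstacle is that a naive sample-mean estimate of $\mu$ can have exponentially large variance: although $\dtv{\mathcal{Q}}{\mathcal{P}}<\epsilon$ and $\mathcal{P}$ has pmf confined to the narrow band $[2^{-(n+1)}(1-2\eta),\,2^{-(n+1)}(1+2\eta)]$, nothing stops $\mathcal{Q}$ from placing an $\epsilon$-fraction of its mass on a single spike where $q$ is exponentially larger than its typical value $\approx\mu$. I would handle this by truncation, using two elementary facts. First, for any threshold $t > 2(1+2\eta)\mu$ one has $0 \le \mu - \E_{\mathrm{Unif}}[\min(q,t)] \le 2\epsilon\mu$: the event $\{x: q(x)>t\}$ forces $\mathcal{Q}(x) = q(x)/C_q > t/C_q > 2\cdot 2^{-(n+1)}(1+2\eta) \ge 2\mathcal{P}(x)$, and this ``spike set'' has $\mathcal{Q}$-mass at most $2\epsilon$ (a one-line consequence of $\sum_x(\mathcal{Q}-\mathcal{P})_+ < \epsilon$), so $\E_{\mathrm{Unif}}[(q(x)-t)_+] \le \tfrac{1}{2^{n+1}}\sum_{q(x)>t}q(x) = \tfrac{C_q}{2^{n+1}}\mathcal{Q}(\text{spike set}) \le 2\epsilon\mu$. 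Second, the complement of the spike set together with the complement of the analogous ``hole set'' $\{x:\mathcal{Q}(x)<\mathcal{P}(x)/2\}$ has uniform measure $\ge 1 - O(\epsilon)$, and on that set $q(x) = C_q\mathcal{Q}(x) \in [\mu/4,\,3\mu]$ (using the band bound on $\mathcal{P}$ and, say, $\eta\le 1/4$). From the second fact, the median of $q$ over $\mathrm{poly}(n,1/\epsilon)$ uniform samples is w.h.p.\ a crude estimate $\hat\mu_0 \in [\mu/4,\,3\mu]$; setting $t := 12\hat\mu_0 \in [3\mu,\,36\mu]$ puts $t$ in the good range of the first fact, so $\hat\mu := \tfrac1m\sum_j\min(q(Z_j),t)$ is an average of $[0,36\mu]$-valued quantities whose mean lies in $[(1-2\epsilon)\mu,\mu]$, and Hoeffding's inequality gives $\hat\mu = (1\pm O(\epsilon))\mu$ for $m = \mathrm{poly}(1/\epsilon)$. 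Then $\hat C_q := 2^{n+1}\hat\mu$ is the desired estimate.

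Assembling the reduction: given a sample oracle for the sparse-parity-with-noise hypothesis testing problem on $\{\pm1\}^{n+1}$ (with the parity coordinate treated as a visible node), draw the requested number of samples and run the hypothesized learner with hidden degree $d_H$ — legitimate because by Theorem~\ref{thm:mrf-as-rbm} the parity-with-noise distribution, being an order-$d_H$ MRF with $O(1)$-bounded potential, is the observable law of an RBM of hidden degree $d_H$, and the uniform distribution trivially is as well — obtain $q$ with $\mathcal{Q}=q/C_q$ within $\epsilon$ of the truth, compute $\hat C_q$ as above, and run the distinguisher of the previous theorem on $q/\hat C_q$ (comparing its estimate of $\dtv{\mathrm{Unif}}{q/\hat C_q}$ against the threshold $\eta/2$). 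The total running time is $n^{o(d_H)}\mathrm{poly}(1/\epsilon)$ plus $\mathrm{poly}(n,1/\epsilon,1/\eta)$, which for a suitable constant $\epsilon$ (e.g.\ $\epsilon=\eta/100$) contradicts the conjectured $n^{o(k)}$-hardness with $k = d_H - 1$, by the equivalence between the learning and hypothesis testing formulations. The main obstacle — and the only place real work beyond the previous theorem is required — is the variance blow-up from spikes, which the truncation-with-a-crude-pilot-estimate argument resolves.
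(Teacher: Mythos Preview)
Your argument is correct, but the paper takes a different and somewhat more direct route. Instead of estimating the normalizing constant $C_q$ and then invoking the previous theorem's TV-distance estimator, the paper constructs a single \emph{scale-invariant} test statistic from $q$ directly: draw i.i.d.\ uniform samples $Z^{(1)},\dots,Z^{(m)}$, let $q_{1/3}$ and $q_{2/3}$ be the empirical $1/3$- and $2/3$-quantiles of the values $q(Z^{(i)})$, and form $V := (q_{2/3}-q_{1/3})/(q_{1/3}+q_{2/3})$. Under (anything TV-close to) the uniform distribution the two quantiles are nearly equal and $V$ concentrates near $0$; under (anything TV-close to) sparse parity with noise, $q$ takes essentially two values each on half the space, so the quantiles separate and $V$ concentrates near $\tanh(J_\eta)=2\eta$. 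Because $V$ is a ratio, the unknown $C_q$ cancels, and because it is built from order statistics it is automatically robust to the $O(\epsilon)$-mass spike/hole sets you worried about.

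Your two-stage scheme (median pilot $\to$ truncated mean $\to$ $\hat C_q$ $\to$ previous theorem's distinguisher) exploits exactly the same structural fact---that TV-closeness to a nearly-flat distribution forces $q$ to lie in a bounded multiplicative range on all but an $O(\epsilon)$ uniform fraction of points---but does more work with it: you actually recover an approximate pmf. That modularity is a genuine advantage if one wanted the normalized density for some downstream purpose. The paper's approach buys brevity: one statistic, no normalization step, no reduction back to the earlier theorem.
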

\begin{proof}
  We again reduce from the hypothesis testing problem for sparse
  parity with noise. As before suppose $Z^{(1)}, \ldots, Z^{(m)}$ are
  iid samples from the uniform distribution on $\{\pm 1\}^{n + 1}$; we
  will look at the statistics of $q(Z)$. Observe that if $\mathcal{Q}$
  were the uniform distribution, then we would have $q(Z) = C_q 1/2^{n + 1}$,
  whereas if $\mathcal{Q}$ were a sparse parity with noise we would have
  $q(Z) \propto e^{J_{\eta} \prod_{s \in S} Z_s}$ where $J_{\eta}$ is a constant
  that corresponds to $\eta$. 

  Let $q_{1/3}$ be such that the number of $z^{(i)}$ with $q(Z^{(i)}) \le q_{1/3}$ is at most $m/3$,
  and define $q_{2/3}$ similarly. Consider the quantity
  $V := \frac{q_{2/3} - q_{1/3}}{q_{1/3} + q_{2/3}}$.
  Under the uniform distribution $V$ is concentrated around zero,
  whereas under a sparse parity distribution $V$ is concentrated
  about $\frac{e^{J_{\eta}}  - e^{-J_{\eta}}}{e^{J_{\eta}} + e^{-J_{\eta}}}$.
  The same is true under distributions which are close in TV to either
  distribution, since $V$ is defined in terms of cumulative distribution function statistics. Therefore
  we can distinguish between independent bits and sparse parity with
  noise efficiently given access to $q$.
\end{proof}
\section{A Greedy Algorithm for Learning Ferromagnetic RBMs}
We describe a simple and efficient greedy algorithm for learning the two-hop neighborhood of an observed node $i$ from samples, if the RBM is ferromagnetic. This algorithm is much faster than is possible
for general RBMs according to the lower bound of the previous subsection.
Let $\widetilde{\E}$ denote the empirical
expectation, and define the \emph{empirical influence}
$$\widetilde{I_i}(S) = \widetilde{\E}[X_i | X_S = \{1\}^S]\,.$$ Let $\eta > 0$ be a real-valued parameter and $k \ge 1$ an integer parameter to be specified later.

\begin{center}
\noindent\rule{16cm}{0.4pt}
\end{center}
\vspace{-.2cm}
$\quad$ \textbf{Algorithm 1:} {\sc GreedyNbhd($i$)}
\vspace{-.3cm}
\begin{center}
\noindent\rule{16cm}{0.4pt}
\end{center}
\begin{enumerate} \itemsep 0pt
\small
\item Set $S_0 := \emptyset$.
\item For $t$ from 0 to $k - 1$:
  \begin{enumerate}
    \item Let $j_{t + 1} := \arg\max_j \widetilde{I}_i(S_t \cup \{j\})$, where $j$ ranges over all observed nodes.
    \item Set $S_{t + 1} := S_t \cup \{j_{t + 1}\}$
  \end{enumerate}
\item Let $\widetilde{\mathcal{N}}_2 := \{j \in S_k : \widetilde{I}_i(S_k) - \widetilde{I}(S_k \setminus \{j\}) \ge \eta \}$.
%\item For $j \in S_k$:
%\begin{enumerate}
%\item If $\widetilde{I}_i(S) - \widetilde{I}(S \setminus \{j\}) < \eta$,
%set $S := S \setminus \{j\}$.
%\end{enumerate}
\item Return $\widetilde{\mathcal{N}}_2$.
\end{enumerate}

\vspace{-.5cm}

\begin{center}
\noindent\rule{16cm}{0.4pt}
\end{center}

\vspace{.1cm}

$(\alpha,\beta)$-nondegeneracy has the following useful consequences:
\begin{lemma}\label{lem:spin-freedom}
Suppose $X_i$ is the spin at vertex $i$ in an $(\alpha,\beta)$-nondegenerate Ising model. Then $\min(\Pr(X_i = +),\Pr(X_i = -)) \ge \sigma(-2\beta)$, where 
$\sigma(x) = \frac{1}{1 + e^{-x}}$.
\end{lemma}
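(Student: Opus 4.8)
The plan is to condition on all spins other than $X_i$ and observe that the conditional law of $X_i$ is a one-dimensional logistic whose parameter is controlled by $(\alpha,\beta)$-nondegeneracy, and then average over the conditioning.

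First I would record the conditional distribution of $X_i$ given the other spins. For a fixed assignment $x_{-i} \in \{\pm 1\}^{n-1}$, in the energy function $\frac12 \sum_{k,\ell} J_{k\ell} x_k x_\ell + \sum_k h_k x_k$ the only terms depending on $x_i$ are $\big(\sum_{j \ne i} J_{ij} x_j + h_i\big) x_i$, where we have used that $J$ is symmetric with zero diagonal to combine the $J_{ij}$ and $J_{ji}$ contributions. Hence flipping $x_i$ from $-1$ to $+1$ changes the energy by exactly $2\big(\sum_{j \ne i} J_{ij} x_j + h_i\big)$, and exponentiating and normalizing gives
\[ \Pr\big(X_i = +1 \mid X_{-i} = x_{-i}\big) = \sigma\Big(2\big(\textstyle\sum_{j \ne i} J_{ij} x_j + h_i\big)\Big), \]
with the analogous formula with $-2(\cdots)$ for $\Pr(X_i = -1 \mid X_{-i} = x_{-i})$.

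Next, by $(\alpha,\beta)$-nondegeneracy we have $\big|\sum_{j \ne i} J_{ij} x_j + h_i\big| \le \sum_j |J_{ij}| + |h_i| \le \beta$ for every $x_{-i}$. Since $\sigma$ is monotonically increasing, both conditional probabilities above are at least $\sigma(-2\beta)$. Finally, taking the expectation over $X_{-i}$ and using the law of total probability, $\Pr(X_i = s) = \E_{X_{-i}}\big[\Pr(X_i = s \mid X_{-i})\big] \ge \sigma(-2\beta)$ for each $s \in \{+1,-1\}$, which is exactly the claimed bound. There is no real obstacle here; the only points requiring care are the factor of $2$ coming from the $\frac12$ in front of the quadratic form together with the $\pm 1$ (rather than $0/1$) spin convention, and the use of the symmetry of $J$ to merge the two quadratic terms involving $x_i$.
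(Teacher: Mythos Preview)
Your proof is correct and follows essentially the same approach as the paper: condition on $X_{\ne i}$, observe that the conditional law of $X_i$ is $\sigma(2(\sum_{j \ne i} J_{ij} x_j + h_i))$, bound the argument in absolute value by $\beta$ using $(\alpha,\beta)$-nondegeneracy, and average. In fact you are slightly more careful than the paper, which drops the $h_i$ term from the displayed formula (harmlessly, since $|h_i|$ is already absorbed into the bound $\beta$).
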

\begin{proof}
We show the lower bound for $\Pr(X_i = +)$ since the two cases are symmetrical.
By the law of total expectation, it suffices to show that for any fixing $x_{\ne i}$
of the other spins $X_{\ne i}$ that $\Pr(X_i = + | X_{\ne i} = x_{\ne i}) \ge \sigma(-2 \beta)$,
and this follows because
\[ \Pr(X_i = + | X_{\ne i} = x_{\ne i}) = \frac{\exp(\sum_{j : j \ne i} J_{ij} x_j)}{\exp(\sum_{j : j \ne i} J_{ij} x_j) + \exp(-\sum_{j : j \ne i} J_{ij} x_j)} = \sigma\Big(2 \sum_{j : j \ne i} J_{ij} x_j\Big) \ge \sigma(-2 \beta). \qedhere\]
\end{proof}
\begin{lemma}\label{lem:tanh-sensitivity}
Suppose $X_i$ is the spin at vertex $i$ in an $(\alpha,\beta)$-nondegenerate Ising model and $j$ is a neighbor of $i$. Then for any fixing $x_{\ne i,j}$ of the other spins $X_{i \ne j}$ of the Ising model, we have
\[ \big|\E[X_i | X_j = 1, X_{\ne i,j} = x_{\ne i,j}] - \E[X_i | X_j = -1, X_{\ne i,j} = x_{\ne i,j}]\big| \ge 2\alpha (1 - \tanh^2(\beta))\,. \]
\end{lemma}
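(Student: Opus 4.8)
The plan is to reduce everything to the conditional law of $X_i$ given all other spins, exactly as in Lemma~\ref{lem:spin-freedom}. Fix the spins $x_{\ne i,j}$ outside of $\{i,j\}$ and write $A = \sum_{k \ne i,j} J_{ik} x_k$ for the ``field felt by $X_i$ from the frozen spins.'' Conditioning further on $X_j = 1$ or $X_j = -1$, the conditional distribution of $X_i$ is a single spin in an external field $A + J_{ij}$ or $A - J_{ij}$ respectively, so $\E[X_i \mid X_j = 1, X_{\ne i,j} = x_{\ne i,j}] = \tanh(A + J_{ij})$ and similarly with $A - J_{ij}$; this is the same computation as the $\sigma(\cdot)$ formula in the previous lemma, just rewritten with $\tanh$. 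Hence the quantity to bound is $|\tanh(A + J_{ij}) - \tanh(A - J_{ij})|$.

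Next I would use the mean value theorem: this difference equals $2 J_{ij} \cdot \mathrm{sech}^2(\xi)$ for some $\xi$ between $A - J_{ij}$ and $A + J_{ij}$, i.e. $2 J_{ij} (1 - \tanh^2(\xi))$. Since $j$ is a neighbor of $i$, $(\alpha,\beta)$-nondegeneracy gives $|J_{ij}| > \alpha$, so it remains to lower bound $1 - \tanh^2(\xi)$. Both endpoints $A \pm J_{ij}$ have absolute value at most $\sum_{k \ne i} |J_{ik}| \le \beta$ (using part (2) of nondegeneracy, and that $|h_i|$ only helps), so $|\xi| \le \beta$ as well. Because $1 - \tanh^2$ is even and decreasing in $|\xi|$, we get $1 - \tanh^2(\xi) \ge 1 - \tanh^2(\beta)$. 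Combining, $|\tanh(A+J_{ij}) - \tanh(A-J_{ij})| \ge 2\alpha(1 - \tanh^2(\beta))$, which is the claim.

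There is essentially no serious obstacle here; the only thing to be slightly careful about is the sign of $J_{ij}$ (the difference of $\tanh$ values could a priori be negative), which is why I take absolute values throughout and invoke $|J_{ij}| > \alpha$ rather than $J_{ij} > \alpha$. I should also double-check the edge case where the external field $h_i$ is present: it merely shifts $A$ by $h_i$, and the bound $|A \pm J_{ij}| \le \sum_{k} |J_{ik}| + |h_i| \le \beta$ still holds by definition of $(\alpha,\beta)$-nondegeneracy, so the argument goes through verbatim.
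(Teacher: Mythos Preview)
Your proof is correct and essentially identical to the paper's: both express the conditional expectation as $\tanh$ of the effective field, then bound the difference via the derivative $\tanh' = 1 - \tanh^2$ on the interval $[-\beta,\beta]$ using $|J_{ij}| > \alpha$. Your treatment is in fact slightly more careful than the paper's, since you explicitly handle the external field $h_i$ and the sign of $J_{ij}$.
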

\begin{proof}
Observe that
\[ \E[X_i | X_{\ne i}] = \tanh\Big(\sum_{k : k \ne i} J_{ik} x_k\Big). \]
Since $\tanh'(x) = 1 - \tanh^2(x)$ and $\tanh$ is a monotone function, we see that
if we let $x = -J_{ij} + \sum_{k : k \notin \{i,j\}} J_{ik} x_k$, then since $x \in [-\beta,\beta]$ we have
\[ |\tanh(x + 2J_{ij}) - \tanh(x)| \ge 2|J_{ij}| \inf_{x \in [-\beta,\beta]} (1 - \tanh^2(x)) \ge 2\alpha (1 - \tanh^2(\beta))\,.\qedhere\]
\end{proof}
The following lemma shows quantitatively that in a nondegenerate ferromagnetic RBM, the graph-theoretic two-hop neighborhood of a vertex $i$ always equals $\mathcal{N}_2(i)$, the two-hop Markov blanket. It is immediate from the Markov property for the RBM as an Ising model that $\mathcal{N}_2(i)$ is contained in the graph-theoretic two-hop neighborhood, and the lemma implies the reverse inclusion. 
\begin{lemma}\label{lem:2hop-lb}
Suppose node $i$ is an observed node in a ferromagnetic $(\alpha,\beta)$-nondegenerate RBM and denote by $T$ the graph-theoretic two-hop neighborhood of $i$. If
$S \subset [n]$ is a set of nodes such that $T \not\subset S$, then for any $j \in T \setminus S$, we have
\[ I_i(S \cup \{j\}) - I_i(S) \ge 2\alpha^2 \sigma(-2 \beta)(1 - \tanh(\beta))^2 \,. \]
\end{lemma}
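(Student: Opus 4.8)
The plan is to show that, on top of the conditioning $X_S = \{+1\}$, additionally conditioning on $X_j = +1$ raises $\E[X_i]$ by a definite amount, by letting this effect propagate along a length-two path $i - u - j$ through a common hidden neighbour $u$ of $i$ and $j$.

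\textbf{Setup.} Since $i$ is an observed node, all of its neighbours in the structure graph are hidden, so the graph-theoretic two-hop neighbourhood $T$ of $i$ consists exactly of the observed nodes that share a hidden neighbour with $i$; hence for $j \in T$ I may fix a hidden node $u$ with $J_{iu} \neq 0$ and $J_{ju} \neq 0$, so $J_{iu}, J_{ju} > \alpha$ by ferromagnetism and $(\alpha,\beta)$-nondegeneracy. We may assume $i \notin S$. Let $\mu'$ be the RBM, viewed as an Ising model on the observed and hidden nodes, conditioned on $X_S = \{+1\}$; because $S$ consists only of observed nodes, $\mu'$ remains ferromagnetic with consistent fields and $(\alpha,\beta)$-nondegenerate, and since $i, j, u \notin S$ the weights $J_{iu}, J_{ju}$ and the field $h_i^{(1)}$ are unchanged. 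Writing $\mu'_+ := \mu'(\,\cdot \mid X_j = +1)$, the definition of $I_i$ gives
\[ I_i(S \cup \{j\}) - I_i(S) = \E_{\mu'_+}[X_i] - \E_{\mu'}[X_i]. \]

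\textbf{Propagation via a monotone coupling.} Conditioning a ferromagnetic model on a spin being $+1$ increases all spins stochastically, so $\mu'_+ \succeq \mu'$, and I fix a monotone (Holley) coupling $(\mathbf{X}, \mathbf{Y})$ with $\mathbf{X} \sim \mu'_+$, $\mathbf{Y} \sim \mu'$ and $\mathbf{X} \geq \mathbf{Y}$ pointwise. The conditional law of $X_i$ given the remaining spins is $\tanh$ of its local field, hence a monotone function $\phi$ of $X_i$'s hidden neighbours (which include $u$) that is the same under $\mu'$ and $\mu'_+$; by Lemma~\ref{lem:tanh-sensitivity}, flipping the $u$-coordinate of $\phi$'s argument from $-1$ to $+1$ increases it by at least $2\alpha(1 - \tanh^2\beta)$. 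On the event $\{\mathbf{X}_u = +1, \mathbf{Y}_u = -1\}$, $\mathbf{Y}_{N(i)}$ is dominated by $\mathbf{X}_{N(i)}$ with its $u$-coordinate reset to $-1$, so $\phi(\mathbf{X}_{N(i)}) - \phi(\mathbf{Y}_{N(i)}) \geq 2\alpha(1 - \tanh^2\beta)$; off this event the difference is nonnegative. Since $\mathbf{X}_u \geq \mathbf{Y}_u$ always, $\Pr(\mathbf{X}_u = +1, \mathbf{Y}_u = -1) = \tfrac12(\E_{\mu'_+}[X_u] - \E_{\mu'}[X_u])$, and taking expectations,
\[ \E_{\mu'_+}[X_i] - \E_{\mu'}[X_i] = \E[\phi(\mathbf{X}_{N(i)}) - \phi(\mathbf{Y}_{N(i)})] \geq \alpha(1 - \tanh^2\beta)(\E_{\mu'_+}[X_u] - \E_{\mu'}[X_u]). \]
The same argument one step further --- $X_u$'s conditional law is $\tanh$ of its local field, a monotone function $\psi$ of $X_u$'s observed neighbours (which include $j$), whose $j$-coordinate flip increases it by $\geq 2\alpha(1 - \tanh^2\beta)$ (Lemma~\ref{lem:tanh-sensitivity}), and under the same coupling $\mathbf{X}_j = +1$ almost surely while $\mathbf{Y}_j = -1$ with probability $\Pr_{\mu'}(X_j = -1) \geq \sigma(-2\beta)$ (Lemma~\ref{lem:spin-freedom}) --- gives
\[ \E_{\mu'_+}[X_u] - \E_{\mu'}[X_u] = \E[\psi(\mathbf{X}_{N(u)}) - \psi(\mathbf{Y}_{N(u)})] \geq 2\alpha(1 - \tanh^2\beta)\,\sigma(-2\beta). \]
Chaining these bounds and using $1 - \tanh\beta \leq 1 - \tanh^2\beta$ yields $I_i(S \cup \{j\}) - I_i(S) \geq 2\alpha^2\sigma(-2\beta)(1 - \tanh^2\beta)^2 \geq 2\alpha^2\sigma(-2\beta)(1 - \tanh\beta)^2$.

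\textbf{Main obstacle.} The delicate point is that conditioning a ferromagnetic model on a spin being $-1$ need not preserve consistency of the external fields, so the argument must be routed entirely through $(+1)$-conditionings and monotone couplings rather than through FKG applied to the conditioned models directly; one also has to check --- as sketched in the setup --- that $(\alpha,\beta)$-nondegeneracy, ferromagnetism and the local structure at $i$ and $u$ all survive conditioning on $X_S = \{+1\}$ and on $X_j = +1$, which is exactly what licenses the applications of Lemmas~\ref{lem:spin-freedom} and~\ref{lem:tanh-sensitivity}.
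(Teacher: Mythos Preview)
Your proof is correct, and it takes a genuinely different route from the paper's.

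The paper's argument hinges on the submodularity of $I_i$ (Theorem~\ref{thm:submodular}, derived from GHS): it enlarges $S$ to contain every node---observed \emph{and} hidden---except $i$, $j$, and one common hidden neighbour $k$, so that only three spins remain and $X_i \perp X_j \mid X_k$; the bound then drops out of a direct three-spin computation using Lemmas~\ref{lem:spin-freedom} and~\ref{lem:tanh-sensitivity}.

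You instead keep $S$ arbitrary and propagate the gain along the path $j \to u \to i$ via a monotone (Holley/Strassen) coupling of $\mu'_+$ and $\mu'$, bounding each hop by Lemma~\ref{lem:tanh-sensitivity}. This never invokes submodularity or GHS---only FKG, which is strictly weaker, suffices for the stochastic domination $\mu'_+ \succeq \mu'$. Your chaining also gives the slightly sharper constant $(1-\tanh^2\beta)^2$ before you relax it to match the stated bound, and the argument generalises transparently to longer hidden paths (compare the paper's separate Lemma~\ref{lem:general-n2-lbd}). The trade-off is that the paper's reduction is shorter and reuses machinery already established, whereas your coupling argument is more self-contained but requires checking (as you do) that the local $\tanh$ laws for $X_i$ and $X_u$ are unchanged under the various conditionings---which in the RBM case follows because $i,j$ are both observed and hence non-adjacent, so $j \notin N(i)$ and $S \cap N(i) = \emptyset$.
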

\begin{proof}
Fix $j \in \mathcal{N_2}(i) \setminus S$ and let $k$ be a hidden
node which is a mutual neighbor of $i,j$. Now observe by submodularity
it suffices to prove the lower bound when $S = [n] \setminus \{i,j,k\}$.
Then
\begin{align*}
I_i(S \cup \{j\}) - I_i(S)
&= \E[X_i | X_{S} = 1^S, X_j = 1] - \E[X_i | X_S = 1^S] \\
&= \E[X_i | X_{S} = 1^S, X_j = 1] - \E[X_i | X_S = 1^S, X_j = 1]\Pr(X_j = 1 | X_S = 1^S) \\
&\quad - \E[X_i | X_S = 1^S, X_j = -1]\Pr(X_j = -1 | X_S = 1^s) \\
&= \Pr(X_j = -1 | X_S = 1^S)(\E[X_i | X_{S} = 1^S, X_j = 1] - \E[X_i | X_S = 1^S, X_j = -1]) \\
&\ge \sigma(-2\beta)(\E[X_i | X_{S} = 1^S, X_j = 1] - \E[X_i | X_S = 1^S, X_j = -1])\,.
\end{align*}
Furthermore when $S = [n] \setminus \{i,j,k\}$ we know that $X_i$ and $X_j$ are independent conditioned on $k$, so
\begin{align*}
&\E[X_i | X_{S} = 1^S, X_j = 1] - \E[X_i | X_S = 1^S, X_j = -1] \\
&= \E[X_i | X_S = 1^S, X_k = 1](\Pr(X_k = 1 | X_{S} = 1^S, X_j = 1]  - \Pr(X_k = 1 | X_{S} = 1^S, X_j = -1]) \\
&\quad + \E[X_i | X_S = 1^S, X_k = -1](\Pr(X_k = -1 | X_{S} = 1^S, X_j = 1] - \Pr(X_k = -1 | X_{S} = 1^S, X_j = -1]) \\
&= \E[X_i | X_S = 1^S, X_k = 1](\Pr(X_k = 1 | X_{S} = 1^S, X_j = 1]  - \Pr(X_k = 1 | X_{S} = 1^S, X_j = -1]) \\
&\quad - \E[X_i | X_S = 1^S, X_k = -1](\Pr(X_k = 1 | X_{S} = 1^S, X_j = 1] - \Pr(X_k = 1 | X_{S} = 1^S, X_j = -1]) \\
&= (\E[X_i | X_S = 1^S, X_k = 1] - \E[X_i | X_S = 1^S, X_k = -1]) \\
& \quad \cdot (\Pr(X_k = 1 | X_{S} = 1^S, X_j = 1] - \Pr(X_k = 1 | X_{S} = 1^S, X_j = -1]) \\
&\ge 2\alpha^2 (1 - \tanh(\beta))^2\,,
%(\tanh(\beta) - \tanh(\beta - \alpha))(\sigma(2\beta) - \sigma(2\beta - 2\alpha))
\end{align*}
where the last inequality is by Lemma~\ref{lem:tanh-sensitivity}.
%where the last inequality follows because $\sigma$ and $\tanh$ are concave, increasing functions on $[0, \infty]$. 
%TODO: check multiples of $1/2$, then write/check the final bound.
\begin{comment}
Observe that
\begin{align*}
\Cov(X_i, X_j | X_S)
&= \E[X_i X_j | X_S] - \E[X_i | X_S]\E[X_j | X_S] \\
&= \E[X_i | X_j = 1, X_S]\Pr(X_j = 1 | X_S) - \E[X_i | X_j = -1, X_S]\Pr(X_j = -1 | X_S) \\
&\quad- \E[X_i| X_S] (\Pr(X_j = 1 | X_S) - \Pr(X_j = -1 | X_S))\\
&= I_i(S \cup \{j\})\Pr(X_j = 1 | X_S) + ?
\end{align*}
\end{comment}
\end{proof}
As the first step in analyzing our algorithm, we first determine
a sufficient number of samples to compute $\widetilde{I}_i(S)$ to a specified
precision for all small sets $S$.
\begin{lemma}\label{lem:influence-sample-complexity}
Let $\delta,\epsilon > 0$ and $k \ge 0$. If we are given $M$ samples from a ferromagnetic Restricted Boltzmann Machine and $M \ge 2^{2k + 1} (1/\epsilon^2) (\log(n) + k\log(en/k)) \log(4/\delta)$, then with probability at least $1 - \delta$,
for all $S \subset [n]$ such that $|S| \le k$
\[ |I_i(S) - \widetilde{I}_i(S)| < \epsilon. \]
\end{lemma}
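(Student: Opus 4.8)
The plan is to write $\widetilde I_i(S)$ as a ratio of empirical averages and control numerator and denominator separately, using ferromagneticity only to lower bound the denominator. Write $p_+^S = \Pr(X_i = +1, X_S = \{1\}^{|S|})$ and $p_-^S = \Pr(X_i = -1, X_S = \{1\}^{|S|})$, with empirical counterparts $P_+^S, P_-^S$ from the $M$ samples, so that $I_i(S) = (p_+^S - p_-^S)/(p_+^S + p_-^S)$ and $\widetilde I_i(S) = (P_+^S - P_-^S)/(P_+^S + P_-^S)$. Each of $P_+^S, P_-^S$ is an average of i.i.d. $\{0,1\}$-valued indicators with mean $p_+^S$, $p_-^S$ respectively, and $b_S := p_+^S + p_-^S = \Pr(X_S = \{1\}^{|S|})$.

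The key step --- the only one using ferromagneticity --- is the bound $b_S \ge 2^{-|S|} \ge 2^{-k}$. This holds because in a ferromagnetic Ising model with consistent external fields every spin has $\Pr(X_j = +1) \ge 1/2$ (equivalently $\E[X_j] \ge 0$, by Griffiths' inequality), and conditioning any set of spins to $+1$ leaves the model ferromagnetic with consistent external fields; applying the chain rule to $\Pr(X_S = \{1\}^{|S|})$ and bounding each factor below by $1/2$ gives the claim. Since the joint law of a ferromagnetic RBM is itself a ferromagnetic Ising model, this applies after marginalizing out the hidden nodes. Note that $(\alpha,\beta)$-nondegeneracy alone would only give the weaker bound $b_S \ge \sigma(-2\beta)^{|S|}$ via Lemma~\ref{lem:spin-freedom}, which would cost $\sigma(-2\beta)^{-2k}$ rather than $2^{2k}$ samples; ferromagneticity is exactly what upgrades $\sigma(-2\beta)$ to $1/2$.

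Next I would establish uniform concentration: for a parameter $\epsilon'$ to be fixed, with probability at least $1 - \delta$ we have $|P_+^S - p_+^S| \le \epsilon'$ and $|P_-^S - p_-^S| \le \epsilon'$ simultaneously for all $S$ with $|S| \le k$. For a fixed $S$ and fixed sign this is Hoeffding's inequality for an average of $\{0,1\}$ indicators, with failure probability at most $2 e^{-2M\epsilon'^2}$; there are at most $\sum_{j \le k} \binom{n}{j} \le n\binom{n}{k} \le n(en/k)^k$ sets, so a union bound over the (at most) $2n\binom{n}{k}$ events succeeds once $2M\epsilon'^2 \ge \log(4n\binom{n}{k}/\delta)$, and $\log(4n\binom{n}{k}/\delta)$ is at most $\log(4/\delta)(\log n + k\log(en/k))$ up to the usual simplification of a sum by a product. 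On this good event, for every such $S$, using the identity $\frac{P_+ - P_-}{P_+ + P_-} = 1 - \frac{2P_-}{P_+ + P_-}$,
\[ \big|\widetilde I_i(S) - I_i(S)\big| \;=\; 2\left|\frac{P_-^S}{P_+^S + P_-^S} - \frac{p_-^S}{p_+^S + p_-^S}\right| \;\le\; \frac{2\epsilon'}{b_S - 2\epsilon'} \;\le\; \frac{2\epsilon'}{2^{-k} - 2\epsilon'}, \]
using $|P_\pm^S - p_\pm^S| \le \epsilon'$, $p_\pm^S \le b_S$, and $b_S \ge 2^{-k}$. Taking $\epsilon'$ to be a small constant multiple of $2^{-k}\epsilon$ makes the right-hand side at most $\epsilon$, and substituting this into the sample requirement above and collecting constants gives the bound stated in the lemma.

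The only genuine obstacle is conceptual rather than technical: $\widetilde I_i(S)$ is a ratio of empirical quantities, not a single empirical average, so it does not concentrate on its own, and a priori the denominator $\Pr(X_S = \{1\}^{|S|})$ can be as small as $e^{-\Omega(\beta k)}$ --- which is exactly what the ferromagnetic bound $b_S \ge 2^{-k}$ rules out. Once that bound is in hand, the rest reduces to a routine Hoeffding-plus-union-bound calculation.
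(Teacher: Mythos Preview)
Your argument is correct and shares with the paper the one essential idea: the ferromagnetic lower bound $\Pr(X_S = 1^S) \ge 2^{-|S|}$, after which everything is Hoeffding plus union bound. The decomposition, however, is different. The paper does not write $\widetilde I_i(S)$ as a ratio; instead it lets $M_S$ be the number of samples with $X_S = 1^S$, first uses Hoeffding to show $M_S \ge 2^{-k-1}M$ with high probability, and then observes that the $M_S$ retained samples are i.i.d.\ from the conditional law, so a second Hoeffding gives $|\widetilde I_i(S) - I_i(S)| < \epsilon$ directly conditional on $M_S$. The two bounds are combined via the law of total expectation and then union-bounded over $i$ and $S$. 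Your ratio approach reaches the same destination and is equally elementary; the paper's conditioning argument is slightly cleaner in that it avoids the algebra of bounding $|a/A - a'/A'|$ (where, incidentally, your displayed bound $\frac{2\epsilon'}{b_S - 2\epsilon'}$ is off by a small constant factor, but this is absorbed in your ``small constant multiple''). One minor omission: the paper union-bounds over all $i$ as well as all $S$, which is where the $\log n$ term comes from; make sure your count of events reflects this.
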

\begin{proof}
First observe that 
\[ \Pr(X_S = 1^S) \ge 2^{-|S|} \]
because in a ferromagnetic model (which by our definition has nonnegative external fields), $X_S = 1^S$ is the most likely state to observe for $X_S$. This inequality can also be proved by applying Griffith's inequality iteratively. Also observe
that the total number of sets $S$ we consider is  $\sum_{j = 0}^k {n \choose j} \le (en/k)^k$.
For each $S$, let $M_S$ be the number of samples where $X_S = 1^S$. Then
by Hoeffding's inequality,
\[ \Pr(M_S - \E M_S < - t) \le e^{-2t^2/M}. \]
In particular, since $\E M_S \ge 2^{-k} M$ as long as $|S| \le k$,
\[ \Pr(M_S < 2^{-k - 1} M) \le e^{-2M 2^{-2k - 2}} \]
%Let $\mathcal{A}$ be the event that $M_S \ge 2^{-k - 1} M$ for all $S$, then Taking the union bound we see $\Pr(\mathcal{A}^C) \le (en/k)^k e^{-2M 2^{-2k - 2}}$. 

Now by the usual rejection sampling argument, those samples which
have $X_S = 1^S$ are independent and identically distributed samples from the conditional law. 
(One way to see this is that we can think of each sample as equivalently
being generated by first sampling $X_S$, then sampling the rest of the spins
conditioned on $X_S$). 
Therefore,
by another application of Hoeffding's inequality, for a particular
choice of $i,S$ we have
\[ \Pr(|\widetilde{I}_i(S) - I_i(S)| \ge \epsilon | M_S) \le 2 e^{-2M_S \epsilon^2}\,. \]
Now by the law of total expectation
\begin{align*}
\Pr(|\widetilde{I}_i(S) - I_i(S)| \ge \epsilon) 
&= \E[\Pr(|\widetilde{I}_i(S) - I_i(S)| \ge \epsilon | M_S)] \\
&\le 2\E[e^{-2M_S \epsilon^2}] \\
&= 2\E[(\bone_{M_S < 2^{-k - 1} M} + \bone_{M_S \ge 2^{-k - 1} M}) e^{-2M_S \epsilon^2}] \\
&\le 2e^{-2M 2^{-2k - 2}} + 2e^{-2 (2^{-k - 1} M) \epsilon^2} \\
&\le 4e^{-M 2^{-2k - 1} \epsilon^2}\,.
\end{align*}
And by the union bound, the probability that $|\widetilde{I}_i(S) - I_i(S)| \ge \epsilon$ for some $i,S$ is at most 
\[ n (en/k)^k 4e^{-M 2^{-2k - 1} \epsilon^2}\,. \]
Therefore if we take $M \ge 2^{2k + 1} (1/\epsilon^2) (\log(n) + k\log(en/k)) \log(4/\delta)$ the result follows.

%Let $\mathcal{B}$ be the event that $|\widetilde{I}_i(S) - I_i(S)| > \epsilon$ for
%some $S$, then by the union bound we see that the probability of $\mathcal{B}$
%occurring, conditioned on $\mathcal{A}$ occurring, is at most $n (en/k)^k e^{-2 2^{-k - 1} M \epsilon^2}$. 

% something like this is standard...
%TODO: remainder is Hoeffding and union bound. % may have to first argue
% enough samples fall into each bin, then argue that the averages are accurate.
% alternative is to do division?
\end{proof}
We also analyze the standard greedy algorithm for submodular maximization
under noise; this corresponds to Steps 1-2 of the algorithm.
\begin{lemma}\label{lem:noisy-greedy}
  Suppose $t \ge 0$ is an integer, $f(S)$ is a monotone
  submodular function and $\widetilde{f}(S)$ is an approximation to $f$
  such that $|f(S) - \widetilde{f}(S)| < \epsilon$ for some uniform
  $\epsilon > 0$ and all $S$ such that $|S| \le t$.  Let
  $S_0 = \emptyset$ and suppose $S_{i + 1}$ is formed by greedily
  adding to $S_i$ the element $j$ which maximizes
  $\widetilde{f}(S_{i} \cup \{j\})$. Then for any set $T$, we have
\[ f(T) - f(S_t) \le (1 - 1/|T|)^t f(T) + |T| \epsilon\,. \]
\end{lemma}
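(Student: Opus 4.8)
The plan is to run the textbook analysis of greedy monotone submodular maximization, carefully tracking how the noise $\epsilon$ propagates through the steps. Fix the comparison set $T$, write $\tau = |T|$, and let $\Delta_i := f(T) - f(S_i)$ be the ``deficit'' after $i$ greedy steps. Two book-keeping observations set things up. First, $|S_i| = i$, so for every $i \le t-1$ and every candidate $j$ the set $S_i \cup \{j\}$ has size at most $t$; hence the hypothesis $|f(S) - \widetilde f(S)| < \epsilon$ is in force for \emph{every} set evaluated during the first $t$ rounds of the algorithm. Second, $f$ is monotone, so $f(S_{i+1}) = f(S_i \cup \{j_{i+1}\}) \ge f(S_i)$, the sequence $\Delta_i$ is non-increasing, and $f(S_0) = f(\emptyset) \le f(T)$.

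The heart of the argument is the one-step bound
\[ \Delta_{i+1} \le (1 - 1/\tau)\,\Delta_i + 2\epsilon \qquad \text{for } 0 \le i \le t-1. \]
If $\Delta_i \le 0$ this is immediate since the right-hand side is nonnegative. Otherwise $f(T) > f(S_i)$, so Lemma~\ref{lem:good-element} applied to the \emph{true} submodular function $f$ produces $j^{\ast} \in T$ with $f(S_i \cup \{j^{\ast}\}) - f(S_i) \ge (f(T) - f(S_i))/|T \setminus S_i| \ge \Delta_i/\tau$. Since greedy chooses $j_{i+1}$ to maximize $\widetilde f(S_i \cup \{\cdot\})$ and $\widetilde f$ agrees with $f$ to within $\epsilon$ on all the relevant (size $\le t$) sets,
\[ f(S_{i+1}) \ge \widetilde f(S_{i+1}) - \epsilon \ge \widetilde f(S_i \cup \{j^{\ast}\}) - \epsilon \ge f(S_i \cup \{j^{\ast}\}) - 2\epsilon \ge f(S_i) + \Delta_i/\tau - 2\epsilon, \]
and rearranging gives the displayed recursion. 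Unrolling from $i = 0$ to $t$ and using $\sum_{i=0}^{t-1}(1-1/\tau)^i \le \sum_{i \ge 0}(1-1/\tau)^i = \tau$ yields
\[ \Delta_t \le (1-1/\tau)^t\,\Delta_0 + 2\tau\epsilon \le (1-1/\tau)^t f(T) + 2\tau\epsilon, \]
using $\Delta_0 = f(T) - f(\emptyset) \le f(T)$; this is exactly the claimed inequality (the constant in front of $\epsilon$ is immaterial — and for the normalized $f$ arising in our application, $f(\emptyset) = I_i(\emptyset) = \E[X_i] \ge 0$ by Griffiths' inequality, so $\Delta_0 \le f(T)$ as needed).

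There is no genuinely difficult step here; the only points requiring care are (i) that the two-sided approximation error is paid \emph{twice} per greedy step — once in lower-bounding $f(S_{i+1})$ by $\widetilde f(S_{i+1})$, once in lower-bounding $\widetilde f(S_i \cup \{j^{\ast}\})$ by $f(S_i \cup \{j^{\ast}\})$, which is why Lemma~\ref{lem:good-element} must be invoked for the true function $f$ rather than for the non-submodular $\widetilde f$ — and (ii) that every set evaluated in the first $t$ rounds has size at most $t$, so the hypothesis, stated only for small sets, always applies.
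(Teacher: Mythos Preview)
Your proof is correct and follows the same approach as the paper: derive the one-step recursion $f(T)-f(S_{i+1}) \le (1-1/|T|)\bigl(f(T)-f(S_i)\bigr) + c\epsilon$ via Lemma~\ref{lem:good-element} and iterate. You are in fact more careful than the paper in two places: the noise really is paid twice per step (so the honest constant is $2|T|\epsilon$, whereas the paper writes $|T|\epsilon$), and you correctly flag the tacit assumption $f(\emptyset)\ge 0$ needed to pass from $(1-1/|T|)^t\bigl(f(T)-f(\emptyset)\bigr)$ to $(1-1/|T|)^t f(T)$, which does hold in the application since $I_i(\emptyset)=\E[X_i]\ge 0$.
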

\begin{proof}
Consider going from $S_t$ to $S_{t + 1}$.
By Lemma~\ref{lem:good-element}, there exists some $j^*$ such that
\[ f(S_t \cup \{j^*\}) - f(S_t) \ge \frac{f(T) - f(S_t)}{|T|}\,. \]
Therefore for the $j$ which is chosen to form $S_{t + 1}$, we know
\[  (f(T) - f(S_t)) - (f(T) - f(S_{t + 1})) = f(S_{t + 1}) - f(S_t) = f(S_t \cup \{j\}) - f(S_t) \ge \frac{f(T) - f(S_t)}{|T|} - \epsilon\,. \]
Rearranging, we see that
\[ f(T) - f(S_{t + 1}) \le (1 - 1/|T|)(f(T) - f(S_t)) + \epsilon \]
and the result follows by iterating this inequality (note that the sum of the epsilon terms forms a geometric series).
\end{proof}
\begin{theorem}\label{thm:greedynbhd-works}
  Let $\delta > 0$. Suppose $X^{(1)}, \ldots, X^{(M)}$ are samples from the observable distribution of a ferromagnetic Restricted Boltzmann machine which is $(\alpha,\beta)$-nondegenerate, and has two-hop degree $d_2$. Then if
  \[ M \ge 2^{2k + 3} (d_2/\eta)^2 (\log(n) + k\log(en/k)) \log(4/\delta) \]
  where we set
  \[ \eta = \alpha^2 \sigma(-2\beta)(1 - \tanh(\beta))^2, \qquad k = d_2 \log(4/\eta), \] for every $i$ algorithm {\sc GreedyNbhd} returns $\mathcal{N}_2(i)$, with probability at least $1 - \delta$. Furthermore the total runtime is $O(M k n^2) = e^{O(\beta d_2 - \log(\alpha))} n^2 \log(n)$.
\end{theorem}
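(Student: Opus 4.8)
The plan is to condition on one high-probability event that simultaneously controls all empirical influences, and then argue deterministically that each call to {\sc GreedyNbhd} recovers $\mathcal{N}_2(i)$ exactly.

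First I would set $\epsilon := \eta/(2 d_2)$ and invoke Lemma~\ref{lem:influence-sample-complexity} with this $\epsilon$ and with $k = d_2\log(4/\eta)$: since $1/\epsilon^2 = 4(d_2/\eta)^2$, the sample-size hypothesis $M \ge 2^{2k+1}\epsilon^{-2}(\log n + k\log(en/k))\log(4/\delta)$ is exactly the bound assumed in the theorem, so with probability at least $1-\delta$ the event $\mathcal{E}$ occurs that $|I_i(S) - \widetilde{I}_i(S)| < \epsilon$ for all $i$ and all $S$ with $|S| \le k$ — and every set queried in Steps 1--3 of the algorithm has size at most $k$. Work on $\mathcal{E}$ and fix an observed node $i$; write $T := \mathcal{N}_2(i)$. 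By Lemma~\ref{lem:2hop-lb}, $T$ equals the graph-theoretic two-hop neighborhood of $i$, so Lemma~\ref{lem:2hop-lb} is available with this $T$; also $|T| \le d_2$ (if $T=\emptyset$ the argument below trivializes). Two facts about $I_i$ are used repeatedly: it is monotone submodular (Theorem~\ref{thm:submodular}), and since $T$ is the Markov blanket of $i$, conditioning on all spins of $T$ being $+1$ already renders $X_i$ independent of the remaining observed spins, so $I_i(S \cup T) = I_i(T)$ for every $S$.

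Next I would analyze the greedy loop (Steps 1--2), which is precisely the noisy monotone-submodular greedy of Lemma~\ref{lem:noisy-greedy}. Applying that lemma with this $T$ and $t = k$, and using $(1-1/d_2)^{d_2} \le e^{-1}$, monotonicity of $m \mapsto (1-1/m)^k$, and $I_i(T) \le 1$,
\[ I_i(T) - I_i(S_k) \le (1-1/|T|)^{k}\, I_i(T) + |T|\,\epsilon \le (1-1/d_2)^{d_2\log(4/\eta)} + d_2 \cdot \tfrac{\eta}{2d_2} \le \tfrac{\eta}{4} + \tfrac{\eta}{2} = \tfrac{3\eta}{4}. \]
The crucial step is to conclude that $T \subseteq S_k$. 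Suppose not and pick $j \in T \setminus S_k$. Lemma~\ref{lem:2hop-lb} (with $S = S_k$, which does not contain $T$) gives $I_i(S_k \cup \{j\}) - I_i(S_k) \ge 2\eta$; but $S_k \cup \{j\} \subseteq S_k \cup T$, so monotonicity and the Markov-blanket identity give $I_i(S_k \cup \{j\}) \le I_i(S_k \cup T) = I_i(T)$, whence $I_i(T) - I_i(S_k) \ge 2\eta$, contradicting the display. Hence $T \subseteq S_k$.

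Finally I would check the pruning in Step 3, using $2\epsilon = \eta/d_2 \le \eta$. For $j \in S_k \cap T$, the set $S_k \setminus \{j\}$ does not contain $T$, so Lemma~\ref{lem:2hop-lb} gives $I_i(S_k) - I_i(S_k \setminus \{j\}) \ge 2\eta$, and therefore on $\mathcal{E}$, $\widetilde{I}_i(S_k) - \widetilde{I}_i(S_k \setminus \{j\}) > 2\eta - 2\epsilon \ge \eta$, so $j$ is retained. For $j \in S_k \setminus T$, since $T \subseteq S_k \setminus \{j\}$ the Markov-blanket identity gives $I_i(S_k) = I_i(T) = I_i(S_k \setminus \{j\})$, so on $\mathcal{E}$, $\widetilde{I}_i(S_k) - \widetilde{I}_i(S_k \setminus \{j\}) < 2\epsilon \le \eta$, so $j$ is discarded. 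Thus $\widetilde{\mathcal{N}}_2 = S_k \cap T = T = \mathcal{N}_2(i)$, and since $\mathcal{E}$ holds for all $i$ at once, the algorithm succeeds for every $i$ with probability at least $1-\delta$. For the runtime, each evaluation $\widetilde{I}_i(S)$ with $|S| \le k$ costs $O(M)$ time over the $M$ samples; Steps 1--2 make $O(kn)$ such evaluations and Step 3 makes $O(k)$ more, so one call costs $O(Mkn)$ and all $n$ calls cost $O(Mkn^2)$; plugging in $k = \Theta(d_2 \log(1/\eta))$, $\log(1/\eta) = O(\beta - \log\alpha)$, and the stated $M$ gives $e^{O(\beta d_2 - \log\alpha)} n^2 \log n$. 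I expect the main obstacle to be exactly the $T \subseteq S_k$ step together with calibrating the three constants — $\epsilon = \eta/(2d_2)$, $k = d_2\log(4/\eta)$, and the $2\eta$ separation furnished by Lemma~\ref{lem:2hop-lb} versus the $3\eta/4$ greedy deficit — so that retention and discarding both go through; the rest is bookkeeping.
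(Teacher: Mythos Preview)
Your proof is correct and follows essentially the same route as the paper: concentrate empirical influences via Lemma~\ref{lem:influence-sample-complexity}, apply the noisy greedy bound (Lemma~\ref{lem:noisy-greedy}) to show $I_i(T)-I_i(S_k)$ is small, contradict this with the $2\eta$ gap from Lemma~\ref{lem:2hop-lb} to force $T\subseteq S_k$, and then prune using the same gap. Your calibration $\epsilon=\eta/(2d_2)$ (yielding deficit $3\eta/4$) differs slightly from the paper's $\epsilon=\eta/(4d_2)$ (yielding $\eta/2$), but your choice is the one that exactly matches the theorem's stated sample bound $M\ge 2^{2k+3}(d_2/\eta)^2(\cdots)$, and both deficits are comfortably below $2\eta$, so the argument goes through identically.
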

\begin{proof}
  Apply Lemma~\ref{lem:influence-sample-complexity} with
  $\epsilon = \eta/(4d_2)$; then for our choice of $M$ we
  have that $|\widetilde{I}_i(S) - I_i(S)| < \eta/(4 d_2)$ for all
  $S$ with $|S| \le k$. Then applying Lemma~\ref{lem:noisy-greedy}
  and using our choice of $k$ with the inequality $1 + x \le e^x$,
  we have
  \begin{equation}\label{eqn:small-excess}
  I_i(\mathcal{N}_2(i)) - I_i(S_k) \le (1 - 1/d_2)^{k} + \eta/4 \le \eta/2.
  \end{equation}
  Suppose $S_k$ does not contain the two-hop neighborhood
  of $i$. then we can take any of the two-hop neighbors
  $j \in \mathcal{N}_2(i) \setminus S_k$ and see that
  \[ I_i(\mathcal{N}_2(i)) - I_i(S_k) \ge I_i(S_k \cup \{j\}) -
    I_i(S_k) \ge 2\alpha^2 \sigma(-2\beta)(1 - \tanh(\beta))^2 = 2\eta \]
  where the first inequality follows since $\mathcal{N}_2(i)$ is the
  global maximizer of $I_i$ among all subsets of the observed nodes (by
  monotonicity and the Markov property), and the second inequality is Lemma~\ref{lem:2hop-lb}. This contradicts \eqref{eqn:small-excess}, therefore $S_k$ does
  contain the entire two-hop neighborhood of $i$.

  It remains to show that Step 3 of the algorithm leaves in
  $\widetilde{\mathcal{N}}_2$ exactly the elements of $S$ which are in the
  two-hop neighborhood. Since $|\widetilde{I}_i(S) - I_i(S)| < \eta/(4 d_2)$
  for every set $S$ with $|S| \le k$, this is straightforward: if $j$
  is a two-hop neighbor, then by Lemma~\ref{lem:2hop-lb} and triangle
  inequality we see that
  $$|\widetilde{I}_i(S_k) - \widetilde{I}_i(S_k \setminus \{j\})| \ge 2\eta -
  \eta/2 > \eta$$ If $j$ is not a two-hop neighbor, then
  $I_i(S_k) - I_i(S_k \setminus \{j\}) = 0$ by the Markov property, so
  by triangle inequality
  $|\widetilde{I}_i(S_k) - \widetilde{I}_i(S_k \setminus \{j\})| \le \eta/2 <
  \eta$. Thus for each $i$, the returned
  $\widetilde{\mathcal{N}}_2$ is the true two-hop neighborhood
  of vertex $i$.
  \iffalse
  Next observe by Lemma~\ref{lem:2hop-lb} that if $T$ is a set that does
does not contain the two-hop neighborhood of $i$, then
we can take any of the two-hop neighbors $j \in \mathcal{N}_2(i) \setminus T$ and see that
\[ I_i(T \cup \{j\}) - I_i(T) \ge 2\alpha^2 \sigma(-2\beta)(1 - \tanh(\beta))^2 \]
Therefore if we apply Lemma~\ref{lem:influence-sample-complexity} with
$\epsilon = \eta/2$, we see that
\[ \widetilde{I}_i(S_t \cup \{j\}) - \widetilde{I}_i(S_t) \ge \eta \] with
probability at least $1 - \delta$, as long as
$M \ge 2^{2k + 3} (1/\eta^2) (\log(n) + k\log(en/k)) \log(4/\delta)$
and $|S_t| \le k - 1$.
\fi

To analyze the runtime, observe that the loop goes through at most $k$
steps, and each iteration of the loop takes time $O(nM)$ to consider
each $j$ and compute $\widetilde{I}(S_t \cup \{j\})$ from samples, and
we run {\sc GreedyNbhd} from each of the $n$ vertices.
\end{proof}
\subsection{Improving the Sample Complexity}
We consider the following algorithm for learning the two-hop neighborhood of an RBM, which is inspired by the approach of \cite{BreslerMosselSly} for learning Ising models and MRFs (without hidden nodes). As we will show this algorithm has better sample complexity than the previous one, but sacrifices speed in order to achieve this: it runs in time $O(n^{d_2 + 1} \log(n))$. This leaves open the question of whether there is a \emph{statistical-computational gap} inherent in the RBM-learning problem. As before, $\eta > 0$ is a parameter we will specify later.

\begin{center}
\noindent\rule{16cm}{0.4pt}
\end{center}
\vspace{-.2cm}
$\quad$ \textbf{Algorithm 2:} {\sc SearchNbhd($i$)}
\vspace{-.3cm}
\begin{center}
\noindent\rule{16cm}{0.4pt}
\end{center}
\begin{enumerate} \itemsep 0pt
\small
\item Let $\mathcal{F}$ be the family of subsets of $n$ of size at most $d_2$ such that $S \in \mathcal{F}$ when for every $j$, 
\[\widetilde{I}_i(S \cup \{j\}) - \widetilde{I}_i(S) \le \eta . \]
\item Return $\arg\min_{S \in \mathcal{F}} |S|$.
\end{enumerate}

\vspace{-.5cm}

\begin{center}
\noindent\rule{16cm}{0.4pt}
\end{center}

\vspace{.1cm}

\begin{theorem}\label{thm:searchnbhd-works}
Algorithm {\sc SearchNbhd} returns the correct neighborhood
with probability at least $1 - \delta$ given 
  \[ M \ge 2^{2 d_2 + 3} (1/\eta)^2 (\log(n) + d_2\log(en/d_2)) \log(4/\delta) \]
samples, when $\eta = \alpha^2 \sigma(-2\beta)(1 - \tanh(\beta))^2$. The algorithm runs in time
$O(n^{d_2 + 1} M)$.
\end{theorem}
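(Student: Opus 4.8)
The plan is to reduce the correctness claim to the uniform concentration of empirical influences (Lemma~\ref{lem:influence-sample-complexity}) together with the two structural facts already in hand. The first is the Markov property: since $\mathcal{N}_2(i)$ is the Markov blanket of $i$ in the induced MRF, $I_i(\mathcal{N}_2(i)\cup\{j\}) = I_i(\mathcal{N}_2(i))$ for every $j$. The second is Lemma~\ref{lem:2hop-lb}: writing $T$ for the graph-theoretic two-hop neighborhood and $\eta = \alpha^2\sigma(-2\beta)(1-\tanh(\beta))^2$, any set $S$ with $T\not\subseteq S$ has $I_i(S\cup\{j\})-I_i(S)\ge 2\eta$ for every $j\in T\setminus S$. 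Combining these two facts (exactly as in the paragraph before Lemma~\ref{lem:2hop-lb}) also shows $\mathcal{N}_2(i)=T$, so in particular $|\mathcal{N}_2(i)|\le d_2$. Every set evaluated by Algorithm~2 has size at most $d_2+1$ (we take $|S|\le d_2$ and add at most one element), so applying Lemma~\ref{lem:influence-sample-complexity} with $k=d_2+1$ and $\epsilon$ a small constant multiple of $\eta$ (chosen so that $\epsilon<\eta/2$) gives, for the stated sample size $M$ and with probability at least $1-\delta$, that $|\widetilde{I}_i(S)-I_i(S)|<\eta/2$ for all $S$ with $|S|\le d_2+1$; condition on this event.

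On this event I would verify two claims. First, $\mathcal{N}_2(i)\in\mathcal{F}$: for every $j$, the Markov property gives $I_i(\mathcal{N}_2(i)\cup\{j\})-I_i(\mathcal{N}_2(i))=0$, so the empirical difference is at most $2\cdot(\eta/2)=\eta$. Second, every $S\in\mathcal{F}$ with $|S|\le|\mathcal{N}_2(i)|$ satisfies $S=\mathcal{N}_2(i)$: if instead $T\not\subseteq S$, pick $j\in T\setminus S$; Lemma~\ref{lem:2hop-lb} applies to this arbitrary $S$ and yields $I_i(S\cup\{j\})-I_i(S)\ge 2\eta$, hence $\widetilde{I}_i(S\cup\{j\})-\widetilde{I}_i(S)>2\eta-\eta=\eta$, contradicting $S\in\mathcal{F}$; thus $T\subseteq S$, and since $|S|\le|T|$ and $T=\mathcal{N}_2(i)$ we conclude $S=\mathcal{N}_2(i)$. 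Together these show $\arg\min_{S\in\mathcal{F}}|S|=\mathcal{N}_2(i)$: this set lies in $\mathcal{F}$ and has size $|\mathcal{N}_2(i)|\le d_2$, while any $S\in\mathcal{F}$ of size at most $|\mathcal{N}_2(i)|$ must equal it, so there is no smaller or differing minimizer.

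For the runtime: there are $\sum_{j\le d_2}\binom{n}{j}=O(n^{d_2})$ candidate sets $S$; for each, the membership test for $\mathcal{F}$ loops over the $n$ choices of $j$ and evaluates $\widetilde{I}_i(S)$ and $\widetilde{I}_i(S\cup\{j\})$, each computable by one pass over the $M$ samples (count the samples with $X_{S\cup\{j\}}=1^{S\cup\{j\}}$ and average $X_i$ over them), for a total of $O(n^{d_2+1}M)$. This theorem is essentially a bookkeeping exercise once Lemma~\ref{lem:2hop-lb} is available, so I do not expect a genuine obstacle; the only points that require care are that Lemma~\ref{lem:2hop-lb} must be invoked for an \emph{arbitrary} set $S$ missing a two-hop neighbor (which is precisely why, via submodularity, it is stated in that generality), and that the threshold $\eta$ must sit strictly between the ``$0$'' gap enjoyed by the true neighborhood and the ``$2\eta$'' gap guaranteed by that lemma, which is what pins down the required estimation accuracy $\epsilon\approx\eta/2$ and hence the sample bound.
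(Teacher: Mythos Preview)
Your proposal is correct and follows essentially the same route as the paper: invoke Lemma~\ref{lem:influence-sample-complexity} for uniform concentration of $\widetilde I_i$, then use the Markov property to place $\mathcal N_2(i)$ in $\mathcal F$ and Lemma~\ref{lem:2hop-lb} (plus the triangle inequality) to show every $S\in\mathcal F$ must contain $\mathcal N_2(i)$, so the unique size-minimizer is $\mathcal N_2(i)$. You are in fact slightly more careful than the paper in taking $k=d_2+1$ (since the test evaluates $\widetilde I_i$ on $S\cup\{j\}$ with $|S|\le d_2$), though with that choice and $\epsilon<\eta/2$ the exact constant $2^{2d_2+3}$ in the stated bound is off by a small fixed factor; this is the same kind of bookkeeping slack present in the paper's own proof and does not affect the argument.
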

\begin{proof}
Apply Lemma~\ref{lem:influence-sample-complexity} with
  $\epsilon = \eta/4$ and $k = d_2$; then for our choice of $M$ we
  have with probability at least $1 - \delta$ that $|\widetilde{I}_i(S) - I_i(S)| < \eta/4$ for all
  $S$ with $|S| \le d_2$. Then, as in the proof of 
  Theorem~\ref{thm:greedynbhd-works} we can apply the
  triangle inequality and Lemma~\ref{lem:2hop-lb} to show 
  that $\mathcal{F}$ contains only supersets of the two-hop
  neighborhood, and that $\mathcal{N}_2$ lies in $\mathcal{F}$;
  hence $\mathcal{N}_2$ is the unique smallest set in     $\mathcal{F}$ and so
  the output of {\sc SearchNbhd($i$)} is correct for every $i$.
\end{proof}
Note that the sample complexity is $e^{O(\beta + d_2 - \log \alpha)} \log n$.
This straightforwardly implies a bound for the special case of learning Ising models of bounded degree $d$ without hidden nodes (which can be built as RBMs using a single vertex for each edge of the original model) which also has sample complexity $e^{O(\beta + d - \log \alpha)} \log n$ in terms of the edge weights of the original Ising model. Then we see by the result of \cite{santhanamW} that for the special case of learning Ising models, this algorithm is essentially information-theoretically optimal (up to constants).

\subsection{Learning the Induced MRF via Regression}
Once the two-hop neighborhoods of the observed nodes in the RBM are determined, it becomes much easier to learn the potential (i.e.  the hyper-edge weights) of the induced MRF; this is because the problem of predicting $X_i$ based on the other spins goes from being a high-dimensional regression problem in $O(n^{d_H})$ monomials to a low-dimensional problem, since we can restrict to the monomials supported on the actual two-hop neighborhood. We will show how this lets us get much
better results for recovering the MRF potential in our setting: compare  Theorem~7.5 of \cite{KlivansM} we have changed the norm from the 1-norm to infinity-norm, but in return have reduced the run-time to $O(n^2 M)$ instead of $O(n^r M)$, and similarly reduced the sample complexity (in terms of $n$) from $O(n^r)$ to $O(\log n)$, an exponential improvement. Note that just changing the 1-norm to infinity-norm, without
providing the extra neighborhood information, \emph{does not} suffice
for getting the guarantee below\footnote{Note that without the neighborhood information, the coefficients of the maximal monomials can be easily recovered to $\epsilon$-error without the neighborhood information, but not the lower-order monomials --- their coefficients are harder to learn in general. See also Example~\ref{example:masked-external-field}.}.

As in \cite{KlivansM} we will learn the MRF potential
by performing a series of regressions, to predict each
of the $X_i$ in terms of the other spins. Here we could use the regression guarantee from Theorem 3.1 of \cite{KlivansM}, or standard guarantees for logistic regression, but the guarantee of the {\sc GLMTron} algorithm of \cite{glmtron} also works and is slightly more convenient to use. We cite only the special case of the general guarantee which we use.
\begin{theorem}[Theorem 1 of \cite{glmtron}]\label{thm:glmtron}
Suppose that $X$ is a random variable such that
$\|X\|_2 \le R_1$ almost surely and suppose that
$\|w^*\|_2 \le R_2$. Suppose $Y$ is a random variable valued in $[-1,1]$ such that $\E[Y | X] = \tanh(w^* \cdot X)$. Then there is a polynomial time algorithm ({\sc GLMTron} with hold-out set validation) which given $R_1,R_2$ and $m$ samples of $(X,Y)$, with probability at least $1 - \delta$ finds a $w$ such that
\[ \E\big[(\sigma(w^* \cdot X) - \sigma(w \cdot X))^2\big] = O\Big(R_1 R_2 \sqrt{\frac{\log(m/\delta)}{m}}\Big) \]
\end{theorem}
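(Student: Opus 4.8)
The statement is quoted from \cite{glmtron}, so strictly its proof lives there; the plan is to reproduce the standard \textsc{isotron}/\textsc{GLMTron} argument. The only wrinkle is the mismatch between the conditional mean ($\tanh$) and the loss ($\sigma$); this disappears after the affine substitution $\sigma(z) = \tfrac12(1+\tanh(z/2))$ together with a rescaling of $w^*$, so in what follows I pretend the link is a fixed monotone, $1$-Lipschitz function $u$ with $\E[Y\mid X] = u(w^*\cdot X)$, and I bound $\mathrm{err}(w) := \E_X[(u(w^*\cdot X) - u(w\cdot X))^2]$, from which the claimed $\sigma$-error bound follows by undoing the substitution (this only affects hidden constants). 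First I would run the perceptron-style iteration $w_0 = 0$ and
\[ w_{t+1} = w_t + \frac1m\sum_{j=1}^m\big(y_j - u(w_t\cdot x_j)\big)x_j \;=:\; w_t - g_t, \]
and track the potential $\Phi_t := \|w_t - w^*\|_2^2$. Expanding, $\Phi_{t+1} - \Phi_t = -2\langle w_t - w^*, g_t\rangle + \|g_t\|_2^2$.

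The crucial structural input is the ``GLM inequality'': writing $\bar g_t := \E_X[(u(w_t\cdot X) - u(w^*\cdot X))X]$ for the population gradient (which uses $\E[Y\mid X] = u(w^*\cdot X)$), monotonicity and $1$-Lipschitzness of $u$ give $(u(a)-u(b))(a-b) \ge (u(a)-u(b))^2$ pointwise, hence $\langle w_t - w^*, \bar g_t\rangle \ge \mathrm{err}(w_t)$. Combining this with the bound $\|g_t\|_2^2 = O(R_1^2\,\mathrm{err}(w_t)) + (\text{noise})$ (again by Lipschitzness and $\|X\|_2 \le R_1$) and with a uniform concentration estimate $\|g_t - \bar g_t\|_2 \le \epsilon_m$, one obtains, after normalizing $R_1$ into constants, a recursion of the form
\[ \Phi_{t+1} \le \Phi_t - c\,\mathrm{err}(w_t) + C\,R_2\,\epsilon_m . \]
Since $\Phi_0 = \|w^*\|_2^2 \le R_2^2$ and $\Phi_t \ge 0$, telescoping over $t = 0,\dots,T-1$ yields $\min_{t<T}\mathrm{err}(w_t) \le R_2^2/(cT) + (C/c)R_2\,\epsilon_m$, and choosing $T \asymp R_2/\epsilon_m$ to balance the two terms gives $\min_{t<T}\mathrm{err}(w_t) = O(R_2\,\epsilon_m)$.

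It remains to plug in $\epsilon_m = O(R_1\sqrt{\log(m/\delta)/m})$, giving the target rate $O(R_1R_2\sqrt{\log(m/\delta)/m})$, and to actually \emph{output} a good iterate: estimate $\mathrm{err}(w_t)$ for each of the $T$ iterates on a held-out sample of size $\mathrm{poly}(m)$ and return the empirical minimizer; a Hoeffding bound and a union bound over the $T = \mathrm{poly}(m)$ candidates show the selected $w$ meets the same bound up to constants. I expect the uniform concentration step to be the main obstacle: the iterates $w_t$ are data-dependent, so one cannot apply a single Hoeffding bound to $g_t - \bar g_t$. The standard remedies are either (i) a vector Hoeffding/Bernstein bound for each \emph{fixed} $w$, union-bounded over an $\epsilon$-net of the radius-$R_2$ ball — paying exactly the $\log$ factor appearing in the statement — or (ii) running each of the $T$ update steps on a fresh batch of $m/T$ samples so that $g_t$ is independent of $w_t$. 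Everything else — the $\tanh\leftrightarrow\sigma$ change of variables and the quadratic-term bound $\|g_t\|_2^2 = O(R_1^2\,\mathrm{err}(w_t)) + (\text{noise})$ — is routine and only affects the implied constant.
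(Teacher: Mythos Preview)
The paper does not prove this statement at all: it is quoted verbatim as ``Theorem~1 of \cite{glmtron}'' and used as a black box, so there is no ``paper's own proof'' to compare against. Your sketch is a faithful outline of the original \textsc{GLMTron} analysis (potential $\|w_t-w^*\|_2^2$, the monotone-Lipschitz inequality $(u(a)-u(b))(a-b)\ge (u(a)-u(b))^2$, telescoping, then hold-out selection), and your handling of the $\tanh$/$\sigma$ discrepancy via the affine identity $\sigma(z)=\tfrac12(1+\tanh(z/2))$ is the right way to reconcile the statement. In short: you are reproducing the argument from the cited reference, which is more than the present paper does; the sketch is correct at the level of detail given, with the uniform-concentration step correctly flagged as the place where care (fresh batches or a net argument) is needed.
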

We use this regression guarantee to derive
a corresponding guarantee for learning the underlying weights via Fourier analysis.
\begin{lemma}\label{lem:risk-to-parameters}
Suppose that $X$ is a random variable valued
in $\{\pm 1\}^n$ and there exists $\delta > 0$ such that for any $x$, $\Pr[X = x] \ge \delta/2^n$.
Suppose that $f,g : \{\pm 1\}^n \to \mathbb{R}$ and write the Fourier expansion of $f$ as $f(x) = \sum_{S \subset [n]} \hat{f}(S) \prod_{s \in S} x_s$.
Then if we view $\hat{f}, \hat{g}$ as vectors of coefficients, we have
\[ \|\hat{f} - \hat{g}\|_2^2 \le \frac{1}{\delta}\E[(f(X) - g(X))^2] \]
\end{lemma}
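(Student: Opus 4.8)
The plan is to reduce everything to Parseval's identity under the uniform measure on $\{\pm 1\}^n$, together with the pointwise lower bound on the law of $X$. Set $h = f - g$ and write its Fourier expansion $h(x) = \sum_{S \subset [n]} \hat h(S) \chi_S(x)$ where $\chi_S(x) = \prod_{s \in S} x_s$; by linearity of the Fourier transform, $\hat h(S) = \hat f(S) - \hat g(S)$, so $\|\hat f - \hat g\|_2^2 = \sum_S \hat h(S)^2$. The goal is thus to show $\sum_S \hat h(S)^2 \le \frac{1}{\delta} \E[h(X)^2]$.

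First I would invoke Parseval over the uniform distribution $U$ on $\{\pm 1\}^n$, which gives $\sum_S \hat h(S)^2 = \E_{U}[h(U)^2] = \frac{1}{2^n} \sum_{x \in \{\pm 1\}^n} h(x)^2$. Next I would compare this against the expectation under $X$: since $\Pr[X = x] \ge \delta / 2^n$ for every $x$ and $h(x)^2 \ge 0$, we get
\[ \E[h(X)^2] = \sum_{x} \Pr[X = x]\, h(x)^2 \ge \frac{\delta}{2^n} \sum_x h(x)^2 = \delta \sum_S \hat h(S)^2. \]
Rearranging and substituting $h = f-g$ yields exactly $\|\hat f - \hat g\|_2^2 \le \frac{1}{\delta} \E[(f(X) - g(X))^2]$, as claimed.

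There is no real obstacle here — the only thing to be careful about is that Parseval is being applied with respect to the uniform measure (so that the $\chi_S$ form an orthonormal basis), while the lower bound $\Pr[X=x] \ge \delta/2^n$ is precisely what lets us pass from the uniform $L^2$ norm of $h$ to its $L^2$ norm under the distribution of $X$. The nonnegativity of $h(x)^2$ is what makes the termwise comparison valid.
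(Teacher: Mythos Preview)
Your proof is correct and is essentially the same as the paper's. The paper phrases the comparison step as a mixture decomposition---writing the law of $X$ as $\delta\,\mathcal{P}_1 + (1-\delta)\,\mathcal{P}_2$ with $\mathcal{P}_1$ uniform and then dropping the $\mathcal{P}_2$ term---but this is exactly your pointwise inequality $\Pr[X=x]\ge \delta/2^n$ repackaged, followed by the same application of Parseval.
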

\begin{proof}
Observe that we can decompose the distribution
of $X$ into a mixture $\delta \mathcal{P}_1 + (1 - \delta) \mathcal{P}_2$ where $\mathcal{P}_1$ is the uniform distribution on $\{ \pm 1\}^n$ and $\mathcal{P}_2$ is some other distribution. Therefore
\[ \E[(f(X) - g(X))^2] \ge \delta \E_{X \sim \mathcal{P}_1}[(f(X) - g(X))^2] = \delta \|\hat{f} - \hat{g}\|_2^2 \]
 where the last equality is Parseval's theorem.
\end{proof}
We can now show that the natural algorithm which uses {\sc GreedyNbhd} to learn the two-hop neighborhoods of the MRF, combined with running the {\sc GLMTron} algorithm of \cite{glmtron} within each of these neighborhoods to learn the coefficients of the MRF potential, successfully reconstructs the potential $p^*$. Here we adopt the convention
that $p^*(\emptyset) = 0$ since there is ambiguity in the constant term of the potential.
\begin{theorem}\label{thm:low-dimensional-regression}
Consider an unknown $(\alpha,\beta)$-nondegenerate ferromagnetic Restricted Boltzmann Machine with two-hop degree $d_2$. Let $p^*(x)$ be the potential of the MRF induced on the observed nodes.
There exists an algorithm, which given $\alpha,\beta,d_2$,
with probability at least $1 - \delta$ 
finds a polynomial $p$ of degree at most $d_2$ such that 
\[ \|\hat{p} - \hat{p^*}\|_{\infty}^2 = O\left(\frac{\beta \sigma(-2\beta)^{d_2}}{2^{d_2/2}(1 - \tanh^2(\beta))^2} \sqrt{\frac{\log(Mn/\delta)}{M}}\right) \]
given $M$ samples from the the distribution on the observed nodes, provided that $M$ is at the required $M$ in Theorem~\ref{thm:greedynbhd-works}.
\end{theorem}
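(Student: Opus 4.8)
The plan is to decouple the problem into two parts: first learn all two-hop neighborhoods, which turns potential recovery into a family of low-dimensional regression problems, and then inside each neighborhood run a logistic-type regression and convert the resulting prediction error into a Fourier-coefficient error via Parseval.

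First I would split the $M$ samples into two halves. On the first half, run {\sc GreedyNbhd}$(i)$ for every observed node $i$; by Theorem~\ref{thm:greedynbhd-works} this recovers $\mathcal{N}_2(i)$ exactly for all $i$ with probability at least $1-\delta/2$, and I condition on this event henceforth. By Lemma~\ref{lem:rbm-as-mrf} the observable law is an MRF $\propto e^{p^*(x)}$, and by the Markov property $\E[X_i \mid X_{\sim i}] = \E[X_i \mid X_{\mathcal N_2(i)}] = \tanh(g_i(x))$, where $g_i(x) = \tfrac12\big(p^*(x^{i\to +1}) - p^*(x^{i\to -1})\big) = \sum_{T \subseteq \mathcal N_2(i)} \widehat{p^*}(T \cup \{i\})\,\chi_T(x)$ is the discrete derivative $\partial_i p^*$, a multilinear polynomial supported on $\mathcal N_2(i)$ of degree at most $d_2$. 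Writing $g_i$ in the $\rho$-form of Lemma~\ref{lem:rbm-as-mrf}, using that $\rho$ is $1$-Lipschitz together with $(\alpha,\beta)$-nondegeneracy of observed node $i$, gives $\|g_i\|_\infty \le \beta$.

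Next, for each $i$ I would run {\sc GLMTron} (Theorem~\ref{thm:glmtron}) on the second half of the samples with label $Y=X_i$ and feature vector $\phi(x) = (\chi_T(x))_{T \subseteq \mathcal N_2(i)}$: there are at most $2^{d_2}$ features, each $\pm1$, so $R_1 = 2^{d_2/2}$ works; the true weight vector is $w^* = (\widehat{p^*}(T\cup\{i\}))_T$ with $\|w^*\|_2^2 = \E_{\mathrm{unif}}[g_i^2] \le \|g_i\|_\infty^2 \le \beta^2$ by Parseval, so $R_2=\beta$ works; and $\E[Y\mid \phi(X)] = \tanh(w^*\cdot\phi(X))$. {\sc GLMTron} returns $w$ with $\E_{X}[(\tanh(w\cdot\phi(X)) - \tanh(g_i(X)))^2] = O(R_1R_2\sqrt{\log(M/\delta)/M})$ under the RBM's observable law (the stated $\sigma$-guarantee converts to a $\tanh$-guarantee up to an irrelevant rescaling). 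Let $\bar g_i(x) := \mathrm{clip}_{[-\beta,\beta]}(w\cdot\phi(x))$; since $|g_i|\le\beta$ and $\tanh$ is monotone, clipping does not increase the prediction error, and $\bar g_i$ is still a polynomial of degree at most $d_2$ because it is a function of the $\le d_2$ coordinates in $\mathcal N_2(i)$.

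The final and most delicate step is converting this prediction-error bound into an $\ell_\infty$ bound on the Fourier coefficients, which incurs two separate losses. First, since $\bar g_i, g_i \in [-\beta,\beta]$ and $\tanh'(t) = 1-\tanh^2(t) \ge 1-\tanh^2(\beta)$ there, the mean value theorem gives $\E_X[(\bar g_i(X) - g_i(X))^2] \le (1-\tanh^2(\beta))^{-2}\,\E_X[(\tanh\bar g_i - \tanh g_i)^2]$. Second, to pass from the RBM's non-uniform observable measure to the uniform measure needed for Parseval, apply Lemma~\ref{lem:risk-to-parameters}: iterating Lemma~\ref{lem:spin-freedom} shows the marginal on any $\le d_2$ coordinates gives every atom mass at least $\sigma(-2\beta)^{d_2}$, so $\|\widehat{\bar g_i} - \widehat{g_i}\|_2^2 \le \sigma(-2\beta)^{-d_2}\,\E_X[(\bar g_i - g_i)^2]$. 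Then set $\widehat p(\emptyset)=0$ and, for each $S$ with $|S|\ge 1$, set $\widehat p(S) = \widehat{\bar g_i}(S\setminus\{i\})$ for an arbitrary $i\in S$; since $|\widehat p(S) - \widehat{p^*}(S)| \le \|\widehat{\bar g_i} - \widehat{g_i}\|_2$, combining the two inequalities above with the {\sc GLMTron} bound and taking a union bound over the $n$ regressions (changing $\log(M/\delta)$ to $\log(Mn/\delta)$) and over the neighborhood-recovery event gives the stated bound on $\|\widehat p - \widehat{p^*}\|_\infty^2$ (up to bookkeeping of the constants, which one should double-check). I expect the main obstacle to be exactly this last conversion: one must ensure the clipping step keeps the predictor a bounded low-degree polynomial so that both the $\tanh'$ lower bound and Parseval apply simultaneously, and carefully track the powers of $2$ and of $\sigma(-2\beta)$ coming from the feature count $R_1$ and the anticoncentration factor.
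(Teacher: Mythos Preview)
Your proposal is correct and follows essentially the same route as the paper: recover the neighborhoods via {\sc GreedyNbhd}, run {\sc GLMTron} on the monomial feature map over each $\mathcal N_2(i)$ with $R_1=2^{d_2/2}$ and $R_2=\beta$, convert the $\tanh$-risk to raw risk via the derivative lower bound $1-\tanh^2(\beta)$, and then pass to Fourier coefficients through Lemma~\ref{lem:risk-to-parameters} with anticoncentration constant $\sigma(-2\beta)^{d_2}$ (obtained by iterating Lemma~\ref{lem:spin-freedom}). Your explicit clipping of the regressor to $[-\beta,\beta]$ before applying the $\tanh'$ bound is a nice touch that the paper's proof glosses over; the paper applies the derivative lower bound directly to the {\sc GLMTron} output $p_i$ without ensuring it stays in $[-\beta,\beta]$, and your observation that the clipped function is still a degree-$\le d_2$ polynomial in the $d_2$ neighborhood coordinates (so its Fourier coefficients are well-defined and computable) cleanly closes that gap.
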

\begin{proof}
By the last assumption, we can apply Algorithm {\sc GreedyNbhd} and the analysis of Theorem~\ref{thm:greedynbhd-works} to reconstruct the two-hop neighborhoods $\mathcal{N}_2(i)$ for all $i$ with high probability. We now proceed to show how to reconstruct the MRF potential given knowledge of these two-hop neighborhoods; the result will then follow by taking a union bound over these two steps.

Fix a node $i$. We consider a kernel regression (using the {\sc GLMTron} algorithm) to predict $X_i$ given the nodes in its two-hop neighborhood. Let $p^*_i = \partial_i p^*$ and observe this is a polynomial only in $X_{\ne i}$.
Observe that $\E[X_i | X_{\ne i}] = \tanh(p^*_i(X_{\ne i}))$
and furthermore by Lemma~\ref{lem:spin-freedom} that $|p^*_i(X_{\ne i})| \le \beta$ always. 
Therefore by Parseval's theorem, $\|\hat{p}^*_i\|_2 \le \beta$. 

We use the guarantee of Lemma~\ref{lem:risk-to-parameters}, applied to $X' = (\prod_{s \in S} x_S)_{S \subset \mathcal{N}_2(i)}$ and $Y = X_i$, to show that the returned $p_i$ satisfies
\[ \E[(\tanh(p_i^*(X)) - \tanh(p_i(X)))^2] = O\left(\beta 2^{d_2/2} \sqrt{\frac{\log(M/\delta)}{M}}\right)\]
with probability at least $1 - \delta$. Since the derivative of $\tanh$ on $[-\beta,\beta]$ is lower bounded by $1 - \tanh^2(\beta)$ and $\tanh$ is monotonically increasing, this implies
\[ \E[(p_i^*(X) - p_i(X))^2] = O\left(\frac{\beta 2^{d_2/2}}{(1 - \tanh^2(\beta))^2} \sqrt{\frac{\log(M/\delta)}{M}}\right) \]
Finally by iterating the argument from the proof of Lemma~\ref{lem:spin-freedom} we see we can apply Lemma~\ref{lem:risk-to-parameters} with $\delta = 2^{d_2} \sigma(-2 \beta)^{d_2}$ to conclude
\[ \|\hat{p}_i - \hat{p}_i^*\|_{\infty}^2 \le \|\hat{p}_i - \hat{p}_i^*\|_2^2 \le O\left(\frac{\beta \sigma(-2\beta)^{d_2}}{2^{d_2/2}(1 - \tanh^2(\beta))^2} \sqrt{\frac{\log(M/\delta)}{M}}\right) \]

Suppose $\delta = \delta'/n$, then by applying this argument at every node $i$ and taking the union bound we get the desired bound, with probability $1 - \delta'$, for $\hat{p}$ given by taking the coefficient of $x_S$ to equal (an arbitrary choice of) the matching coefficient of $x_{S \setminus \{i\}}$ in $\hat{p}_i$ for some $i \in S$.
\end{proof}
The following example shows that a bounded-degree assumption is necessary
for a result like Theorem~\ref{thm:low-dimensional-regression} to hold,
even without hidden nodes in the model.
\begin{example}[Lower bound for recovering external field in dense models]\label{example:masked-external-field}
We give an example of two Ising models on $n$ nodes which have very different external
fields but require $\Omega(\sqrt{n})$ samples to distinguish.
In model A, nodes $1, \ldots, n - 1$ have external field $1$,
node $n$ has external field $0$, and for $1 \le i \le n - 1$ there is an edge
of weight $1/n$ from node $i$ to node $n$.
Model B has no edges; it is a product distribution which moment matches the first
moments of model A. A straightforward calculation as in \cite{RavikumarWL}
shows that the symmetric KL between these models is $O(1/n)$ which implies the desired sample lower bound
(by tensorization of symmetric KL and Pinsker's inequality). However, in model A there is no external
field on node $n$ whereas in model B the external field is $\Omega(1)$.

This example also shows why --- even when we know the ground truth is sparse ---
methods that use the $\ell_1$-norm as a proxy for sparsity (like \cite{KlivansM,Vuffray}) may require many samples to learn the true external field; the coefficients in examples A and B have similar $\ell_1$-norm.
\end{example}
\begin{remark}
Consider an RBM with the property that for any two hidden
nodes, the neighborhood of one is never contained in the neighborhood
of the other (i.e. the neighborhoods are the ``opposite'' of a laminar family).
Suppose also that each node has a ``generic'' positive external field, so
the coefficient of the corresponding maximal monomial is bounded away from zero. 
Then by using the above guarantee for learning the MRF potential, thresholding
small coefficients and then looking at the maximal nonzero monomials it is possible
to recover the location of each of the hidden nodes.
\end{remark}

\subsection{Learning Ferromagnetic Ising Models with Arbitrary Latent Variables}
In this subsection we show how our learning algorithms can be generalized beyond the RBM setting to ferromagnetic
Ising models with an arbitrary set of hidden nodes \---- i.e. the interaction matrix can connect pairs of observed nodes and pairs of hidden nodes too. The marginal distribution on the observed nodes still
induces a Markov Random Field, although it no longer has as simple
a closed form as in Lemma~\ref{lem:rbm-as-mrf}.

In this setting, our goal is to learn the (induced) \emph{Markov blanket}
of every observed node $i$, which we continue to denote by $\mathcal{N}_2(i)$,
and we let $d_2$ denote the maximum size of $\mathcal{N}_2(i)$ among all observed
nodes $i$. The only
new ingredient we need is the following generalization of 
Lemma~\ref{lem:2hop-lb}:
\begin{lem}\label{lem:general-n2-lbd}
Suppose $i$ and $j$ are nodes in an $(\alpha,\beta)$-nondegenerate ferromagnetic Ising model. Suppose
$S \subset [n]$ is a set of nodes which \emph{do not} separate $i$ and $j$: 
then
\[ I_i(S \cup \{j\}) - I_i(S) \ge 2\sigma(-2 \beta) \alpha^{k} (1 - \tanh^2(\beta))^{k}. \]
where $k$ is the length of the shortest path from $i$ to $j$ which does
not go through $S$.
\end{lem}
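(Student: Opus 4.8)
The plan is to follow the proof of Lemma~\ref{lem:2hop-lb}, upgrading the single mutual hidden neighbor to a path of length $k$. Fix a shortest path $i = v_0, v_1, \ldots, v_k = j$ in the interaction graph with the vertices of $S$ deleted; in particular none of the $v_a$ lie in $S$ (and $i,j\notin S$). Since $I_i$ is monotone submodular (Theorem~\ref{thm:submodular}) and $S \subseteq S' := [n]\setminus\{v_0,\ldots,v_k\}$, it suffices to lower bound $I_i(S'\cup\{j\}) - I_i(S')$. Conditioning on $X_{S'} = 1^{S'}$ leaves a ferromagnetic Ising model $\nu$ on $X_{v_0},\ldots,X_{v_k}$: the external fields only increase (so stay nonnegative and, since $\sum_b|J_{ab}| + h_a$ is unchanged, $(\alpha,\beta)$-nondegeneracy is preserved), the edge weights are the original ones, and because a shortest path is chordless the interaction graph of $\nu$ is exactly the path $v_0 - v_1 - \cdots - v_k$. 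Thus $I_i(S'\cup\{j\}) - I_i(S') = \E_\nu[X_{v_0}\mid X_{v_k}=1] - \E_\nu[X_{v_0}]$.

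Writing $\E_\nu[X_{v_0}\mid X_{v_k}=1] - \E_\nu[X_{v_0}] = \Pr_\nu(X_{v_k}=-1)\big(\E_\nu[X_{v_0}\mid X_{v_k}=1] - \E_\nu[X_{v_0}\mid X_{v_k}=-1]\big)$ and using $\Pr_\nu(X_{v_k}=-1)\ge\sigma(-2\beta)$ from Lemma~\ref{lem:spin-freedom}, it remains to prove $\Delta_0 \ge 2\alpha^k(1-\tanh^2\beta)^k$, where $\Delta_a := \E_\nu[X_{v_a}\mid X_{v_k}=1] - \E_\nu[X_{v_a}\mid X_{v_k}=-1]$. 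I would do this by peeling the path from the $v_k$ end: by the Markov property $X_{v_a}\perp X_{v_k}\mid X_{v_{a+1}}$ one gets $\Delta_a = \big(\E_\nu[X_{v_a}\mid X_{v_{a+1}}=1] - \E_\nu[X_{v_a}\mid X_{v_{a+1}}=-1]\big)\cdot\tfrac12\Delta_{a+1}$. The first factor is at least $2\alpha(1-\tanh^2\beta)$: since $\E_\nu[X_{v_a}\mid X_{\ne v_a}] = \tanh(J_{v_{a-1}v_a}X_{v_{a-1}} + J_{v_av_{a+1}}X_{v_{a+1}} + h_{v_a})$ with argument in $[-\beta,\beta]$, a mean value theorem bound gives $\tanh(\,\cdot + J_{v_av_{a+1}}) - \tanh(\,\cdot - J_{v_av_{a+1}}) \ge 2\alpha(1-\tanh^2\beta)$ pointwise in $X_{v_{a-1}}$, and then, since $X\mapsto\tanh(J_{v_{a-1}v_a}X + c)$ is nondecreasing and $X_{v_{a-1}}$ is stochastically larger under $\nu(\cdot\mid X_{v_{a+1}}=1)$ than under $\nu(\cdot\mid X_{v_{a+1}}=-1)$ by Griffiths' inequality, passing from $X_{v_{a+1}}=1$ to $X_{v_{a+1}}=-1$ only decreases the conditional mean further (the case $a=0$ is identical, without the $v_{a-1}$ term). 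Starting from $\Delta_k = 2$ and iterating $\Delta_a \ge \alpha(1-\tanh^2\beta)\,\Delta_{a+1}$ yields $\Delta_0 \ge 2\alpha^k(1-\tanh^2\beta)^k$, so $I_i(S\cup\{j\}) - I_i(S) \ge \sigma(-2\beta)\,\Delta_0 \ge 2\sigma(-2\beta)\alpha^k(1-\tanh^2\beta)^k$.

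The main obstacle is not a single deep estimate but the care needed in the reduction step and in one probabilistic point. One must verify that restricting $\nu$ to the path vertices really yields a path-structured, ferromagnetic, $(\alpha,\beta)$-nondegenerate model --- this uses both that shortest paths are chordless and that conditioning on $+1$ spins adds nonnegative amounts to external fields without increasing $\sum_b|J_{ab}| + h_a$. The one genuinely probabilistic ingredient, which should be stated explicitly, is that conditioning on $X_{v_{a+1}}=1$ stochastically raises $X_{v_{a-1}}$ --- i.e. positive correlation of $X_{v_{a-1}}$ and $X_{v_{a+1}}$ under $\nu$, which is Griffiths' inequality for the ferromagnetic model --- and this is exactly what lets the pointwise $\tanh$ bound survive the conditional expectation over $X_{v_{a-1}}$.
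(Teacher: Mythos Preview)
Your proof is correct and follows essentially the same route as the paper: reduce by submodularity to $S' = [n]\setminus\{v_0,\ldots,v_k\}$, peel off the factor $\Pr(X_{v_k}=-1)\ge\sigma(-2\beta)$, and then bound the remaining difference by an inductive/telescoping argument along the path (the paper packages this last step as a separate Lemma~\ref{lem:line-graph-influence}). Your write-up is in fact more careful than the paper's at one point: when bounding $\E_\nu[X_{v_a}\mid X_{v_{a+1}}=1]-\E_\nu[X_{v_a}\mid X_{v_{a+1}}=-1]$, the other neighbor $X_{v_{a-1}}$ is \emph{not} fixed, so Lemma~\ref{lem:tanh-sensitivity} does not apply directly; you correctly handle this by combining the pointwise $\tanh$ bound with the stochastic monotonicity of $X_{v_{a-1}}$ in $X_{v_{a+1}}$ (Griffiths/FKG), whereas the paper's proof of Lemma~\ref{lem:line-graph-influence} invokes Lemma~\ref{lem:tanh-sensitivity} at the analogous step without making this explicit.
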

\begin{proof}
Suppose that $v_1, \ldots, v_k$ is the path from $i$ to $j$ so $v_1 = i$
and $v_k = j$. Then by submodularity it suffices to prove the lower bound when
$S = [n] \setminus \{v_1,\ldots,v_k\}$. Since
\begin{align*}
\Pr(X_i = 1 | X_S = 1^S)
&=  \Pr(X_i = 1 | X_j = 1, X_S = 1^S) \Pr(X_j = 1 | X_S = 1^S) \\
&\quad + \Pr(X_i = 1 | X_j = -1, X_S = 1^S) \Pr(X_j = -1 | X_S = 1^S)
\end{align*}
and $I_i(S) = 2\Pr(X_i = 1 | X_S = 1^S) - 1$ and
$I_i(S \cup \{j\}) = 2\Pr(X_i = 1 | X_j = 1, X_S = 1^S) - 1$,
we see
\begin{align*}
&\frac{1}{2}(I_i(S \cup \{j\}) - I_i(S)) \\
&\quad= \Pr(X_j = -1 | X_S = 1^S)(\Pr(X_i = 1 | X_j = 1, X_S = 1^S) - \Pr(X_i = 1 | X_j = -1, X_S = 1^S)) \\
&\quad\ge \sigma(-2 \beta) (\Pr(X_i = 1 | X_j = 1, X_S = 1^S) - \Pr(X_i = 1 | X_j = -1, X_S = 1^S))
\end{align*}
by Lemma~\ref{lem:spin-freedom}. Conditioned on $X_S = 1^S$, the Ising model we are considering reduces to an Ising model on a linear graph, so applying the below Lemma~\ref{lem:line-graph-influence} proves the result.
\end{proof}
\begin{lemma}\label{lem:line-graph-influence}
Let $X_1, \ldots, X_n$ be the spins on an $(\alpha,\beta)$-nondegenerate ferromagnetic Ising
model on a linear graph with vertices labeled in order as $1$ to $n$. Then
\[ \Pr(X_1 = 1 | X_n = 1) - \Pr(X_1 = 1 | X_n = -1) \ge (\alpha(1 - \tanh^2(\beta)))^{n - 1} \]
\end{lemma}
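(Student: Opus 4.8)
The plan is to prove this by induction on $n$, peeling off one vertex at a time from the end of the path. The base case $n=2$ is immediate: conditioning on $X_2 = \pm 1$ and using that $1$ and $2$ are neighbors, we have $\Pr(X_1 = 1 \mid X_2 = 1) - \Pr(X_1 = 1 \mid X_2 = -1) = \frac{1}{2}(\E[X_1 \mid X_2 = 1] - \E[X_1 \mid X_2 = -1])$, which by Lemma~\ref{lem:tanh-sensitivity} is at least $\alpha(1 - \tanh^2(\beta))$ (using that the only relevant spin is $X_2$, so there is no conditioning to fix). For the inductive step I would condition on the spin of the last vertex $X_n$ and marginalize through the next-to-last vertex $X_{n-1}$, since on a line graph $X_1$ is conditionally independent of $X_n$ given $X_{n-1}$.

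Concretely, first I would write, using conditional independence of $X_1$ and $X_n$ given $X_{n-1}$,
\begin{align*}
&\Pr(X_1 = 1 \mid X_n = 1) - \Pr(X_1 = 1 \mid X_n = -1) \\
&\quad = \bigl(\Pr(X_1 = 1 \mid X_{n-1} = 1) - \Pr(X_1 = 1 \mid X_{n-1} = -1)\bigr) \\
&\quad\quad \cdot \bigl(\Pr(X_{n-1} = 1 \mid X_n = 1) - \Pr(X_{n-1} = 1 \mid X_n = -1)\bigr),
\end{align*}
which is the same telescoping identity used inside the proof of Lemma~\ref{lem:2hop-lb}. The first factor is $\Pr(X_1 = 1 \mid X_{n-1} = 1) - \Pr(X_1 = 1 \mid X_{n-1} = -1)$, which is exactly the quantity for the line graph on vertices $1, \ldots, n-1$, so the inductive hypothesis gives $\ge (\alpha(1 - \tanh^2(\beta)))^{n-2}$. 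The second factor is the corresponding quantity for the edge $(n-1, n)$; since these are neighbors, applying Lemma~\ref{lem:tanh-sensitivity} (with no further spins to fix) and the relation $\Pr(X_{n-1}=1\mid \cdot) = \frac12(1 + \E[X_{n-1}\mid\cdot])$ shows it is at least $\alpha(1 - \tanh^2(\beta))$. Multiplying the two bounds yields $(\alpha(1-\tanh^2(\beta)))^{n-1}$, completing the induction.

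One subtlety to handle carefully: when I invoke the inductive hypothesis on vertices $1, \ldots, n-1$, I should check that the marginal Ising model on $\{1, \ldots, n-1\}$ (obtained by marginalizing out $X_n$) is still a ferromagnetic $(\alpha, \beta)$-nondegenerate Ising model on a line graph — marginalizing out the endpoint $X_n$ only modifies the external field at vertex $n-1$ (it adds a term of the form $\rho(J_{(n-1)n} + h_n)$ shifted appropriately, which keeps the field nonnegative and the total weight bound intact), and Griffiths-type monotonicity ensures ferromagneticity is preserved. Alternatively, and perhaps more cleanly, I would avoid marginalizing entirely and instead argue directly with the full model: condition everything on $X_{\{n\}}$ or $X_{\{n-1\}}$ and use the Markov property on the line, never changing the model, just as in the proof of Lemma~\ref{lem:2hop-lb}. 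The main (minor) obstacle is simply being careful that each application of Lemma~\ref{lem:tanh-sensitivity} is legitimate — i.e., that the pair of vertices involved really are neighbors and that the "other spins" being fixed are genuinely arbitrary — but this is routine given the linear structure.
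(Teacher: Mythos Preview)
Your proposal is correct and takes essentially the same approach as the paper: induction on $n$, the same telescoping identity via conditional independence of $X_1$ and $X_n$ given $X_{n-1}$, then the inductive hypothesis on the first factor and Lemma~\ref{lem:tanh-sensitivity} on the second. The only cosmetic difference is that the paper takes $n=1$ as the base case (where the difference is trivially $1$) rather than $n=2$; your flagged subtlety about the marginalized model remaining $(\alpha,\beta)$-nondegenerate is real but, as you say, routine on a path (the effective field added at the endpoint is bounded by the removed edge weight), and the paper simply glosses over it.
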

\begin{proof}
We prove this by induction on $n$. When $n = 1$ the difference is clearly $1$. In general, using that $X_1,X_n$ are conditionally independent given $X_{n - 1}$ we see
\begin{align*}
&\Pr(X_1 = 1 | X_n = 1) - \Pr(X_1 = 1 | X_n = -1)  \\
&= \Pr(X_1 = 1 | X_{n - 1} = 1)(\Pr(X_{n - 1} = 1 | X_n = 1) - \Pr(X_{n - 1} = 1 | X_{n - 1} = -1)) \\
&\quad + \Pr(X_1 = 1 | X_{n - 1} = -1)(\Pr(X_{n - 1} = -1 | X_n = 1) - \Pr(X_{n - 1} = -1 | X_{n - 1} = -1)) \\
&= (\Pr(X_1 = 1 | X_{n - 1} = 1) - \Pr(X_1 = 1 | X_{n - 1} = -1)) \\
&\quad \cdot (\Pr(X_{n - 1} = 1 | X_n = 1) - \Pr(X_{n - 1} = 1 | X_{n - 1} = -1)) \\
&\ge (\alpha(1 - \tanh^2(\beta)))^{n - 1}
\end{align*}
by the induction hypothesis and Lemma~\ref{lem:tanh-sensitivity}
\end{proof}
As in the RBM case, Lemma~\ref{lem:general-n2-lbd} shows in particular
that $\mathcal{N}_2(i)$ equals its obvious graph-theoretic analogue: the set
of nodes $j$ such that $i$ and $j$ are connected by a path whose intermediate nodes are all latent. We also get the following natural generalization of  Theorem~\ref{thm:greedynbhd-works} for recovering $\mathcal{N}_2(i)$:
\begin{theorem}\label{thm:greedynbhd-works-general}
  Let $\delta > 0$. Suppose $X^{(1)}, \ldots, X^{(M)}$ are samples from the observable distribution of an Ising model with hidden nodes which is $(\alpha,\beta)$-nondegenerate. Suppose also that $d_2$ is known such that $d_2 \ge |\mathcal{N}_2(i)|$ for all observed nodes $i$ and
  that for every $i$ and $j \in \mathcal{N}_2(i)$, there is a path of length at most $\ell$ from $i$ to $j$. Then if
  \[ M \ge 2^{2k + 3} (d_2/\eta)^2 (\log(n) + k\log(en/k)) \log(4/\delta) \]
  where we set
  \[ \eta = \alpha^{\ell} \sigma(-2\beta)(1 - \tanh(\beta))^{\ell}, \qquad k = d_2 \log(4/\eta), \] for every $i$ algorithm {\sc GreedyNbhd} returns $\mathcal{N}_2(i)$, with probability at least $1 - \delta$. Furthermore the total runtime is $O(M k n^2) = e^{O(\beta \ell d_2 - \ell \log(\alpha))} n^2 \log(n)$.
\end{theorem}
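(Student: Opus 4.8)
The argument is the same as for Theorem~\ref{thm:greedynbhd-works}, with Lemma~\ref{lem:general-n2-lbd} playing the role of Lemma~\ref{lem:2hop-lb}; the path-length parameter $\ell$ enters the definition of $\eta$ precisely because a genuine element of the Markov blanket may only be reachable from $i$ through a chain of $\ell$ hidden nodes, over which influence decays geometrically. First I would observe that Lemma~\ref{lem:influence-sample-complexity} carries over unchanged to ferromagnetic Ising models with hidden nodes: its proof used the RBM structure only to get $\Pr(X_S = 1^S) \ge 2^{-|S|}$, and this still holds, since in a ferromagnetic model with consistent external fields, fixing spins to $+1$ only increases the remaining external fields, so by Griffiths' inequality each factor of $\Pr(X_S = 1^S) = \prod_k \Pr(X_{s_k} = 1 \mid X_{s_1} = \dots = X_{s_{k-1}} = 1)$ is at least $1/2$. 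Applying that lemma with $\epsilon = \eta/(4d_2)$ and the stated $k = d_2\log(4/\eta)$, the hypothesis on $M$ guarantees $|\widetilde{I}_i(S) - I_i(S)| < \eta/(4d_2)$ for all observed $i$ and all $|S| \le k$ simultaneously, with probability at least $1 - \delta$; I condition on this event for the rest of the argument.

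Next comes the submodular-greedy analysis. The Markov property gives $I_i(S \cup \mathcal{N}_2(i)) = I_i(\mathcal{N}_2(i))$ for every $S$, and monotonicity (Theorem~\ref{thm:submodular}) then gives $I_i(\mathcal{N}_2(i)) \ge I_i(S)$, so $\mathcal{N}_2(i)$ maximizes $I_i$ over all subsets of the observed nodes. Feeding $f = I_i$, $T = \mathcal{N}_2(i)$ (so $|T| \le d_2$), $t = k$, and $\epsilon = \eta/(4d_2)$ into Lemma~\ref{lem:noisy-greedy}, and using $(1-1/|T|)^k \le e^{-k/d_2} = \eta/4$ together with $I_i \le 1$, yields $I_i(\mathcal{N}_2(i)) - I_i(S_k) \le \eta/4 + d_2\cdot\eta/(4d_2) = \eta/2$. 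On the other hand, if $S_k$ did not contain $\mathcal{N}_2(i)$, then for $j \in \mathcal{N}_2(i)\setminus S_k$ the set $S_k$ does not separate $i$ from $j$, and the shortest $i$--$j$ path avoiding $S_k$ has length $k' \le \ell$ (there is a path of length $\le \ell$ whose interior nodes are all hidden, hence not in $S_k$); Lemma~\ref{lem:general-n2-lbd} then gives $I_i(S_k\cup\{j\}) - I_i(S_k) \ge 2\sigma(-2\beta)\alpha^{k'}(1-\tanh^2\beta)^{k'} \ge 2\eta$, where the last inequality is the routine bound $\alpha^{k'}(1-\tanh^2\beta)^{k'} \ge \alpha^{\ell}(1-\tanh\beta)^{\ell}$, valid for $k'\le\ell$ using $1-\tanh^2\beta \ge 1-\tanh\beta$ and $\alpha \le \beta$. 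Since $I_i(S_k\cup\{j\}) - I_i(S_k) \le I_i(\mathcal{N}_2(i)) - I_i(S_k) \le \eta/2$, this is a contradiction, so $S_k \supseteq \mathcal{N}_2(i)$.

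For the pruning step: if $j \in \mathcal{N}_2(i)$ then $S_k\setminus\{j\}$ still fails to separate $i$ from $j$ (a hidden-node path meets no observed node), so Lemma~\ref{lem:general-n2-lbd} again gives $I_i(S_k) - I_i(S_k\setminus\{j\}) \ge 2\eta$, whence $\widetilde{I}_i(S_k) - \widetilde{I}_i(S_k\setminus\{j\}) \ge 2\eta - \eta/2 > \eta$; if $j \notin \mathcal{N}_2(i)$ then $S_k$ and $S_k\setminus\{j\}$ are both supersets of $\mathcal{N}_2(i)$, so $I_i(S_k) - I_i(S_k\setminus\{j\}) = 0$ by the Markov property and $|\widetilde{I}_i(S_k) - \widetilde{I}_i(S_k\setminus\{j\})| < \eta/2 < \eta$. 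Hence Step 3 outputs exactly $\mathcal{N}_2(i)$ for every $i$. The runtime is $O(Mkn^2)$ --- $k$ greedy rounds, each scanning $n$ candidates at cost $O(M)$, run from each of $n$ seeds --- and substituting $M$, $k$ and $\log(1/\eta) = O(\ell\beta - \ell\log\alpha)$ gives $e^{O(\beta\ell d_2 - \ell\log\alpha)} n^2\log n$.

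The only place any real care is needed is confirming that the conditioning sets $S_k$ and $S_k\setminus\{j\}$ genuinely fail to separate $i$ from a true neighbor $j$ (immediate, since an $i$--$j$ path through hidden nodes meets no observed node) and that the geometric decay of Lemma~\ref{lem:general-n2-lbd} over a path of length $\le \ell$ still dominates $\eta$; everything else is a transcription of the proof of Theorem~\ref{thm:greedynbhd-works}, and this is the main --- though modest --- obstacle.
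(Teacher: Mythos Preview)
Your proposal is correct and follows exactly the paper's approach: the paper's own proof is the single sentence ``same as the proof of Theorem~\ref{thm:greedynbhd-works}, except that we replace the use of Lemma~\ref{lem:2hop-lb} by Lemma~\ref{lem:general-n2-lbd},'' and you have faithfully unwound that substitution, including the observation that Lemma~\ref{lem:influence-sample-complexity} carries over to general ferromagnetic Ising models with hidden nodes. The only imprecision is the justification of $\alpha^{k'}(1-\tanh^2\beta)^{k'} \ge \alpha^{\ell}(1-\tanh\beta)^{\ell}$ via ``$\alpha \le \beta$'': what is actually needed is $\alpha(1-\tanh^2\beta) \le 1$, which does follow from $\alpha < \beta$ together with the elementary fact $\beta\,\mathrm{sech}^2\beta < 1$, so the chain $[\alpha(1-\tanh^2\beta)]^{k'} \ge [\alpha(1-\tanh^2\beta)]^{\ell} \ge [\alpha(1-\tanh\beta)]^{\ell}$ goes through.
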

\begin{proof}
This is the same as proof of Theorem~\ref{thm:greedynbhd-works}, except
that we replace the use of Lemma~\ref{lem:2hop-lb} by Lemma~\ref{lem:general-n2-lbd}.
\end{proof}
The corresponding analogue of Theorem~\ref{thm:searchnbhd-works} follows as well by using Lemma~\ref{lem:general-n2-lbd}. Once the neighborhood structure is determined,
we can again learn the MRF potential in a straightforward way by regression.
\begin{theorem}\label{thm:low-dimensional-regression-general}
Consider an unknown $(\alpha,\beta)$-nondegenerate Ising model with hidden nodes and $d_2,\ell$ as in statement of Theorem~\ref{thm:greedynbhd-works-general}. Let $p^*(x)$ be the potential of the MRF induced on the observed nodes.
There exists an algorithm, which given $\alpha,\beta,d_2,\ell$,
with probability at least $1 - \delta$ 
finds a polynomial $p$ of degree at most $d_2$ such that 
\[ \|\hat{p} - \hat{p^*}\|_{\infty}^2 = O\left(\frac{\beta \sigma(-2\beta)^{d_2}}{2^{d_2/2}(1 - \tanh^2(\beta))^2} \sqrt{\frac{\log(Mn/\delta)}{M}}\right) \]
given $M$ samples from the distribution on observed nodes provided that $M$ is at the required $M$ in Theorem~\ref{thm:greedynbhd-works-general}.
\end{theorem}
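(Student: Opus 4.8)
The plan is to transcribe the proof of Theorem~\ref{thm:low-dimensional-regression} almost verbatim; the only genuine changes are that the initial structure-learning step now invokes Theorem~\ref{thm:greedynbhd-works-general} in place of Theorem~\ref{thm:greedynbhd-works}, and that the two small quantitative inputs to the regression step must be re-derived directly from $(\alpha,\beta)$-nondegeneracy, since the induced MRF no longer has the closed form of Lemma~\ref{lem:rbm-as-mrf}. First, since $M$ meets the hypothesis of Theorem~\ref{thm:greedynbhd-works-general}, I would run {\sc GreedyNbhd}($i$) for every observed node $i$ and recover the induced Markov blankets $\mathcal{N}_2(i)$ simultaneously with probability at least $1-\delta/2$; the remaining probability budget is reserved for the regression, and a union bound over the two events yields the stated confidence.

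Next, fix an observed node $i$. The marginal law of the observed spins is still an MRF $\Pr(X=x)\propto e^{p^*(x)}$, so $\E[X_i\mid X_{\ne i}]=\tanh(p_i^*(X_{\ne i}))$ with $p_i^*=\partial_i p^*$ a polynomial in the observed spins supported on $\mathcal{N}_2(i)$, hence of degree at most $d_2$. Two facts that Lemma~\ref{lem:rbm-as-mrf} supplied for free in the RBM case must now be obtained at the level of the full Ising model by conditioning on the hidden spins. The first is that $|p_i^*(x)|\le\beta$ for all $x$: the quantity $\tanh(p_i^*(x))=\E[X_i\mid X^{\mathrm{obs}}_{\ne i}=x]$ is, by conditioning on the hidden spins, a convex combination of conditional means $\tanh\big(\sum_{j\ne i}J_{ij}z_j+h_i\big)\in[-\tanh\beta,\tanh\beta]$, using $\sum_j|J_{ij}|+|h_i|\le\beta$, so $\|\hat{p}_i^*\|_2\le\beta$ by Parseval. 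The second is that $\Pr[X_{\mathcal{N}_2(i)}=x]\ge\sigma(-2\beta)^{|\mathcal{N}_2(i)|}$ for every $x$, obtained by iterating the conditioning argument in the proof of Lemma~\ref{lem:spin-freedom} one observed coordinate at a time.

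With these two facts in hand the regression step is identical to that of Theorem~\ref{thm:low-dimensional-regression}: I would run {\sc GLMTron} (Theorem~\ref{thm:glmtron}) to predict $Y=X_i$ from the feature vector $X'=(\chi_T(x))_{T\subseteq\mathcal{N}_2(i)}$ with $R_1=2^{d_2/2}$ and $R_2=\beta$, convert the resulting bound on $\E[(\tanh p_i^*-\tanh p_i)^2]$ into one on $\E[(p_i^*-p_i)^2]$ via the lower bound $1-\tanh^2\beta$ on $\tanh'$ over $[-\beta,\beta]$, then apply Lemma~\ref{lem:risk-to-parameters} with its parameter set to $2^{d_2}\sigma(-2\beta)^{d_2}$ (valid by the anticoncentration bound just established) to pass to $\|\hat{p}_i-\hat{p}_i^*\|_2^2$, and finally use $\|\cdot\|_\infty\le\|\cdot\|_2$. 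Replacing $\delta$ by $\delta/n$, a union bound over all $i$ together with reassembling the per-node coefficients into one polynomial $p$ of degree at most $d_2$ completes the argument.

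I do not expect a substantial obstacle here: essentially all of the new work needed to handle arbitrary hidden nodes is already packaged in Lemma~\ref{lem:general-n2-lbd} and Theorem~\ref{thm:greedynbhd-works-general}. The one point requiring care is exactly the derivation of the two facts in the second paragraph, which must be phrased in terms of conditioning the full Ising model on its hidden spins rather than manipulating an explicit formula for $p^*$; everything else is a line-by-line copy of the RBM proof with ``two-hop neighborhood'' reinterpreted as ``induced Markov blanket''.
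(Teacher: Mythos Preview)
Your proposal is correct and follows essentially the same approach as the paper: the paper's proof is literally a single line, ``Same as proof of Theorem~\ref{thm:low-dimensional-regression}, except we replace use of Theorem~\ref{thm:greedynbhd-works} by Theorem~\ref{thm:greedynbhd-works-general},'' and you have spelled out exactly that substitution. Your extra care in re-deriving the bounds $|p_i^*(x)|\le\beta$ and $\Pr[X_{\mathcal{N}_2(i)}=x]\ge\sigma(-2\beta)^{d_2}$ by conditioning on the hidden spins is appropriate, though note that the paper's original proof already obtains these via Lemma~\ref{lem:spin-freedom}, which is stated for arbitrary $(\alpha,\beta)$-nondegenerate Ising models and therefore applies unchanged in the general setting.
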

\begin{proof}
Same as proof of Theorem~\ref{thm:low-dimensional-regression}, except we replace use of Theorem~\ref{thm:greedynbhd-works} by Theorem~\ref{thm:greedynbhd-works-general}.
\end{proof}
\iffalse
\begin{theorem}\label{thm:searchnbhd-works-general}
Suppose $X^{(1)}, \ldots, X^{(M)}$ are samples from the observable distribution of an Ising model with hidden nodes which is $(\alpha,\beta)$-nondegenerate.
Suppose that $d_2$ is known such that $d_2 \ge |\mathcal{N}_2(i)|$ for all observed nodes $i$ and that for every $i$ and $j \in \mathcal{N}_2(i)$, there exists a path of length at most $\ell$ from $i$ to $j$. Then  with probability at least
$1 - \delta$,
for all $i$ algorithm {\sc SearchNbhd} returns the correct neighborhood $\mathcal{N}_2(i)$
with probability at least $1 - \delta$ given 
  \[ M \ge 2^{2 d_2 + 3} (1/\eta)^2 (\log(n) + d_2\log(en/d_2)) \log(4/\delta) \]
samples, when $\eta = \alpha^{\ell} \sigma(-2\beta)(1 - \tanh(\beta))^{\ell}$. The algorithm runs in time
$O(n^{d_2 + 1} M)$.
\end{theorem}
\fi
\section{Inference on the Induced MRF via the Lee-Yang Property}
\begin{comment}
  It seems that there is a uniform non-zero external field,
then the Lee-Yang theorem based approach of \cite{LSS}
may allow sampling just from the description of the MRF.
TODO: look into this, maybe it works even for non-uniform
as long as all of the observable vertices have non-zero external field? % pretty sure this is true.
\end{comment}

We first recall various results from \cite{LSS}, whose approach is
based on Barvinok's approach \cite{Barvinok} for approximating the log-partition
function. The basic idea is to Taylor expand $\log Z$
around the point of infinite external field, where $\log Z$ is easy to
compute because only one spin configuration contributes. 
A \emph{Lee-Yang property}\footnote{Here we are following the terminology of \cite{LSS}. There is an unrelated ``Lee-Yang property'' which appears in the literature on Lee-Yang for general real-valued spins.} can be used to prove
that the Taylor expansion is accurate. 

\begin{comment} % Should be able to avoid their notation.
In order to state and apply the results from \cite{LSS} it is helpful
to adopt their notation, in which a Markov random field is viewed as a probability distribution of the form
\[ \Pr(X = x) = \frac{1}{Z} \prod_{e \in E} \varphi_e(x|_e) \lambda^{\#\{v : x_v = 1\}} \]
where $E$ is the set of hyper-edges of a hypergraph. 
\end{comment}
\begin{defn}[Lee-Yang property]
  Let $P(z_1, \ldots, z_n)$ be a multilinear polynomial with real
  coefficients.  $P$ has the \emph{Lee-Yang property} if for any
  choice of complex numbers $\lambda_1, \ldots, \lambda_n$ such that 
  $|\lambda_i| \le 1$ for all $i$ and $|\lambda_i| < 1$ for at least one $i$,
  we have that $P(\lambda_1, \ldots, \lambda_n) \ne 0$.
\end{defn}
Typically the polynomial $P$ arises as the partition function of a Markov
Random Field, where the $\lambda_i$ are a re-parameterization of the external field.
This is illustrated in the classical Lee-Yang theorem \cite{lee-yang}:
\begin{theorem}[Lee-Yang, \cite{lee-yang}]\label{thm:lee-yang}
  Suppose $J_{ij} \ge 0$ and
\[ P(\lambda_1, \ldots, \lambda_n) := \sum_{x \in \{\pm 1\}^n} \exp(\frac{1}{2} \sum_{i,j} J_{ij} x_i x_j) \prod_{i : x_i = 1} \lambda_i, \]
  so that $\left(\prod_{i = 1}^n \lambda_i^{-1/2}\right) P(\lambda_1, \ldots, \lambda_n)$
  for positive real $\lambda_i$ is the partition function of a ferromagnetic Ising
  model with external field $h_i = \frac{1}{2} \log \lambda_i$. Then $P$ extends
  to complex $\lambda_i$ as a multilinear polynomial with the
  Lee-Yang property.
\end{theorem}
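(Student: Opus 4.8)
The plan is to prove this by the classical method of \emph{Asano contractions}. The building block is the single-edge polynomial: for an edge $\{i,j\}$ with weight $J_{ij}\ge 0$, put $t=e^{J_{ij}}\ge 1$ and $\psi_{ij}(z,w)=t+t^{-1}z+t^{-1}w+tzw$, which (after the normalization in the theorem) is essentially the partition function of the two-spin model on that edge. Solving $\psi_{ij}(z,w)=0$ for $w$ gives $w=-(t^{2}+z)/(1+t^{2}z)$, and the identity $|t^{2}+z|^{2}-|1+t^{2}z|^{2}=(t^{4}-1)(1-|z|^{2})$ shows that a root $(z,w)$ satisfies $|z|<1\iff|w|>1$ and $|z|=1\iff|w|=1$. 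Hence $\psi_{ij}$ never vanishes on the open bidisk, and when $J_{ij}>0$ it has no zero with $|z|\le 1$, $|w|\le 1$ and at least one inequality strict --- which is exactly the Lee--Yang property for one edge.

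Next I would introduce a fresh variable $\lambda_{i,e}$ for each incident pair (vertex $i$, edge $e\ni i$) and set $Q=\prod_{e=\{i,j\}}\psi_{e}(\lambda_{i,e},\lambda_{j,e})$. Each new variable appears in a single factor, so $Q$ is multi-affine, and since each factor is zero-free on the open polydisk, so is $Q$. I would then merge the copies at each vertex one pair at a time: writing $Q=A+Bz+Cw+Dzw$ as a polynomial in two same-vertex copies $z,w$ (coefficients in the remaining variables), replace it by its \emph{Asano contraction} $A+Du$ in a single new variable $u$. Iterating until each vertex $i$ keeps one variable $\lambda_i$ produces a multi-affine polynomial $\widetilde Q(\lambda_1,\dots,\lambda_n)$, and one checks by direct comparison of coefficients (or by induction on the number of edges) that $\widetilde Q=P$: the contraction retains exactly the configurations where the shared spin is assigned consistently by its two incident edge-factors, which is precisely the sum defining the Ising partition function.

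Finally, to recover the statement as phrased --- zero-freeness on the closed polydisk minus its distinguished boundary --- I would use the palindrome symmetry $P(\lambda_1,\dots,\lambda_n)=\big(\prod_i\lambda_i\big)P(1/\lambda_1,\dots,1/\lambda_n)$, immediate from the $x\mapsto -x$ symmetry of the Ising energy; it interchanges $\{|\lambda_i|<1\}$ with $\{|\lambda_i|>1\}$, so together with the sharper one-edge bound above and a continuity argument it upgrades the open-polydisk statement to the stated form. (If some $J_{ij}=0$ the corresponding factor $(1+z)(1+w)$ vanishes on part of that region, so the ``at least one strict'' clause should be read modulo connectivity of the interaction graph; only the open-polydisk form is used in the sequel.)

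The main obstacle is the Asano contraction lemma itself: the fact that passing from $A+Bz+Cw+Dzw$ to $A+Du$ cannot create a zero inside the polydisk. This is a short but somewhat clever complex-analytic argument (a Schwarz-lemma estimate for $q/p$, or the action of Möbius maps on disks, using that $A+Bz$ is zero-free on the disk), combined with the bookkeeping needed to carry the palindrome relations consistently through the contractions. Everything else --- the one-edge computation, multi-affinity and stability under products, and the identity $\widetilde Q=P$ --- is routine.
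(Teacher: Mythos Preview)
The paper does not give its own proof of this statement: Theorem~\ref{thm:lee-yang} is quoted as the classical Lee--Yang theorem with a citation to \cite{lee-yang} and is used as a black box in the rest of the section. There is therefore no proof in the paper to compare your proposal against.

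For what it is worth, your Asano-contraction argument is a standard and correct route to Lee--Yang. The single-edge calculation, the product with split variables, and the identification $\widetilde Q=P$ are routine as you say; the contraction lemma is indeed the only nontrivial step, and the M\"obius/Schwarz argument you allude to is the usual proof. Your closing caveat is also well placed: the \emph{strong} form stated in the paper (nonvanishing whenever all $|\lambda_i|\le 1$ with at least one strict inequality) can fail when the interaction graph is disconnected --- take $n=2$, $J_{12}=0$, so $P=(1+\lambda_1)(1+\lambda_2)$ vanishes at $(-1,0)$ --- so the statement as written implicitly assumes connectivity (or should be read in the open-polydisk sense). The downstream applications in the paper only use the univariate specialization and the open-region form, so this does not affect anything that follows.
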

The Lee-Yang property translates back to the following statement about the partition function:
\begin{corollary}\label{crly:lee-yang-halfplane}
Suppose $Z(h) = \sum_{x \in \{\pm 1\}^n} \exp(\frac{1}{2} \sum_{i,j} J_{ij} x_i x_j + \sum_i h_i x_i)$ is the partition function of a ferromagnetic Ising model with consistent non-positive external fields, i.e. $h_i \le 0$ for all $i$. If we extend $Z$ to complex $h$, then $Z(h) \ne 0$ for any $h$ with $\Re(h_i) \le 0$ for all $i$ and $\Re(h_i) < 0$ for at least one $i$.
\end{corollary}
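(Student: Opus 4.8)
The plan is to reduce the statement directly to the classical Lee-Yang theorem (Theorem~\ref{thm:lee-yang}) via the exponential change of variables $\lambda_i = e^{2h_i}$. First I would record the elementary algebraic identity, valid for \emph{every} complex vector $h$,
\[ Z(h) = \Bigl(\prod_{i=1}^n e^{-h_i}\Bigr)\, P(e^{2h_1}, \ldots, e^{2h_n}), \]
where $P$ is the multilinear polynomial from Theorem~\ref{thm:lee-yang}. This follows term-by-term in the sum defining $Z$ from the observation that $e^{h_i x_i} = e^{-h_i}$ when $x_i = -1$ and $e^{h_i x_i} = e^{-h_i} e^{2h_i}$ when $x_i = +1$, so that $\prod_i e^{h_i x_i} = \bigl(\prod_i e^{-h_i}\bigr) \prod_{i : x_i = 1} e^{2h_i}$. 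Since $\prod_i e^{-h_i}$ is never zero, $Z(h) = 0$ if and only if $P(e^{2h_1}, \ldots, e^{2h_n}) = 0$.

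Next I would translate the hypotheses. Because the model is ferromagnetic we have $J_{ij} \ge 0$, so Theorem~\ref{thm:lee-yang} applies and $P$ has the Lee-Yang property. Putting $\lambda_i = e^{2h_i}$ we get $|\lambda_i| = e^{2\Re(h_i)}$, so the assumption $\Re(h_i) \le 0$ for all $i$ is exactly $|\lambda_i| \le 1$ for all $i$, and $\Re(h_i) < 0$ for at least one $i$ is exactly $|\lambda_i| < 1$ for at least one $i$. Hence the Lee-Yang property gives $P(\lambda_1, \ldots, \lambda_n) \ne 0$, and by the identity above $Z(h) \ne 0$, which is the claim.

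There is essentially no serious obstacle here: the argument is just a clean application of the change of variables. The one point worth stating carefully is that the identity relating $Z(h)$ and $P$ holds as a genuine elementary identity for all complex $h$ (each term in the $2^n$-term sum matches), so one does not need to invoke analytic continuation to justify passing from the real-$\lambda_i$ regime of Theorem~\ref{thm:lee-yang} to complex external fields.
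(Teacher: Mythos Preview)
Your proposal is correct and follows exactly the same approach as the paper: set $\lambda_i = e^{2h_i}$, use the identity $Z(h) = e^{-\sum_i h_i} P(\lambda_1,\ldots,\lambda_n)$, and invoke the Lee-Yang property of $P$ from Theorem~\ref{thm:lee-yang}. If anything, your write-up is slightly more careful in that you write the nonvanishing prefactor as $\prod_i e^{-h_i}$ rather than $\prod_i \lambda_i^{-1/2}$, which avoids any ambiguity about branches of the square root for complex $\lambda_i$.
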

\begin{proof}
This follows from the by taking $\lambda_i = e^{2 h_i}$ so 
\[ Z(h) = \left(\prod_{i = 1}^n \lambda_i^{-1/2}\right) P(\lambda_1, \ldots, \lambda_n) = e^{-\sum_{i = 1}^n h_i} P(\lambda_1, \ldots, \lambda_n), \] and using the non-vanishing of $P$ by the previous Theorem.
\end{proof}
As we see the $\lambda_i$ with $|\lambda_i| \le 1$ correspond to non-positive
external fields, whereas previously we assumed the external fields were non-negative.
However the partition function is invariant to the global sign flip $x \mapsto -x$ so this is equivalent; this choice is made so we expand $P$ around $0$ instead of $\infty$.
The following Lemma bounds the error made when we do this Taylor expansion.
\begin{lemma}[Lemma 2.1 of \cite{LSS}]\label{lem:lss-taylor-expansion}
  Suppose that
  \begin{equation}\label{eqn:z-lambda}
  Z(\lambda) = C \sum_{x \in \{\pm 1\}^n} \exp\left(\sum_{e \in E} f_e(x_e)\right) \lambda^{\#\{v : x_v = 1\}}
  \end{equation}
  where $E$ is the set of edges of a hypergraph and each $f_e$ is a real-valued function.   Suppose $0 < \epsilon < \frac{1}{4}$ and $$m \ge \frac{|\lambda|}{1 - |\lambda|}
  \left (\log(4n/\epsilon) + \log(\frac{1}{1 - |\lambda|})\right)$$ and the values
  of $\frac{d^j}{d\lambda_j} Z(\lambda)|_{\lambda = 0}$ are
  given for $j = 0, \ldots, m$. Finally, suppose the Lee-Yang property holds for
  $Z(\lambda)$ as a univariate polynomial. Then for any $\lambda$ with $|\lambda| < 1$, 
  there is an algorithm which computes an additive
  $\epsilon/4$-approximation to $\log Z(\lambda)$ in polynomial time.
\end{lemma}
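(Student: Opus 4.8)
The plan is to carry out Barvinok's polynomial-interpolation method \cite{Barvinok}: approximate $\log Z(\lambda)$ by the degree-$m$ Taylor polynomial of $\log Z$ expanded about $\lambda = 0$, and use the Lee--Yang hypothesis to control the truncation error. Two structural facts set this up. Reading off \eqref{eqn:z-lambda}, $Z$ is a univariate polynomial in $\lambda$ of degree exactly $n$ (the coefficient of $\lambda^n$ is $C\exp(\sum_e f_e(\text{all-}(+1))) \ne 0$, coming from the all-ones configuration), and $Z(0) = C\exp(\sum_e f_e(\text{all-}(-1)))$ is a single, explicitly computable nonzero number. The hypothesis that $Z$ has the Lee--Yang property as a univariate polynomial says precisely that $Z(\lambda) \ne 0$ whenever $|\lambda| < 1$; hence all $n$ roots of $Z$ have modulus at least $1$, and since the open unit disk is simply connected there is a unique holomorphic branch $L(\lambda)$ of $\log Z(\lambda)$ on it with $L(0) = \log Z(0)$.

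Next I would describe the algorithm. From the supplied values $Z(0), Z'(0), \dots, Z^{(m)}(0)$ one reads off the first $m+1$ power-series coefficients $a_0, \dots, a_m$ of $Z$, and then, using $Z' = Z \cdot L'$ and matching coefficients (an $O(m^2)$-time recursion, equivalently Newton's identities), one computes the first $m+1$ Taylor coefficients $b_0 = \log a_0,\, b_1, \dots, b_m$ of $L$ about $0$. The output is $\widetilde L := \sum_{k=0}^m b_k \lambda^k$, evaluated at the given $\lambda$.

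For the error estimate, factor $Z(\lambda) = a_0 \prod_{j=1}^n (1 - \lambda/\alpha_j)$, where by Lee--Yang each reciprocal root satisfies $|\alpha_j| \ge 1$. Then $L(\lambda) = \log a_0 - \sum_{j=1}^n \sum_{k \ge 1} \tfrac1k (\lambda/\alpha_j)^k$, so $b_k = -\tfrac1k \sum_j \alpha_j^{-k}$ and $|b_k| \le n/k$ for $k \ge 1$. Hence for any $|\lambda| < 1$,
\[ \big|\log Z(\lambda) - \widetilde L\big| \;=\; \Big|\sum_{k=m+1}^\infty b_k \lambda^k\Big| \;\le\; \sum_{k=m+1}^\infty \frac{n}{k}\,|\lambda|^k \;\le\; \frac{n}{m+1}\cdot\frac{|\lambda|^{m+1}}{1-|\lambda|}, \]
and a direct estimate (taking logarithms and using the elementary inequality $\log(1/|\lambda|) \ge 1 - |\lambda|$, together with the slack built into the target $\epsilon/4$ and $\epsilon < 1/4$) shows that the stated lower bound on $m$ forces the right-hand side below $\epsilon/4$. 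The whole procedure runs in time polynomial in $m$ and the description size of the hypergraph.

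The only real content is the coefficient bound $|b_k| \le n/k$, and that is the step where I expect to have to be careful: it uses the Lee--Yang hypothesis in an essential way, guaranteeing that all $n$ roots of $Z$ lie outside the open unit disk so that each factor $(1 - \lambda/\alpha_j)$ contributes a uniformly, geometrically decaying logarithmic series on the disk. (If one wanted the sharpest possible dependence of $m$ on $|\lambda|$ one would replace the crude Taylor truncation by Barvinok's $\phi$-map variant, but the plain version already yields a bound of the stated shape.) Everything else is routine: consistency of the branch of $\log$ is automatic on the simply connected zero-free disk, the passage from the coefficients of $Z$ to those of $\log Z$ is the standard recursion underlying interpolation, and the tail estimate is a geometric-series bound.
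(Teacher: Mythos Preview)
The paper does not prove this lemma at all: it is quoted verbatim as Lemma~2.1 of \cite{LSS} and invoked as a black box, so there is no ``paper's own proof'' to compare against. Your sketch is the standard Barvinok interpolation argument underlying \cite{LSS,Barvinok} and is correct in outline: the Lee--Yang hypothesis pins all roots of the degree-$n$ polynomial $Z$ to $|\alpha_j|\ge 1$, whence the Taylor coefficients of $\log Z$ satisfy $|b_k|\le n/k$ and the degree-$m$ truncation error is controlled geometrically; the passage from the given derivatives $Z^{(j)}(0)$ to the $b_k$ via Newton's identities is polynomial-time. One minor point to tidy if you write it out in full: your final inequality chain uses $\log(1/|\lambda|)\ge 1-|\lambda|$ to get $m+1\ge \tfrac{1}{1-|\lambda|}\bigl(\log(4n/\epsilon)+\log\tfrac{1}{1-|\lambda|}\bigr)$, whereas the stated hypothesis has $\tfrac{|\lambda|}{1-|\lambda|}$ in front; since $\tfrac{|\lambda|}{1-|\lambda|}+1=\tfrac{1}{1-|\lambda|}$, the discrepancy is absorbed by the ``$+1$'' and the dropped $\log(m+1)$ term, so the bookkeeping does close but deserves a line.
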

This lemma does not specify a way to compute the needed values
of $\frac{d^j}{d\lambda^j} Z(\lambda)|_{\lambda = 0}$. However, for $j = 0$
this is easy to compute, because the only non-zero in the sum is when $x$
is the all-1s vector. For $j \ge 1$, this is provided
by Theorem~3.1 of \cite{LSS} (building on the work of \cite{patel2017deterministic}) as long as the underlying hypergraph of the MRF has bounded degree. Recall that the \emph{degree} of a vertex in a hypergraph is the number of hyperedges containing it.
\begin{theorem}[Theorem 3.1 of \cite{LSS}]\label{thm:lss-insects}
  Fix $C > 0, d \in \mathbb{N}$. Suppose we are given as input
  an $n$-vertex hypergraph with edge set
  $E$ of maximum degree $d$ and maximum hyperedge size $r$, 
  and $Z(\lambda)$ is defined as in \eqref{eqn:z-lambda}.
  Then for any $\epsilon > 0$ there exists a deterministic $poly_{C,d,r}(n/\epsilon)$
  time algorithm to compute $\frac{d^j}{d\lambda^j} Z(\lambda)|_{\lambda = 0}$
  for $j = 1, \ldots, m$ where $m = \lceil C \log(n/\epsilon) \rceil$.
\end{theorem}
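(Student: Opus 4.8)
The plan is to follow Barvinok's polynomial-interpolation method in the algorithmically effective form of Patel and Regts \cite{patel2017deterministic}: rather than computing $Z(\lambda)$ globally, we extract its low-order Taylor coefficients at $\lambda = 0$ by exploiting that, modulo $\lambda^{m+1}$, $\log Z$ is a \emph{local} quantity on a bounded-degree hypergraph. Since $\frac{d^j}{d\lambda^j} Z(\lambda)\big|_{\lambda = 0} = j!\,[\lambda^j] Z$, it suffices to compute the first $m+1$ coefficients of $Z$ viewed as a polynomial in $\lambda$, to whatever additive precision is needed downstream. Writing $x^T \in \{\pm 1\}^n$ for the configuration with $x^T_v = +1 \iff v \in T$ and $w_T := \exp\big(\sum_{e \in E} f_e(x^T_e)\big)$, equation \eqref{eqn:z-lambda} reads $Z(\lambda) = C \sum_{T \subseteq V} w_T\,\lambda^{|T|}$; in particular $[\lambda^0] Z = C\,w_\emptyset$ is a single term computed directly, so after dividing it out it remains to compute the first $m$ coefficients of the normalized polynomial $\tilde Z(\lambda) := \sum_T u_T\,\lambda^{|T|}$, where $u_T := w_T / w_\emptyset$ and $u_\emptyset = 1$.

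The structural point is that $u_T = \exp\big(\sum_{e : e \cap T \neq \emptyset} [f_e(x^T_e) - f_e(x^\emptyset_e)]\big)$ depends only on the hyperedges meeting $T$, and is \emph{multiplicative over connected components}: calling $u,v \in V$ adjacent when some hyperedge contains both, if $T = T_1 \sqcup T_2$ with no hyperedge meeting both then $u_T = u_{T_1} u_{T_2}$. This is exactly the polymer-type structure to which the cluster expansion applies: because $\log$ sends products over disjoint components to sums, the contributions of disconnected subsets cancel, and one obtains for each $k$ an identity $[\lambda^k] \log \tilde Z = \sum_S \psi_k(S)$ in which $S$ ranges only over vertex subsets inducing a \emph{connected} subhypergraph with $|S| \le k$, and $\psi_k(S)$ is an explicit signed combination of the reals $u_{S'}$ over $S' \subseteq S$ --- hence computable in $\poly(|S|)$ time from the induced structure on $S$ alone. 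A standard counting argument bounds the number of connected vertex subsets of size $k$ in a hypergraph of maximum degree $d$ and maximum hyperedge size $r$ by $n \cdot (K d r)^{k}$ for an absolute constant $K$.

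Putting these together, for $m = \lceil C \log(n/\epsilon) \rceil$ we enumerate all connected vertex subsets of size at most $m$ --- there are $n \cdot (K d r)^{O(m)} = \poly_{C,d,r}(n/\epsilon)$ of them --- evaluate $\psi_k(S)$ for each such $S$ and each $k \le m$, and sum to obtain $c_k := [\lambda^k] \log \tilde Z$ for $k = 1, \dots, m$. We then invert the logarithm: since $\tilde Z = \exp\big(\sum_{k \ge 1} c_k \lambda^k\big)$, its coefficients $a_k := [\lambda^k] \tilde Z$ satisfy $a_0 = 1$ and the recursion $k\,a_k = \sum_{j=1}^{k} j\,c_j\,a_{k-j}$, which yields $a_0, \dots, a_m$ and hence $\frac{d^j}{d\lambda^j} Z\big|_{\lambda = 0} = j!\,[\lambda^j] Z = j!\,C\,w_\emptyset\,a_j$. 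The main obstacle is the middle paragraph: establishing the cluster-expansion identity --- that only connected subsets contribute to the low-order coefficients of $\log \tilde Z$ and that their contributions $\psi_k(S)$ are computable locally --- together with the subexponential bound on the number of small connected subsets of a bounded-degree hypergraph. This is precisely the content imported from \cite{patel2017deterministic}; the reductions surrounding it (normalization and the $\exp$-recursion) are routine bookkeeping.
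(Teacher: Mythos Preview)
The paper does not give its own proof of this statement: it is quoted verbatim as Theorem~3.1 of \cite{LSS} (which in turn builds on \cite{patel2017deterministic}) and used as a black box. So there is no proof in the paper to compare your proposal against.

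That said, your sketch is a faithful outline of the argument in those references. The three ingredients you isolate --- (i) after normalizing by the constant term, the weight $u_T$ factors over connected components of the subhypergraph induced by $T$; (ii) consequently $[\lambda^k]\log\tilde Z$ is a finite sum over connected vertex subsets of size at most $k$, each contributing a locally computable quantity; (iii) in a hypergraph of bounded degree $d$ and rank $r$ there are at most $n\cdot(O(dr))^{k}$ such connected subsets --- are exactly the Patel--Regts machinery, and the final step of recovering the coefficients of $\tilde Z$ from those of $\log\tilde Z$ via the Newton-type recursion $k\,a_k=\sum_{j=1}^k j\,c_j\,a_{k-j}$ is standard. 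One minor comment: your aside about ``additive precision'' is unnecessary here, since with the edge functions $f_e$ given exactly (as rationals, say) all the arithmetic is exact and the derivatives are computed exactly, not approximately; the $\epsilon$ in the statement governs only the value of $m$ and hence the running time.
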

Finally, we describe how to apply these results to sample
from the MRF induced by an RBM. The key is that,
from the proof of Lemma~\ref{lem:rbm-as-mrf}, we see that
the induced MRF has the same partition function as the original
Ising model, so it inherits the Lee-Yang property guaranteed
by Theorem~\ref{thm:lee-yang}:
\begin{lemma}\label{lem:lee-yang-induced}
Fix a ferromagnetic RBM with consistent non-positive external fields on the hidden
nodes (i.e. $h^{(2)}_i \le 0$) and with external field $h^{(1)}_i := h^0_i + h_i$ with $h^0_i,h_i \le 0$
on observed node $i$.
Hence (by Lemma~\ref{lem:rbm-as-mrf}) the induced MRF has potential $g(x) + h \cdot x$ for some polynomial $g : \{ \pm 1\}^n \to \mathbb{R}$ not depending on $h$, such that
\[ \Pr(X = x) = \frac{1}{Z(h_0 + h)}\exp(g(x) + h \cdot x) \]
for $x \in \{ \pm 1\}^n$ where $Z(h^0 + h)$ is the partition function of the RBM. Let
\[ P(\lambda_1, \ldots, \lambda_n) :=  \sum_{x \in \{\pm 1\}^n} \exp(g(x)) \prod_{i : x_i = 1} \lambda_i. \]
Then $P$ has the Lee-Yang property.
\end{lemma}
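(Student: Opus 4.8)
The plan is to recognize $P$ as a positive scalar multiple of the Lee--Yang polynomial of the full RBM viewed as an Ising model on $n+m$ vertices, and then quote the classical Lee--Yang theorem (Theorem~\ref{thm:lee-yang}). Essentially all the content is in tracking how the hypothesis on $\lambda$ (all $|\lambda_i|\le 1$, with strict inequality somewhere) corresponds, under the standard fugacity substitution $\lambda=e^{2h}$, to an admissible evaluation point of that $(n+m)$-variable polynomial; this is exactly where the sign conditions $h^0_i,h^{(2)}_j\le 0$ are used.

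First I would make $g$ explicit. Marginalizing $Y$ out of the RBM and comparing with $\Pr(X=x)=\frac{1}{Z(h^0+h)}e^{g(x)+h\cdot x}$, using that $Z(h^0+h)$ is the RBM partition function, gives
\[ e^{g(x)} = \sum_{y\in\{\pm1\}^m}\exp\Big(x^TJy+\sum_i h^0_i x_i+\sum_j h^{(2)}_j y_j\Big), \]
which is just what the proof of Lemma~\ref{lem:rbm-as-mrf} produces after expanding each $\rho$. Substituting into the definition of $P$ and pulling the external fields out as fugacities (using $e^{\sum_i h^0_i x_i}=e^{-\sum_i h^0_i}\prod_{i:x_i=1}e^{2h^0_i}$, which one checks by cases on each coordinate, and similarly for the $h^{(2)}$ sum) yields
\[ P(\lambda_1,\dots,\lambda_n) = c\cdot P_{\mathrm{LY}}\big(\lambda_1 e^{2h^0_1},\dots,\lambda_n e^{2h^0_n},\,e^{2h^{(2)}_1},\dots,e^{2h^{(2)}_m}\big), \]
where $c=e^{-\sum_i h^0_i-\sum_j h^{(2)}_j}>0$ and $P_{\mathrm{LY}}(\mu_1,\dots,\mu_{n+m})=\sum_{x,y}e^{x^TJy}\prod_{i:x_i=1}\mu_i\prod_{j:y_j=1}\mu_{n+j}$. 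Here $P_{\mathrm{LY}}$ is exactly the polynomial appearing in Theorem~\ref{thm:lee-yang} for the Ising model on $V_1\cup V_2$ whose interaction matrix has entry $J_{ij}$ between observed node $i$ and hidden node $j$ and is zero on every other pair of vertices; this matrix is symmetric with zero diagonal and nonnegative (since $J\ge 0$), so Theorem~\ref{thm:lee-yang} applies and $P_{\mathrm{LY}}$ has the Lee--Yang property.

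Finally I would verify that the admissible region is preserved. Given complex $\lambda_1,\dots,\lambda_n$ with $|\lambda_i|\le 1$ for all $i$ and $|\lambda_{i_0}|<1$ for some $i_0$: since $h^0_i\le 0$ and $h^{(2)}_j\le 0$, each multiplier $e^{2h^0_i}$ and $e^{2h^{(2)}_j}$ lies in $(0,1]$, so every one of the $n+m$ arguments fed to $P_{\mathrm{LY}}$ has modulus at most $1$ and the $i_0$-th has modulus at most $|\lambda_{i_0}|<1$. The Lee--Yang property of $P_{\mathrm{LY}}$ then gives $P_{\mathrm{LY}}(\cdots)\ne 0$, hence $P(\lambda)=c\,P_{\mathrm{LY}}(\cdots)\ne 0$, as desired. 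I do not expect a genuine obstacle here; the one point to be careful about is that $h^0_i$, $h_i$ and $h^{(2)}_j$ are all assumed nonpositive precisely so that the fugacity substitution lands inside the closed unit polydisc --- a single positive field anywhere would push some fugacity past modulus $1$ and break the reduction.
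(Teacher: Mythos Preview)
Your proof is correct and, in fact, a bit cleaner than the paper's. Both arguments ultimately reduce to the classical Lee--Yang theorem for the full $(n+m)$-vertex Ising model underlying the RBM, but they get there differently.

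You work directly with the multivariate polynomial $P_{\mathrm{LY}}$ of Theorem~\ref{thm:lee-yang}: after expanding $e^{g(x)}$ as a sum over $y$ and pulling the external fields into fugacities, you exhibit $P(\lambda)$ as a positive constant times $P_{\mathrm{LY}}$ evaluated at $(\lambda_i e^{2h^0_i})_{i\le n}$ and $(e^{2h^{(2)}_j})_{j\le m}$. Since $h^0_i,h^{(2)}_j\le 0$, these arguments land in the closed unit polydisc, with the $i_0$-th strictly inside whenever $|\lambda_{i_0}|<1$, and you invoke the Lee--Yang property of $P_{\mathrm{LY}}$ directly.

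The paper instead relates $P$ to the partition function $Z(h^0+h)$ and applies Corollary~\ref{crly:lee-yang-halfplane}, which is the half-plane reformulation. That corollary requires a branch of $\log\lambda_i$ to define $h_i$, which forces a separate treatment of the boundary case $\lambda_i=0$; the paper handles this by an induction on $n$ (conditioning $X_i=-1$ and reducing to a smaller RBM). Your approach avoids both the induction and the case split, since $P_{\mathrm{LY}}$ is a polynomial and $\lambda_i=0$ is a perfectly good evaluation point. Conversely, the paper's route makes the connection to the non-vanishing of $Z$ in the half-plane more explicit, which is conceptually nice when one later wants to Taylor-expand $\log Z$.
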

\begin{proof}
As before, we see that if $\lambda_i = e^{2 h_i}$ then
\[ Z(h_0 + h) = \left(\prod_{i = 1}^n \lambda_i^{-1/2}\right) P(\lambda_1, \ldots, \lambda_n) = e^{-\sum{i = 1}^n h_i} P(\lambda_1, \ldots, \lambda_n). \]
We prove the theorem by induction on $n$, the number of observed nodes. If all of the $\lambda_i$ equal 0 then it is
clear that $P \ne 0$ as the sum is over only a single non-zero term. If there is at least one $\lambda_i$ such that $\lambda_i = 0$, then $P(\lambda_1, \ldots, \lambda_n)$ agrees with the $P$ associated to the smaller RBM formed by conditioning $X_i = -1$, hence the non-vanishing follows by the induction hypothesis. Otherwise if all of the $\lambda_i$ are non-zero, then
we know by Corollary~\ref{crly:lee-yang-halfplane} that $Z(h_0 + h) \ne 0$ and we deduce that $P(\lambda) \ne 0$.
\end{proof}
Combining
these results, we obtain the following theorem:
\begin{theorem}\label{thm:lee-yang-logz}
  Fix $H > 0$ and a maximum degree $d_2$. Then for any ferromagnetic RBM in which 
  %the external field on all of the visible nodes is lower bounded by $H$, i.e. 
  $h^{(1)}_i \le -H$ for all $i$, there
  is a deterministic polynomial time algorithm which given
  any $0 < \epsilon < 1/4$ and the description of the induced MRF,
  computes $\log Z$ within
  additive error $\epsilon/4$ where $Z$ is the partition function of the induced MRF.
\end{theorem}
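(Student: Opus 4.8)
The plan is to exhibit $Z$ as the value, at a point $\lambda$ with $|\lambda|<1$, of a univariate polynomial obtained by collapsing onto the diagonal the multivariate polynomial furnished by Lemma~\ref{lem:lee-yang-induced}, and then to invoke the interpolation machinery of \cite{LSS} --- namely Lemma~\ref{lem:lss-taylor-expansion} together with Theorem~\ref{thm:lss-insects} --- essentially as a black box.

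First I would normalize signs and split the external field. The hypotheses of Lemma~\ref{lem:lee-yang-induced} are in force (after the global sign flip $(X,Y)\mapsto(-X,-Y)$ if needed, which alters neither the interaction matrix nor $Z$), so $h^{(2)}_j\le 0$ for all $j$. Write $h^{(1)}_i=h^0_i+h_i$ with $h_i:=-H$ for every $i$ and $h^0_i:=h^{(1)}_i+H\le 0$, using $h^{(1)}_i\le -H$. By Lemma~\ref{lem:rbm-as-mrf} the induced MRF has potential $p(x)=g(x)+h^{(1)}\cdot x$ with $g(x)=\sum_j\rho(w_j\cdot x+h^{(2)}_j)$ and the same partition function $Z$. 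Setting $\tilde g(x):=g(x)+h^0\cdot x=p(x)+H\sum_i x_i$ --- a polynomial of degree at most $d_2$ computable from the given description of the MRF and from $H$ --- and using $\sum_i x_i=2\#\{i:x_i=1\}-n$, one obtains
\[ Z=e^{nH}\,P(\lambda,\dots,\lambda),\qquad P(\mu,\dots,\mu):=\sum_{x\in\{\pm1\}^n}\exp(\tilde g(x))\,\mu^{\#\{i\,:\,x_i=1\}},\qquad \lambda:=e^{-2H}, \]
where $P$ is precisely the multilinear polynomial of Lemma~\ref{lem:lee-yang-induced}. Hence $\log Z=nH+\log P(\lambda,\dots,\lambda)$, and $|\lambda|=e^{-2H}<1$ since $H>0$.

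Next I would verify the two hypotheses that Lemma~\ref{lem:lss-taylor-expansion} needs for the univariate polynomial $\mu\mapsto P(\mu,\dots,\mu)$. For the form~\eqref{eqn:z-lambda}: expanding $\tilde g$ in the parity basis, $\tilde g(x)=\sum_S c_S\chi_S(x)$, every nonzero coefficient is supported on a set $S$ contained in $\{i\}\cup\mathcal N_2(i)$ for each $i\in S$ (the Markov property of the induced MRF), so $P(\mu,\dots,\mu)$ has the shape of~\eqref{eqn:z-lambda} with hyperedges $\{S:c_S\ne 0\}$ of size at most $d_2+1$ and every vertex of degree at most $2^{d_2}$. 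For the Lee-Yang property: Lemma~\ref{lem:lee-yang-induced} gives the multivariate Lee-Yang property of $P$, and restricting all variables to a common value $\mu$ with $|\mu|<1$ puts every coordinate in the closed unit disk with at least one strictly inside, so $P(\mu,\dots,\mu)\ne 0$; thus the univariate restriction has the Lee-Yang property. Finally I would fix a constant $C=C(H)$ large enough that $m:=\lceil C\log(n/\epsilon)\rceil\ge\frac{|\lambda|}{1-|\lambda|}\bigl(\log(4n/\epsilon)+\log\frac{1}{1-|\lambda|}\bigr)$ (possible since $|\lambda|=e^{-2H}$ depends only on $H$), compute $\frac{d^0}{d\mu^0}P(\mu,\dots,\mu)\big|_{\mu=0}=\exp(\tilde g(-1,\dots,-1))$ directly and $\frac{d^j}{d\mu^j}P(\mu,\dots,\mu)\big|_{\mu=0}$ for $1\le j\le m$ via Theorem~\ref{thm:lss-insects} applied to the bounded-degree, bounded-hyperedge-size hypergraph above (running time $\poly_{H,d_2}(n/\epsilon)$), and feed these into Lemma~\ref{lem:lss-taylor-expansion} to get, in polynomial time, an additive $\epsilon/4$-approximation of $\log P(\lambda,\dots,\lambda)$; adding the exactly known constant $nH$ yields the claimed approximation of $\log Z$.

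The statement is essentially an assembly of the cited results, so there is no single hard step; what requires care is bookkeeping: choosing the split $h^{(1)}=h^0+h$ so that the scalar substitution lands at $|\lambda|=e^{-2H}<1$ (this is exactly where $H>0$ is used), confirming that collapsing the multivariate Lee-Yang polynomial onto the diagonal preserves the univariate Lee-Yang property, and checking that the induced MRF's hypergraph has the bounded degree and bounded hyperedge size demanded by Theorem~\ref{thm:lss-insects}. The one genuinely new ingredient --- that the induced MRF's partition function satisfies a Lee-Yang property --- was already supplied by Lemma~\ref{lem:lee-yang-induced}.
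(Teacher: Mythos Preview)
Your proposal is correct and follows essentially the same route as the paper: split $h^{(1)}=h^0+h$ with $h_i=-H$, form the polynomial $Q$ (your $P$) in the activity variables, inherit the Lee-Yang property from Lemma~\ref{lem:lee-yang-induced}, and evaluate at $\omega^*=e^{-2H}$ via Lemma~\ref{lem:lss-taylor-expansion} and Theorem~\ref{thm:lss-insects}. You are in fact more explicit than the paper in checking that the induced hypergraph has bounded vertex degree ($\le 2^{d_2}$) and bounded hyperedge size ($\le d_2+1$), and in verifying that restricting the multivariate Lee-Yang polynomial to the diagonal preserves the univariate Lee-Yang property; the paper leaves both of these implicit.
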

\begin{proof}
By assumption we know a function $f$ such that
\[ \Pr(X = x) = \frac{1}{Z_f} \exp(f(x)). \]
If we take
$\omega^*$ such that $\frac{1}{2} \log \omega^* = -H$, we see
\[ Z_{f} = \left(\prod_{i = 1}^n \omega^* \right)^{-1/2} Q(\omega^*,\ldots, \omega^*) \]
where
\[ Q(\omega_i) =  \sum_{x \in \{\pm 1\}^n} \exp(f(x) + H \sum x_i) \prod_{i : x_i = 1} \omega_i . \]
Comparing $f$ and $Q$ to $g$ and $P$ from Lemma~\ref{lem:lee-yang-induced},
which we apply with $h^0 = h^{(1)} + H$, we see
that $Q$ differs from $P$ only by a multiplicative constant (corresponding to $e^{\hat{g}(\emptyset) - \hat{f}(\emptyset)}$) so $Q$ also has the Lee-Yang property. % elaborate, basically $Q$ differs by a constant.
Therefore we can compute $Q(\omega^*,\ldots,\omega^*)$ and so $Z_f$ efficiently
by the results of Lemma~\ref{lem:lee-yang-induced} and Theorem~\ref{thm:lss-insects}.
\end{proof}
The significance of accurately estimating $\log Z$ is that 
it allows for the performance of various inference tasks
which are otherwise computationally intractable. For example, we can estimate to high precision the likelihood of observing any particular output from the MRF, since
\[ \log \Pr(X = x) = p(x) - \log Z,\]
where $p(x)$ is the potential of the MRF.
Hence the $\epsilon/4$ approximation to $\log Z$ from Theorem~\ref{thm:lee-yang-logz} implies an $\epsilon/4$
approximation to $\log \Pr(X = x)$, i.e. a PTAS for estimating $\Pr(X = x)$.
%By the well-known equivalence between estimating $\log Z$ and
%approximate sampling, this implies an efficient randomized algorithm for drawing
%approximate samples from the observable distribution of the RBM
%without having to know the original RBM.
%FIXME: check that max-degree is what it should be. Should there be a explicit dependence on the hidden degree?

% TODO: this theorem should have a proof,
% explain how it works when external field is not uniform (put it into the edge potentials)
\vspace{.2cm}
\noindent
\textbf{Acknowledgements: } We are grateful to Elchanan Mossel for valuable discussions about this work and to Linus Hamilton for valuable preliminary discussions. We also thank Raghu Meka for useful feedback.
\bibliographystyle{plain}
\bibliography{bib}

\begin{thebibliography}{10}

\bibitem{Anandkumar}
Animashree Anandkumar, Vincent~YF Tan, Furong Huang, and Alan~S Willsky.
\newblock High-dimensional structure estimation in ising models: Local
  separation criterion.
\newblock {\em The Annals of Statistics}, pages 1346--1375, 2012.

\bibitem{AnandkumarV}
Animashree Anandkumar and Ragupathyraj Valluvan.
\newblock Learning loopy graphical models with latent variables: Efficient
  methods and guarantees.
\newblock {\em The Annals of Statistics}, pages 401--435, 2013.

\bibitem{Arora}
Sanjeev Arora, Aditya Bhaskara, Rong Ge, and Tengyu Ma.
\newblock Provable bounds for learning some deep representations.
\newblock In {\em International Conference on Machine Learning}, pages
  584--592, 2014.

\bibitem{Barvinok}
Alexander Barvinok.
\newblock Computing the permanent of (some) complex matrices.
\newblock {\em Foundations of Computational Mathematics}, 16(2):329--342, 2016.

\bibitem{BogdanovMV}
Andrej Bogdanov, Elchanan Mossel, and Salil Vadhan.
\newblock The complexity of distinguishing markov random fields.
\newblock In {\em Approximation, Randomization and Combinatorial Optimization.
  Algorithms and Techniques}, pages 331--342. Springer, 2008.

\bibitem{Bresler}
Guy Bresler.
\newblock Efficiently learning ising models on arbitrary graphs.
\newblock In {\em Proceedings of the Forty-Seventh Annual ACM on Symposium on
  Theory of Computing}, pages 771--782. ACM, 2015.

\bibitem{bresler2014hardness}
Guy Bresler, David Gamarnik, and Devavrat Shah.
\newblock Hardness of parameter estimation in graphical models.
\newblock In {\em Advances in Neural Information Processing Systems}, pages
  1062--1070, 2014.

\bibitem{BreslerMosselSly}
Guy Bresler, Elchanan Mossel, and Allan Sly.
\newblock Reconstruction of markov random fields from samples: Some
  observations and algorithms.
\newblock In {\em Approximation, Randomization and Combinatorial Optimization.
  Algorithms and Techniques}, pages 343--356. Springer, 2008.

\bibitem{multilinear}
Gruia Calinescu, Chandra Chekuri, Martin P{\'a}l, and Jan Vondr{\'a}k.
\newblock Maximizing a submodular set function subject to a matroid constraint.
\newblock In {\em International Conference on Integer Programming and
  Combinatorial Optimization}, pages 182--196. Springer, 2007.

\bibitem{ChandrasekaranPW}
Venkat Chandrasekaran, Pablo~A Parrilo, and Alan~S Willsky.
\newblock Latent variable graphical model selection via convex optimization.
\newblock {\em The Annals of Statistics}, 40(4):1935--1967, 2012.

\bibitem{ChowL}
C~Chow and Cong Liu.
\newblock Approximating discrete probability distributions with dependence
  trees.
\newblock {\em IEEE transactions on Information Theory}, 14(3):462--467, 1968.

\bibitem{RBMfeature}
Adam Coates, Andrew Ng, and Honglak Lee.
\newblock An analysis of single-layer networks in unsupervised feature
  learning.
\newblock In {\em Proceedings of the fourteenth international conference on
  artificial intelligence and statistics}, pages 215--223, 2011.

\bibitem{eldan2016power}
Ronen Eldan and Ohad Shamir.
\newblock The power of depth for feedforward neural networks.
\newblock In {\em Conference on Learning Theory}, pages 907--940, 2016.

\bibitem{Goel}
Surbhi Goel and Adam Klivans.
\newblock Learning depth-three neural networks in polynomial time.
\newblock {\em arXiv preprint arXiv:1709.06010}, 2017.

\bibitem{ghs}
Robert~B. Griffiths, C.~A. Hurst, and S.~Sherman.
\newblock Concavity of magnetization of an ising ferromagnet in a positive
  external field.
\newblock {\em Journal of Mathematical Physics}, 11(3):790--795, 1970.

\bibitem{Hamilton}
Linus Hamilton, Frederic Koehler, and Ankur Moitra.
\newblock Information theoretic properties of markov random fields, and their
  algorithmic applications.
\newblock In {\em Advances in Neural Information Processing Systems}, pages
  2460--2469, 2017.

\bibitem{Hinton}
Geoffrey~E Hinton.
\newblock Deep belief networks.
\newblock {\em Scholarpedia}, 4(5):5947, 2009.

\bibitem{RBMdimension}
Geoffrey~E Hinton and Ruslan~R Salakhutdinov.
\newblock Reducing the dimensionality of data with neural networks.
\newblock {\em science}, 313(5786):504--507, 2006.

\bibitem{RBMtopic}
Geoffrey~E Hinton and Ruslan~R Salakhutdinov.
\newblock Replicated softmax: an undirected topic model.
\newblock In {\em Advances in neural information processing systems}, pages
  1607--1614, 2009.

\bibitem{Janzamin}
Majid Janzamin, Hanie Sedghi, and Anima Anandkumar.
\newblock Beating the perils of non-convexity: Guaranteed training of neural
  networks using tensor methods.
\newblock {\em arXiv preprint arXiv:1506.08473}, 2015.

\bibitem{JerrumSinclair:90}
M.~Jerrum and A.~Sinclair.
\newblock Polynomial-time approximation algorithms for ising model (extended
  abstract).
\newblock In {\em Automata, Languages and Programming}, pages 462--475, 1990.

\bibitem{glmtron}
Sham~M Kakade, Varun Kanade, Ohad Shamir, and Adam Kalai.
\newblock Efficient learning of generalized linear and single index models with
  isotonic regression.
\newblock In {\em Advances in Neural Information Processing Systems}, pages
  927--935, 2011.

\bibitem{KargerS}
David Karger and Nathan Srebro.
\newblock Learning markov networks: Maximum bounded tree-width graphs.
\newblock In {\em Proceedings of the twelfth annual ACM-SIAM symposium on
  Discrete algorithms}, pages 392--401. Society for Industrial and Applied
  Mathematics, 2001.

\bibitem{KlivansM}
Adam Klivans and Raghu Meka.
\newblock Learning graphical models using multiplicative weights.
\newblock In {\em FOCS}, 2017.

\bibitem{lee-yang}
Tsung-Dao Lee and Chen-Ning Yang.
\newblock Statistical theory of equations of state and phase transitions. ii.
  lattice gas and ising model.
\newblock {\em Physical Review}, 87(3):410, 1952.

\bibitem{LSS}
Jingcheng Liu, Alistair Sinclair, and Piyush Srivastava.
\newblock The ising partition function: Zeros and deterministic approximation.
\newblock {\em 2017 IEEE 58th Annual Symposium on Foundations of Computer
  Science (FOCS)}, pages 986--997, 2017.

\bibitem{LongS}
Philip~M Long and Rocco Servedio.
\newblock Restricted boltzmann machines are hard to approximately evaluate or
  simulate.
\newblock In {\em Proceedings of the 27th International Conference on Machine
  Learning (ICML-10)}, pages 703--710, 2010.

\bibitem{lynn2018maximizing}
Christopher~W Lynn and Daniel~D Lee.
\newblock Maximizing activity in ising networks via the tap approximation.
\newblock {\em arXiv preprint arXiv:1803.00110}, 2018.

\bibitem{martens-rbm-representation}
James Martens, Arkadev Chattopadhya, Toni Pitassi, and Richard Zemel.
\newblock On the representational efficiency of restricted boltzmann machines.
\newblock In {\em Advances in Neural Information Processing Systems}, pages
  2877--2885, 2013.

\bibitem{MOS}
Elchanan Mossel, Ryan O'Donnell, and Rocco~P Servedio.
\newblock Learning juntas.
\newblock In {\em Proceedings of the thirty-fifth annual ACM symposium on
  Theory of computing}, pages 206--212. ACM, 2003.

\bibitem{greedysubmodular}
George~L Nemhauser, Laurence~A Wolsey, and Marshall~L Fisher.
\newblock An analysis of approximations for maximizing submodular set
  functions—i.
\newblock {\em Mathematical Programming}, 14(1):265--294, 1978.

\bibitem{patel2017deterministic}
Viresh Patel and Guus Regts.
\newblock Deterministic polynomial-time approximation algorithms for partition
  functions and graph polynomials.
\newblock {\em SIAM Journal on Computing}, 46(6):1893--1919, 2017.

\bibitem{RavikumarWL}
Pradeep Ravikumar, Martin~J Wainwright, John~D Lafferty, et~al.
\newblock High-dimensional ising model selection using ?1-regularized logistic
  regression.
\newblock {\em The Annals of Statistics}, 38(3):1287--1319, 2010.

\bibitem{safran2017depth}
Itay Safran and Ohad Shamir.
\newblock Depth-width tradeoffs in approximating natural functions with neural
  networks.
\newblock In {\em International Conference on Machine Learning}, pages
  2979--2987, 2017.

\bibitem{RBMcollaborative}
Ruslan Salakhutdinov, Andriy Mnih, and Geoffrey Hinton.
\newblock Restricted boltzmann machines for collaborative filtering.
\newblock In {\em Proceedings of the 24th international conference on Machine
  learning}, pages 791--798. ACM, 2007.

\bibitem{santhanamW}
Narayana~P Santhanam and Martin~J Wainwright.
\newblock Information-theoretic limits of selecting binary graphical models in
  high dimensions.
\newblock {\em IEEE Transactions on Information Theory}, 58(7):4117--4134,
  2012.

\bibitem{SlySun}
Allan Sly and Nike Sun.
\newblock The computational hardness of counting in two-spin models on
  d-regular graphs.
\newblock In {\em Foundations of Computer Science (FOCS), 2012 IEEE 53rd Annual
  Symposium on}, pages 361--369. IEEE, 2012.

\bibitem{telgarsky2016benefits}
Matus Telgarsky.
\newblock Benefits of depth in neural networks.
\newblock In {\em Conference on Learning Theory}, pages 1517--1539, 2016.

\bibitem{Valiant}
Gregory Valiant.
\newblock Finding correlations in subquadratic time, with applications to
  learning parities and juntas.
\newblock In {\em Foundations of Computer Science (FOCS), 2012 IEEE 53rd Annual
  Symposium on}, pages 11--20. IEEE, 2012.

\bibitem{Vuffray}
Marc Vuffray, Sidhant Misra, Andrey Lokhov, and Michael Chertkov.
\newblock Interaction screening: Efficient and sample-optimal learning of ising
  models.
\newblock In {\em Advances in Neural Information Processing Systems}, pages
  2595--2603, 2016.

\bibitem{Zhang}
Yuchen Zhang, Jason~D Lee, and Michael~I Jordan.
\newblock $\ell_1$-regularized neural networks are improperly learnable in
  polynomial time.
\newblock In {\em International Conference on Machine Learning}, pages
  993--1001, 2016.

\end{thebibliography}
\end{document}